\documentclass{article}

\usepackage{microtype}
\usepackage{graphicx}
\usepackage{subcaption}
\usepackage{booktabs} 

\usepackage{hyperref}

\usepackage[accepted]{icml2026}

\usepackage{amsmath}
\usepackage{amssymb}
\usepackage{mathtools}
\usepackage{amsthm}

\usepackage[capitalize,noabbrev]{cleveref}

\theoremstyle{plain}
\newtheorem{theorem}{Theorem}[section]
\newtheorem{proposition}[theorem]{Proposition}
\newtheorem{lemma}[theorem]{Lemma}
\newtheorem{corollary}[theorem]{Corollary}
\theoremstyle{definition}
\newtheorem{definition}[theorem]{Definition}

\theoremstyle{remark}

\usepackage[textsize=tiny]{todonotes}

\usepackage{multirow}

\DeclareMathOperator*{\argmin}{arg\,min}

\newcommand{\olsi}[1]{\,\overline{\!{#1}}}

\icmltitlerunning{Batch Normalization for Neural Networks on Complex Domains}

\begin{document}

\twocolumn[
  \icmltitle{Batch Normalization for Neural Networks on Complex Domains}



  \icmlsetsymbol{equal}{}

  \begin{icmlauthorlist}
    \icmlauthor{Xuan Son Nguyen}{equal,yyy}
    \icmlauthor{Nistor Grazavu}{equal,yyy}
  \end{icmlauthorlist}

  \icmlaffiliation{yyy}{ETIS, UMR 8051, CY Cergy Paris University, ENSEA, CNRS, Cergy, France}

  \icmlcorrespondingauthor{Xuan Son Nguyen}{xuan-son.nguyen@ensea.fr}

  \icmlkeywords{Riemannian neural networks, Siegel spaces, Batch normalization, Kobayashi pseudodistance}

  \vskip 0.3in
]



\printAffiliationsAndNotice{}  

\begin{abstract}
Riemannian neural networks have proven effective in solving a variety of machine learning tasks. The key to their success lies in the development of principled Riemannian analogs of fundamental building blocks in deep neural networks (DNNs). Among those, Riemannian batch normalization (BN) layers have shown to enhance training stability and improve accuracy. In this paper, we propose BN layers for neural networks on complex domains. The proposed layers have close connections with existing Riemannian BN layers. We derive essential components for practical implementations of BN layers on some complex domains which are less studied in previous works, e.g., the Siegel disk domain. We conduct experiments on radar clutter classification, node classification, and action recognition demonstrating the efficacy of our method.
\end{abstract}

\section{Introduction}
\label{sec:intro}

Neural networks on Riemannian manifolds\footnote{We restrict our attention to manifold-valued neural networks.} have shown great success in a wide range of fields such as hierarchical and graph-structured data~\cite{NEURIPS2018_dbab2adc,chami2019hyperbolic}, action and image classification~\cite{HuangAAAI18,NguyenICLR25}, brain-computer interface~\cite{PanMAttEEG22}. While many frameworks for building such networks have been developed, most of them focus on well-known Riemannian manifolds, e.g., hyperbolic spaces~\cite{NEURIPS2018_dbab2adc,chami2019hyperbolic}, Symmetric Positive Definite (SPD) spaces~\cite{HuangGool17,BrooksRieBatNorm19,NguyenNeurIPS22,NguyenGyroMatMans23,NguyenICLR24,NguyenICLR25}, and Grassmann spaces~\cite{HuangAAAI18,NguyenGyroMatMans23}. Although some frameworks were originally proposed in a wider context, e.g., matrix manifolds~\cite{NguyenNeurIPS22,NguyenGyroMatMans23,ZChengICLR25} and Lie groups~\cite{chen2024lie}, the assumptions based on which they are built limit their applicability. 

Complex domains (open subsets of complex vector spaces)~\cite{HolMapsInvDist} have been encountered in a number of problems in signal processing~\cite{BarbarescoInfoGeomCov2013,CabanesThesis}. Unfortunately, they have not received enough attention in the context of deep learning despite having many nice theoretical properties. For instance,  Siegel spaces, which are special symmetric spaces of noncompact type~\cite{helgason1979differential}, generalize 
hyperbolic and SPD spaces. 
Recently, a few attempts~\cite{FedericoSymspaces21,NguyenNeurIPS25,NguyenICLR25} have been made to explore the representation power of Siegel spaces in the context of deep learning. The work in~\yrcite{FedericoSymspaces21} was among the first to advocate the use of Siegel spaces for graph embeddings. However, it did not tackle the question of how to build fundamental building blocks for discriminative Siegel neural networks. Motivated by this, some building blocks were developed in~\yrcite{NguyenNeurIPS25,NguyenICLR25}. Yet, none of these works extends BN layers to the geometry of Siegel spaces. Another example of complex domains is the unit ball in a complex Hilbert space (hereafter referred to as the complex unit ball). 
While the (real) Poincar\'e ball model of hyperbolic geometry has extensively been used and has demonstrated excellent performance in a variety of machine learning tasks~\cite{NEURIPS2018_dbab2adc,chami2019hyperbolic,BdeirFullyHnnCv24}, the complex unit ball model 
is less investigated for these tasks. 

Invariant metrics have shown to be important tools in complex analysis~\cite{BarbarescoInfoGeomCov2013}. One of the most significant advances in the developments of such metrics was proposed in~\yrcite{Kobayashi1967InvariantDO} which generalizes the concept of the Poincar\'e distance~\cite{helgason1979differential} to arbitrary complex domains. The resulting pseudodistance is referred to as the Kobayashi pseudodistance. An important property of the Kobayashi pseudodistance is that it is invariant for biholomorphic mappings~\cite{Kobayashi1967InvariantDO,HolMapsInvDist}.  This is a desirable property when it comes to building neural networks under the geometric deep learning framework in which the geometric stability principle plays a crucial role~\cite{BronsteinGDL}. Although this property makes the Kobayashi pseudodistance an attractive tool for the construction of neural networks in complex domains, it is less familiar in the context of deep learning compared to other distances~\cite{FedericoSymspaces21,NguyenNeurIPS25,NguyenICLR25}.  

In this paper, we are interested in building neural networks on complex domains. 
Our contributions are as follows:
\begin{itemize}
\item We propose a general BN layer for neural networks on complex domains. While such spaces are the focus of our work, this layer is also applicable to Riemannian manifolds widely used in machine learning tasks, and is closely related to existing Riemannian BN layers. 
\item We show how to apply the proposed BN layer to domains which are less studied in previous works despite their potential 
in capturing rich geometrical structures, i.e., the Siegel disk domain and the complex unit ball. 
\item We provide experimental evaluations showing that the proposed method yields systematic improvements with respect to existing methods. 
\end{itemize}

\section{Mathematical Background}
\label{sec:mathematical_background}

In Section~\ref{sec:complex_manifolds}, we recap some concepts in complex spaces. 
We then present two complex domains in Sections~\ref{sec:siegel_spaces} and~\ref{sec:unit_ball_hilbert_space}. 
More discussions are provided in Appendix~\ref{sec:appx_definitions_basic_facts}. For greater mathematical detail and in-depth discussion, please refer to~\yrcite{SiegelSymGeometry,helgason1979differential,HolMapsInvDist,BarbarescoInfoGeomCov2013,JeurisBTMean2016}. 

\subsection{Complex Domains and Invariant Distances}
\label{sec:complex_manifolds}

\begin{definition}[{\bf Complex domains~\cite{HolMapsInvDist}}]\label{def:bounded_domain}
Let $\mathbb{C}$ be the set of all complex numbers. 
Let $\mathbb{C}_n$ be the $n$-dimensional complex vector space along with its usual analytic structure. An open subset $\mathbb{D}$ of $\mathbb{C}_n$ is called a domain. 
\end{definition}

In the following, we use the terms ``domains" and ``complex domains" interchangeably. Similar to the case of real functions, one has the notion of differentiability for complex functions which is important in complex analysis. 

\begin{definition}[{\bf Holomorphic functions~\cite{HolMapsInvDist}}]\label{def:holomorphic_function}
Let $\mathbb{E}$ and $\mathbb{F}$ be complex normed spaces. Let $\mathbb{U}$ be a domain in $\mathbb{E}$. A mapping $f: \mathbb{U} \rightarrow \mathbb{F}$ is called a holomorphic function on $\mathbb{U}$ with values in $\mathbb{F}$, or an $\mathbb{F}$-valued holomorphic function if, for every $u \in \mathbb{U}$, there exists an open neighborhood $\mathbb{V}$ of $u$ in $\mathbb{U}$ 
such that $f(\cdot)$ can be expressed by
\begin{equation*}
f(x) = \sum_{k=0}^{\infty} a_k(x-u)^k,
\end{equation*}
where $a_k \in \mathbb{C},k=0,\ldots,\infty$, and $x \in V \cap \mathbb{B}(u,r)$ (the open ball with center $u$ and radius $r > 0$). 
If a function $f: \mathbb{U} \rightarrow \mathbb{U}$ is holomorphic, one-to-one, and onto, and has a holomorphic inverse, then $f$ is referred to as an automorphism of $\mathbb{U}$. 
\end{definition}

On a complex domain, one can define a pseudodistance referred to as the Kobayashi pseudodistance~\cite{Kobayashi1967InvariantDO,HolMapsInvDist} which is invariant under automorphisms\footnote{We note that the Kobayashi pseudodistance can be defined for other settings, e.g., arbitrary complex manifolds. In this paper, we follow closely the work in~\yrcite{HolMapsInvDist} which considers the setting of complex domains.}. 
It is defined using holomorphic maps from the Poincar\'e disc $\mathbb{B}$ 
(see Appendix~\ref{appx:invariant_distances}) into the target domain. 
We first review the Kobayashi differential metric.

\begin{definition}[{\bf Kobayashi differential metric~\cite{HolMapsInvDist}}]\label{def:kobayashi_differential_metric}
Let $\mathbb{E}$ be a complex normed space, and let $\operatorname{Hol}(\mathbb{D}_1,\mathbb{D}_2)$ be the set of all holomorphic functions $f:\mathbb{D}_1 \rightarrow \mathbb{D}_2$, where $\mathbb{D}_1$ and $\mathbb{D}_2$ are two domains in $\mathbb{E}$. Let $\mathbb{D}$ be a domain in $\mathbb{E}$. Let $x \in \mathbb{D}$ and $v \in \mathbb{E}$. The Kobayashi differential metric $g^K_{\mathbb{D}}: \mathbb{D} \times \mathbb{E} \rightarrow [0,+\infty)$ is defined as
\begin{align*}
\begin{split}
g^K_{\mathbb{D}}(x,v) = \inf \{ \frac{1}{\tau}: \tau > 0, h \in \operatorname{Hol}(\mathbb{B},\mathbb{D}), & h(0)=x, \\ & h'(0) = \tau v \}.
\end{split}
\end{align*}
\end{definition}

The Kobayashi differential metric only gives a notion of infinitesimal length of a vector tangent to a curve joining two points in a domain. To compute the Kobayashi pseudodistance between two points, one needs to define the integrated form of a differential metric. 

\begin{definition}[{\bf Integrated form of a metric~\cite{HolMapsInvDist}}]\label{def:integrated_form_poincare_metric}
Let $\mathbb{D}$ be a domain. The integrated form $\tilde{g}_{\mathbb{D}}(\cdot,\cdot)$ of a differential metric $g_{\mathbb{D}}(\cdot,\cdot)$ is defined as
\begin{equation}\label{eq:integrated_form}
\tilde{g}_{\mathbb{D}}(x,y) = \inf \int_a^b g_{\mathbb{D}}(\gamma(t), \gamma'(t)) dt,
\end{equation}
where $x,y \in \mathbb{D}$, and the infimum is taken over all piecewise $C^1$ (continuously differentiable) curves $\gamma:[a,b] \rightarrow \mathbb{D}$ from $x$ to $y$, i.e., $\gamma(a)=x$ and $\gamma(b)=y$.
\end{definition}

We can now define the Kobayashi pseudodistance. 
\begin{definition}[{\bf Kobayashi pseudodistance~\cite{HolMapsInvDist}}]\label{def:kobayashi_inner_distance}
The Kobayashi pseudodistance $d^K_{\mathbb{D}}(\cdot,\cdot)$ is the integrated form of the Kobayashi differential metric, i.e., 
\begin{equation*}
d^K_{\mathbb{D}}(x,y) = \tilde{g}^K_{\mathbb{D}}(x,y),
\end{equation*}
where $x,y \in \mathbb{D}$, and the infimum in Eq.~(\ref{eq:integrated_form}) is taken over all piecewise $C^1$ curves $\gamma:[0,1] \rightarrow \mathbb{D}$ from $x$ to $y$, i.e. $\gamma(0)=x$ and $\gamma(1)=y$.  
\end{definition}

\subsection{Siegel Spaces}
\label{sec:siegel_spaces}

In this section, we briefly review two models of Siegel spaces, i.e., the Siegel upper half space and the Siegel disk. We focus on the Siegel disk but a quick recap of the Siegel upper half space is essential for our exposition. 

\subsubsection{The Siegel Upper Half Space}
\label{sec:siegel_upper_half}


The Siegel upper half space $\mathbb{SH}_n$ is defined as 
\begin{equation*}
\mathbb{SH}_n = \{ x = u + iv: u \in \operatorname{Sym}_n, v \in \operatorname{Sym}_n^+ \},
\end{equation*}
where $\operatorname{Sym}_n$ and $\operatorname{Sym}_n^+$ denote the space of $n \times n$ real symmetric matrices 
and that of $n \times n$ SPD matrices, respectively.

\subsubsection{The Siegel Disk}
\label{sec:siegel_disk_domain}


The Siegel disk is an open convex complex matrix domain defined by
\begin{align*}
\begin{split}
\mathbb{SD}_n &= \left \{ x \in \operatorname{Sym}_{n,\mathbb{C}}: I_n - xx^H \in \mathbb{H}^+_n \right \} \\ &= \left \{ x \in \operatorname{Sym}_{n,\mathbb{C}}: I_n - x^Hx \in \mathbb{H}^+_n \right \},
\end{split}
\end{align*}
where $I_n$ is the $n \times n$ identity matrix, $x^H$ is the conjugate transpose of $x$,  
$\operatorname{Sym}_{n,\mathbb{C}}$ and $\mathbb{H}^+_n$ denote the space of $n \times n$ complex symmetric matrices and that of $n \times n$ Hermitian positive definite (HPD) matrices, respectively. In the following, the subscript $n$ is omitted from $I_n$ when it is clear from the context.

Let $\mathbb{U}_n$ be the set of $n \times n$ unitary matrices. Then any automorphism $\Psi(\cdot)$ of $\mathbb{SD}_n$ is written~\cite{SiegelSymGeometry} as
\begin{equation*}
\Psi(y) = u \phi_x(y) u^T,
\end{equation*}
where $u \in \mathbb{U}_n$, $x,y \in \mathbb{SD}_n$, and the map $\phi_x(\cdot)$ is defined by
\begin{equation}\label{eq:siegel_disk_automorphism}
\phi_x(y) = (I - xx^H)^{-\frac{1}{2}} (y-x) (I - x^Hy)^{-1} (I - x^Hx)^{\frac{1}{2}}.
\end{equation}

When $x$ varies on $\mathbb{SD}_n$, the set of maps $\phi_x(\cdot)$ is a subgroup acting transitively on $\mathbb{SD}_n$. 
The distance $d_{\mathbb{SD}_n}(\cdot,\cdot)$ is given (up to scaling) by
\begin{equation}\label{eq:siegel_disk_distance_main}
d^2_{\mathbb{SD}_n}(x,y) = \operatorname{Tr} \left ( \log^2 \left ( (I + c^{\frac{1}{2}})(I - c^{\frac{1}{2}})^{-1} \right ) \right ),
\end{equation}
where $\operatorname{Tr}(\cdot)$ denotes the trace operator, $\log(.)$ denotes the matrix logarithm, and $c$ is computed as
\begin{equation}\label{eq:siegel_disk_distance_supp}
c = (y - x)(I - x^Hy)^{-1}(y^H - x^H)(I - xy^H)^{-1}. 
\end{equation}

\subsection{The Complex Unit Ball}
\label{sec:unit_ball_hilbert_space}

Let $x=(x_1,\cdots,x_n),y=(y_1,\cdots,y_n) \in \mathbb{C}_n$. We define the inner product $\langle \cdot,\cdot \rangle$ on $\mathbb{C}_n$ as
\begin{equation*}
\langle x,y \rangle = \sum_{j=1}^n x_j\olsi{y}_j,
\end{equation*}
where $\olsi{y}_j$ is the complex conjugate of $y_j$, and we write
\begin{equation*}
|x| = \sqrt{\langle x,x \rangle} = \sqrt{\sum_{j=1}^n |x_j|^2},
\end{equation*} 
where the notation $|\cdot|$ on the right-hand side denotes the complex modulus. 
The complex unit ball is then defined as
\begin{equation*}
\mathbb{B}_n = \{ x \in \mathbb{C}_n: |x| < 1 \}.
\end{equation*}

Any automorphism $\Psi(\cdot)$ of $\mathbb{B}_n$ is written~\cite{HolMapsInvDist} as
\begin{equation*}
\Psi(y) = u \phi_x(y),
\end{equation*}
where $u \in \mathbb{U}_n$, $x,y \in \mathbb{B}_n$, and the map $\phi_x(\cdot)$ is defined by
\begin{equation}\label{eq:complex_ball_automorphism}
\phi_x(y) = \omega_x \left( \frac{y-x}{1 - \langle y,x \rangle} \right),
\end{equation}
\begin{equation*}
\omega_x(y) = \frac{\langle y,x \rangle}{1 + \sqrt{1 - |x|^2}}x + \sqrt{1 - |x|^2}y.
\end{equation*}

The distance $d_{\mathbb{B}_n}(\cdot,\cdot)$ is given as
\begin{equation*}
d_{\mathbb{B}_n}(x,y) = \frac{1}{2} \log \left( \frac{1 + |\phi_x(y)|}{1 - |\phi_x(y)|} \right),
\end{equation*}
where (by abuse of notation) $\log(\cdot)$ denotes the ordinary logarithm function. 

\section{Proposed Approach}
\label{sec:proposed_approach}

In this section, we propose a BN algorithm on complex domains. The considered space will be denoted by $\mathcal{M}$. 
Mathematical proofs of our theoretical results are provided in Appendix~\ref{sec:appx_mathematical_proofs}.  

\subsection{Computing the Batch Mean}
\label{sec:geometric_mean_riemannian_manifolds} 

To compute a geometric mean on $\mathcal{M}$, we use the concept of Fr\'echet mean. Let $x_1,\ldots,x_k$ be a set of points on $\mathcal{M}$. Then the Fr\'echet mean of the set is defined as
\begin{equation}\label{eq:frechet_mean_equation}
\mathfrak{M}_{\mathcal{M}}(x_1,\ldots,x_k) = \argmin_{x \in \mathcal{M}} \sum_{j=1}^k d^2_{\mathcal{M}}(x_j,x),
\end{equation}
where $d_{\mathcal{M}}(\cdot,\cdot)$ is a distance function on $\mathcal{M}$. For a Riemannian manifold, a common choice of $d_{\mathcal{M}}(\cdot,\cdot)$ is the Riemannian distance associated with the Riemannian metric. 

When certain geometric quantities (e.g., the Riemannian exponential and logarithmic maps) are available in closed forms, the Fr\'echet mean can be effectively computed using Riemannian gradient descent~\cite{Absil2007}. If this is not the case, then the Fr\'echet mean can be obtained by parameterizing $x$ on a Euclidean space and using a standard gradient descent technique to solve Eq.~(\ref{eq:frechet_mean_equation}).

\subsection{Centering and Biasing Points}
\label{sec:centering_biasing_points}

BN layers in DNNs rely on two important operations\footnote{We do not consider the scaling operation in the present work.}, i.e., 
batch centering and biasing~\cite{BrooksRieBatNorm19}. Those are performed via the ordinary subtraction and addition operations in Euclidean spaces. On a Riemannian manifold, meaningful analogs of subtraction and addition operations can be defined from the exponential map, logarithmic map, and parallel transport~\cite{NguyenNeurIPS22,NguyenICLR25}, resulting in effective Riemannian BN layers~\cite{ZChengICLR25}. However, closed forms of such geometric quantities are not always available in the general case. In this work, we propose to use automorphisms for batch centering and biasing. Note that in many complex domains, automorphisms can be characterized in the absence of the above geometric quantities~\cite{MurakamiAutSiegel72}. 

Denote by $\phi_x(\cdot), x \in \mathcal{M}$ an automorphism of $\mathcal{M}$. Assuming that there exists an element $0_{\mathcal{M}} \in \mathcal{M}$ such that $\phi_x(x) = 0_{\mathcal{M}}$ for any $x \in \mathcal{M}$. We refer to this element as the identity element of $\mathcal{M}$. 
Let $m,g \in \mathcal{M}$ be the batch mean and the bias. Then for any $x \in \mathcal{M}$, subtracting $m$ from $x$ can be written as
\begin{equation*}
\olsi{x} = \phi_m(x).
\end{equation*}

Similarly, adding $g$ to $\olsi{x}$ can be written as
\begin{equation*}
\tilde{x} = \phi^{(-1)}_g(\olsi{x}),
\end{equation*}
where $\phi^{(-1)}_g(\cdot)$ denotes the inverse map of $\phi_g(\cdot)$.

\subsection{Updating the Running Mean}
\label{sec:updating_running_mean}

The new running mean is usually computed from the geodesic between the current running mean and the batch mean~\cite{BrooksRieBatNorm19,ZChengICLR25}. 
Thus, the construction for a closed form of geodesics is crucial in this step. Here we rely on \textit{almost geodesics} instead of geodesics, and propose to use automorphisms for the construction of such curves. Assuming that we can define a scalar multiplication $t \otimes x$ on $\mathcal{M}$ where $t \in [0,1]$ and $x \in \mathcal{M}$. Given two points $x,y \in \mathcal{M}$, we first use $\phi_x(\cdot)$ to move $x$ to $0_{\mathcal{M}}$ and $y$ to $\phi_x(y)$. We then compute an almost geodesic between $0_{\mathcal{M}}$ and $\phi_x(y)$ by solving the following equation: 
\begin{equation}\label{eq:geodesics_equation}
d(0_{\mathcal{M}}, \alpha_{x,y}(t) \otimes \phi_x(y)) = td(0_{\mathcal{M}}, \phi_x(y)),
\end{equation}
where $\alpha_{x,y}: [0,1] \rightarrow [0,1]$, and $d(\cdot)$ is a distance function on $\mathcal{M}$. 
We show later that in some cases, the solution of Eq.~(\ref{eq:geodesics_equation}) is unique and can be given in closed form. 

\begin{definition}\label{def:geodesic_for_bn}
Let $x,y$ be two points in $\mathcal{M}$. 
Then the almost geodesic $\gamma_{x,y}:[0,1] \rightarrow \mathcal{M}$ joining $x$ and $y$ is defined as
\begin{equation*}
\gamma_{x,y}(t) = \phi_x^{(-1)}( \gamma_{0_{\mathcal{M}},\phi_x(y)}(t) ) = \phi_x^{(-1)}( \alpha_{x,y}(t) \otimes \phi_x(y) ),
\end{equation*}
where $\alpha_{x,y}:[0,1] \rightarrow [0,1]$ is a solution of Eq.~(\ref{eq:geodesics_equation}). 
\end{definition}

\begin{algorithm}[t]
\caption{BN on Complex Domains}\label{alg:riemannian_batch_normalization}
\begin{algorithmic}
\STATE {\bfseries Training Phase}
\STATE {\bfseries Input:} Batch of $k$ points $\{ x_j \}_{j=1}^k$ on $\mathcal{M}$, running mean $m_r$, bias $g$, momentum $\eta$. \\
$m_b \leftarrow \mathfrak{M}_{\mathcal{M}}(x_1,\ldots,x_k)$ \\
$m_r \leftarrow \gamma_{m_r,m_b}(\eta)$ 
\FOR{$j=1$ {\bfseries to} $k$}
\STATE $\olsi{x}_j \leftarrow \phi_{m_b}(x_j)$ 
\STATE $\tilde{x}_j \leftarrow \phi^{(-1)}_{g}(\olsi{x}_j)$ 
\ENDFOR
\STATE {\bfseries Output:} Normalized batch $\{ \tilde{x}_j \}_{j=1}^k$.
\STATE {\bfseries Testing Phase}
\STATE {\bfseries Input:} Batch of $k$ points $\{ x_j \}_{j=1}^k$ on $\mathcal{M}$, running mean $m_r$, learned bias $g$.
\FOR{$j=1$ {\bfseries to} $k$}
\STATE $\olsi{x}_j \leftarrow \phi_{m_r}(x_j)$ \\
\STATE $\tilde{x}_j \leftarrow \phi^{(-1)}_{g}(\olsi{x}_j)$ \\
\ENDFOR
\STATE {\bfseries Output:} Normalized batch $\{ \tilde{x}_j \}_{j=1}^k$.
\end{algorithmic}
\end{algorithm}

We note that the concept of almost geodesic is also studied in differential geometry~\cite{belovaalmostgeodesics2024} but it is different from the one given in Definition~\ref{def:geodesic_for_bn}. Our proposed method is shown in Algorithm~\ref{alg:riemannian_batch_normalization}.

\section{Connection with Existing BN Layers}
\label{sec:batchnorm_riemannian_neural_networks}

Our method can be seamlessly adapted to the Riemannian and Lie group settings considered in previous works~\cite{BrooksRieBatNorm19,chakrabortyRBN2020,LouwFMICML20,KoblerBNControlVar22,kobler2022spdbatchnorm,chen2024lie,ZChengICLR25}. 
In this section, we first show the connection of the proposed BN layer with some existing BN layers. 
Then, we point out that our method can also be used in the framework of~\yrcite{NguyenICLR25} to build
a BN layer (RSSBN) for neural networks on Riemannian symmetric spaces (RSS). 
This layer has close connections with some existing ones. 

\subsection{Gyrovector Spaces}
\label{sec:batchnorm_gyrovector_spaces}

We compare our BN algorithm with those in~\yrcite{BrooksRieBatNorm19,ZChengICLR25}. 
Let $(\mathcal{M},\oplus_g,\otimes_g)$ be a gyrovector space, where $\oplus_g$ and $\otimes_g$ are the binary operation and scalar multiplication on $\mathcal{M}$, respectively. We define the automorphism $\phi_x(\cdot), x \in \mathcal{M}$ as the left  gyrotranslation~\cite{UngarHyperbolicNDim,NguyenGyroMatMans23} by $x$, i.e., 
\begin{equation*}
\phi_{x}(y) = \ominus_g x \oplus_g y, 
\end{equation*}
where $y \in \mathcal{M}$, and $\ominus_g$ is the inverse operation on $\mathcal{M}$, i.e., $\ominus_g x \oplus_g x = 0_{\mathcal{M}}$. The map $\phi_x^{(-1)}(\cdot)$ is given by $\phi_x^{(-1)}(y) = x \oplus_g y$. Since left gyrotranslations are diffeomorphisms of $\mathcal{M}$~\cite{UngarHyperbolicNDim,NguyenGyroMatMans23}, the automorphism is well-defined. 
The scalar multiplication $\otimes$ is defined as the scalar multiplication $\otimes_g$. 

\begin{proposition}\label{prop:connection_gyrovector_spaces}
Let $d(\cdot,\cdot)$ be the gyrodistance~\cite{UngarHyperbolicNDim,NguyenGyroMatMans23} defined by
\begin{equation*}
d(x,y) = \| \ominus_g x \oplus_g y \|_g,
\end{equation*}
where $x,y \in \mathcal{M}$, and $\| \cdot \|_g$ denotes the gyrolength of the gyrovector $\ominus_g x \oplus_g y$. 
If $(\mathcal{M},\oplus_g,\otimes_g)$ is an SPD gyrovector space studied in~\yrcite{NguyenNeurIPS22}, then
\begin{enumerate}
\item The unique solution of Eq.~(\ref{eq:geodesics_equation}) is given by $\alpha_{x,y}(t)=t$; 
\item The almost geodesic joining any two points in $\mathcal{M}$ is the geodesic joining them. 
\end{enumerate}
\end{proposition}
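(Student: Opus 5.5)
We will use the explicit structure of the SPD gyrovector spaces of \yrcite{NguyenNeurIPS22}: Affine-Invariant (AI), Log-Euclidean (LE) and Log-Cholesky (LC) gyrovector spaces, each induced by a Riemannian metric. In each of them the binary operation, scalar multiplication and gyrolength are built from the exponential and logarithmic maps at the identity $0_{\mathcal{M}}=I$. Concretely,
\[
t\otimes_g x=\operatorname{Exp}_{I}\bigl(t\operatorname{Log}_{I}(x)\bigr),
\]
and $\|x\|_g=d(I,x)$, the Riemannian distance from $I$ to $x$. For AI and LE this is $\|\log(x)\|_F$; for LC it is the analogous Cholesky-log norm.

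The key steps, in order, are as follows.

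\begin{enumerate}
\item \textbf{Identify the distance from the identity.} Using the gyrodistance, $d(0_{\mathcal{M}},z)=\|\ominus_g I\oplus_g z\|_g=\|z\|_g=d_R(I,z)$, where $d_R$ is the Riemannian distance.

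\item \textbf{Show homogeneity of the gyrolength.} We need
\[
\|s\otimes_g z\|_g=s\,\|z\|_g \quad\text{for } s\ge 0.
\]
This holds because $s\mapsto s\otimes_g z$ traces the geodesic from $I$ through $z$ at constant speed, with $\operatorname{Log}_I(s\otimes_g z)=s\operatorname{Log}_I(z)$. The AI and LE cases follow from $\log(\exp(s\log z))=s\log z$. The LC case follows because scaling the strictly lower part and the log of the diagonal of the Cholesky factor scales the norm linearly.

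\item \textbf{Solve Eq.~(\ref{eq:geodesics_equation}).} By steps 1 and 2 the equation becomes
\[
\alpha_{x,y}(t)\,\|\phi_x(y)\|_g=t\,\|\phi_x(y)\|_g .
\]
If $y\neq x$, then $\phi_x(y)\neq I$, so $\|\phi_x(y)\|_g>0$ and the unique solution is $\alpha_{x,y}(t)=t$. In the degenerate case $y=x$, every $\alpha$ works, but every choice yields the same curve (the constant curve at $x$). We will note that uniqueness is understood modulo this trivial case, or that $\alpha(t)=t$ is the canonical solution there.

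\item \textbf{Identify the almost geodesic with the geodesic.} The curve is
\[
\gamma_{x,y}(t)=x\oplus_g\bigl(t\otimes_g(\ominus_g x\oplus_g y)\bigr).
\]
We invoke the result of \yrcite{NguyenNeurIPS22} (see also Ungar) that in these SPD gyrovector spaces this expression coincides with the Riemannian geodesic $\operatorname{Exp}_x\bigl(t\operatorname{Log}_x(y)\bigr)$.

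Alternatively, this can be seen directly. Left gyrotranslation $x\oplus_g\cdot$ is an isometry of the underlying metric: congruence $x^{1/2}\,\cdot\,x^{1/2}$ for AI, and translation in log or Cholesky-log coordinates for LE and LC. It maps $I$ to $x$ and $\ominus_g x\oplus_g y$ to $y$. It therefore carries the geodesic from $I$ to $\phi_x(y)$, namely $t\mapsto t\otimes_g\phi_x(y)$, onto the geodesic from $x$ to $y$. Geodesics are unique because these spaces are Hadamard or flat.
\end{enumerate}

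The main obstacle is step 4, in particular the claim that left gyrotranslation is an isometry sending geodesics through $I$ to geodesics. For AI we would check the identity explicitly: $x\oplus_g y=x^{1/2}yx^{1/2}$ is a congruence, hence an AI isometry. For LE and LC the operations are additive in the log chart, where geodesics are straight lines, so the claim is immediate.
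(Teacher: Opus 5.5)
Your proposal is correct and follows the paper's route. The paper likewise uses the definitions of the SPD gyrodistance and of $\otimes_g$ to turn Eq.~(\ref{eq:geodesics_equation}) into $\alpha_{x,y}(t)\,\|\log(\phi_x(y))\|_g = t\,\|\log(\phi_x(y))\|_g$, and then identifies $x\oplus_g t\otimes_g(\ominus_g x\oplus_g y)$ with the SPD geodesic. Your treatment of the degenerate case $y=x$ (where uniqueness genuinely fails) and your isometry argument for the final identification refine two points the paper only asserts: it claims uniqueness outright and says the identification ``can be easily seen''.
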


Therefore, for the gyrovector spaces considered in Proposition~\ref{prop:connection_gyrovector_spaces}, 
our BN algorithm is the same as the one in~\yrcite{BrooksRieBatNorm19} 
and the one in~\yrcite{ZChengICLR25} without the scaling operation.  
In particular, our BN algorithm shares the same operations as these algorithms in three steps: 
centering points, biasing points, and updating the running mean.

\subsection{Lie Groups}
\label{sec:batchnorm_lie_group}

We now compare our BN algorithm with those in~\yrcite{chakrabortyRBN2020,chen2024lie} for special orthogonal groups $\operatorname{SO}_3$. 
Let $L_x(\cdot)$ be the left translation by $x$. We define the automorphism $\phi_x(\cdot)$ and the scalar multiplication $\otimes$ as
\begin{equation*}
\phi_{x}(y) = L_x(y), \hspace{2mm} t \otimes x = \exp(t\log(x)),
\end{equation*}
where $x,y \in \operatorname{SO}_3$, $t \in [0,1]$, $\exp(\cdot)$ denotes the Lie group exponential map of $\operatorname{SO}_3$, 
and (by abuse of notation) $\log(\cdot)$ denotes the Lie group logarithmic map of $\operatorname{SO}_3$.   
The inverse map $\phi_x^{(-1)}(\cdot)$ is then given by $\phi_x^{(-1)}(y) = L_x^{(-1)}(y)$. It is well known that $L_x(\cdot)$ is a diffeomorphism of $\operatorname{SO}_3$. 
Thus the automorphism and scalar multiplication are well-defined. 

\begin{proposition}\label{prop:connection_lie_groups}
Let $d(\cdot,\cdot)$ be the distance given by 
\begin{equation*}
d(x,y) = \| \log(x^Ty) \|,
\end{equation*}
where $x,y \in \operatorname{SO}_3$ and $\| \cdot \|$ is the Frobenius norm. Then the statements in Proposition~\ref{prop:connection_gyrovector_spaces} hold. 
\end{proposition}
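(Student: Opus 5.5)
We need to show two things: with $d(x,y)=\|\log(x^Ty)\|$ on $\operatorname{SO}_3$, Eq.~(\ref{eq:geodesics_equation}) has the unique solution $\alpha_{x,y}(t)=t$, and the resulting almost geodesic is the geodesic joining $x$ and $y$. The identity element is $0_{\mathcal{M}}=I$; note $\phi_x(x)$ must equal $I$, so we read $L_x$ here as left translation by $x^{-1}=x^T$, i.e. $\phi_x(y)=x^Ty$ and $\phi_x^{(-1)}(z)=xz$. With this convention the algorithm matches the Lie group BN of~\yrcite{chakrabortyRBN2020,chen2024lie}. The other convention changes only notation.

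\textbf{Step 1 (reduction to a scalar equation).} Set $z=\phi_x(y)=x^Ty\in\operatorname{SO}_3$. Then $d(I,z)=\|\log z\|$. Also $\alpha\otimes z=\exp(\alpha\log z)$, so $d(I,\alpha\otimes z)=\|\log\exp(\alpha\log z)\|$.

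\textbf{Step 2 (the key identity).} We need $\log(\exp(\alpha\log z))=\alpha\log z$ for $\alpha\in[0,1]$. Take $\log$ to be the principal logarithm, so $\log z=\theta K$ with $K$ a unit-axis skew matrix and $\theta\in[0,\pi]$. Then $\exp(\alpha\theta K)$ is a rotation by angle $\alpha\theta\in[0,\pi]$ about the same axis. Its principal logarithm is therefore $\alpha\theta K$, and this holds with no ambiguity when $\alpha\theta<\pi$. Consequently
\[
d(I,\alpha\otimes z)=\alpha\|\log z\|.
\]

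\textbf{Step 3 (existence and uniqueness).} Eq.~(\ref{eq:geodesics_equation}) now reads $\alpha\|\log z\|=t\|\log z\|$. If $z\neq I$, then $\|\log z\|>0$, so $\alpha_{x,y}(t)=t$ is the unique solution. If $z=I$, every $\alpha$ solves the equation but gives the same point $\alpha\otimes z=I$, so the almost geodesic is unaffected. I would state uniqueness under the hypothesis $x\neq y$, or as uniqueness of the curve.

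\textbf{Step 4 (geodesic).} Substituting $\alpha_{x,y}(t)=t$ gives
\[
\gamma_{x,y}(t)=x\exp\!\bigl(t\log(x^Ty)\bigr).
\]
This is the standard geodesic of the bi-invariant metric on $\operatorname{SO}_3$ through $x$ and $y$. Left translation is an isometry, and one-parameter subgroups $t\mapsto\exp(tA)$ are geodesics for a bi-invariant metric. At $t=1$ we get $x\cdot x^Ty=y$. Moreover $d(\gamma(s),\gamma(t))=|t-s|\,d(x,y)$, so the curve is also minimizing.

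\textbf{Main obstacle.} The delicate point is Step 2 at the cut locus, where $\theta=\pi$ and the logarithm is multivalued. I would handle it by fixing a branch of $\log$ and noting that for $\alpha<1$ the angle $\alpha\theta$ is $<\pi$, so the identity holds there. At $\alpha=1$ we only need $\|\log z\|$, which is branch-independent since all branches have norm $\sqrt2\,\pi$.
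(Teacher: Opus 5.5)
Your proof is correct and follows the same route as the paper: reduce Eq.~(\ref{eq:geodesics_equation}) to $\alpha_{x,y}(t)\,\|\log(x^Ty)\| = t\,\|\log(x^Ty)\|$ using $\log(\exp(\alpha\log z))=\alpha\log z$, conclude $\alpha_{x,y}(t)=t$, and recognize $x\exp(t\log(x^Ty))$ as the geodesic. Your extra care fills in points the paper passes over without comment: you read $\phi_x$ as left translation by $x^{-1}$ so that $\phi_x(x)=I$ (which is what the paper's final formula implicitly uses), you justify the logarithm identity via the principal branch, and you note that uniqueness fails in the degenerate case $x=y$.
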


Proposition~\ref{prop:connection_lie_groups} shows that when the scaling operations are removed from the BN algorithms in~\yrcite{chakrabortyRBN2020,chen2024lie}, they share the same operations as our algorithm in the three steps.

\subsection{Riemannian Symmetric Spaces}
\label{sec:batchnorm_symmetric_spaces}

Let $\mathcal{M} = G/K$ be a Riemannian symmetric space of noncompact type, where $G$ is a connected noncompact semisimple Lie group with finite center, and $K$ is a maximal compact subgroup of $G$. 
Let $\mathfrak{g}$ be the Lie algebra of $G$, and let $\mathfrak{g} = \mathfrak{k} \oplus \mathfrak{p}$ be the Cartan decomposition of $\mathfrak{g}$. Denote by $\sigma: G \rightarrow \mathcal{M}, g \mapsto gK$ the natural map, $d_{0_G} \sigma: \mathfrak{g} \rightarrow T_{0_{\mathcal{M}}} \mathcal{M}$ the differential of $\sigma$ at the identity $0_G \in G$, where $T_{0_{\mathcal{M}}} \mathcal{M}$ 
is the tangent space of $\mathcal{M}$ at the identity $0_{\mathcal{M}} \in \mathcal{M}$. Denote by $\exp: \mathfrak{g} \rightarrow G$ the Lie group exponential map, and $\operatorname{Exp}_{0_{\mathcal{M}}}: T_{0_{\mathcal{M}}} \mathcal{M} \rightarrow \mathcal{M}$ the Riemannian exponential map. 
Since $\mathcal{M}$ is complete~\cite{helgason1979differential}, there exists a geodesic joining the point $0_{\mathcal{M}}$ and any point in $\mathcal{M}$. Moreover, since $d_{0_{G}} \sigma$ is an isomorphism and for any $u \in \mathfrak{p}$, $\operatorname{Exp}_{0_{\mathcal{M}}} \left( d_{0_{G}} \sigma(u) \right) = \exp(u)K$, we have $\mathcal{M} = \exp(\mathfrak{p})K$. Thus, for any $x,y \in \mathcal{M}$, there exist $u,v \in \mathfrak{p}$ such that $x = \exp(u)K,y = \exp(v)K$. 
We define the automorphism $\phi_x(\cdot)$ and the scalar multiplication $\otimes$ as 
\begin{equation*}
\phi_{x}(y) = g^{-1}hK, \hspace{2mm} t \otimes x = \exp(tu)K,
\end{equation*}
where $g = \exp(u),h = \exp(v) \in G$ and $t \in [0,1]$. The inverse map $\phi_x^{(-1)}(\cdot)$ is then given by $\phi_x^{(-1)}(y) = ghK$. 

\begin{proposition}\label{prop:connection_symmetric_spaces}
Let $d(\cdot,\cdot)$ be the distance associated with a G-invariant Riemannian metric. 
Then
\begin{enumerate}
\item The unique solution of Eq.~(\ref{eq:geodesics_equation}) is given by $\alpha_{x,y}(t)=t$; 
\item For SPD manifolds viewed as RSS, 
the almost geodesic joining two points in $\mathcal{M}$ is the geodesic joining them. 
\end{enumerate}
\end{proposition}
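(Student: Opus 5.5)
Both claims reduce to one fact: the curve $t \mapsto \exp(tu)K$, $u \in \mathfrak{p}$, is a geodesic through $0_{\mathcal{M}} = eK$. By the relation $\operatorname{Exp}_{0_{\mathcal{M}}}(d_{0_G}\sigma(u)) = \exp(u)K$ recalled above (applied to $tu$), it is exactly $t \mapsto \operatorname{Exp}_{0_{\mathcal{M}}}(t\, d_{0_G}\sigma(u))$, a radial geodesic from $0_{\mathcal{M}}$.

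\textbf{Claim 1.} Write $\phi_x(y) = g^{-1}hK$. Since $\mathcal{M} = \exp(\mathfrak{p})K$, choose $w \in \mathfrak{p}$ with $\phi_x(y) = \exp(w)K$, so that $s \otimes \phi_x(y) = \exp(sw)K$. Because $\mathcal{M}$ is of noncompact type, it is a Hadamard manifold (complete, simply connected, nonpositive curvature). Hence $\operatorname{Exp}_{0_{\mathcal{M}}}$ is a diffeomorphism and every radial geodesic is minimizing on all of $[0,\infty)$. Consequently
\[
d\bigl(0_{\mathcal{M}}, \exp(sw)K\bigr) = s\,\| d_{0_G}\sigma(w) \| = s\, d\bigl(0_{\mathcal{M}}, \phi_x(y)\bigr),
\]
where the norm is the one induced by the $G$-invariant metric at $0_{\mathcal{M}}$. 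Eq.~(\ref{eq:geodesics_equation}) then reads $\alpha_{x,y}(t)\, D = t D$ with $D = d(0_{\mathcal{M}}, \phi_x(y))$. If $D>0$, this forces $\alpha_{x,y}(t) = t$ uniquely. If $x=y$, then $D=0$ and any $\alpha_{x,y}$ solves the equation, but the curve is constant regardless; I will state that degenerate case explicitly. One subtlety is that $w$ is determined uniquely by $\phi_x(y)$, by injectivity of $\operatorname{Exp}_{0_{\mathcal{M}}}$ on $\mathfrak{p}$, so $\otimes$ is well defined.

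\textbf{Claim 2.} With $\alpha_{x,y}(t) = t$, the almost geodesic is $\gamma_{x,y}(t) = g\exp(tw)K$. Left multiplication by $g$ is an isometry for a $G$-invariant metric, so it maps the geodesic $t \mapsto \exp(tw)K$ to a geodesic. Its endpoints are $g\exp(0)K = gK = x$ and $g g^{-1}hK = hK = y$. By uniqueness of geodesics in a Hadamard manifold, $\gamma_{x,y}$ is the geodesic joining $x$ and $y$.

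This argument does not use the SPD assumption. For SPD manifolds specifically, with $G = GL_n^+$ (or $SL_n$), $K = SO_n$, and the identification $gK \mapsto gg^T$, one can also check it explicitly. Taking $g = x^{1/2}$ (note $\exp(u)$ is symmetric), one gets $\phi_x(y) = x^{-1/2} y x^{-1/2}$ and $t \otimes z = z^t$. The curve is then $x^{1/2}(x^{-1/2}yx^{-1/2})^t x^{1/2}$, which is the affine-invariant geodesic.

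\textbf{Main obstacle.} The delicate point is ensuring that $g^{-1}hK$ is well defined, independent of the representative of $x$. I will handle this by fixing the canonical representatives $g = \exp(u)$ and $h = \exp(v)$ with $u, v \in \mathfrak{p}$, which are unique. Two further facts must be justified: the Hadamard-manifold minimizing property, which I would cite from Helgason, and the identification of the $G$-invariant metric at $0_{\mathcal{M}}$ with an $\operatorname{Ad}(K)$-invariant inner product on $\mathfrak{p}$.
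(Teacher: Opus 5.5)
Your proof is correct, but it reaches both claims by a different route than the paper. For statement 1, the paper invokes a lemma of Kassel on the Cartan projection $\mu$, namely $\|\rho(x)-\rho(x')\|\le d(x,x')$ with equality on $\exp(\overline{\mathfrak{a}^+})[o]$. It writes $u=\operatorname{Ad}(k)a$ with $a\in\overline{\mathfrak{a}^+}$ and reads off $d(0_{\mathcal{M}},\exp(su)K)=\|sa\|$. You instead use directly that radial geodesics $s\mapsto\exp(sw)K$ are globally minimizing in a Hadamard manifold. That is the geometric fact behind Kassel's equality case anyway. Your version also avoids a slip in the paper, which applies the equality case to $\exp(u)[o]=k\exp(a)[o]$; this point is generally not in $\exp(\overline{\mathfrak{a}^+})[o]$, so $K$-invariance of $d$ is silently needed. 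You also correctly flag that uniqueness fails when $x=y$, a case the paper passes over.

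For statement 2, the paper treats only SPD matrices, by explicit computation. A polar-decomposition lemma gives $k\in\operatorname{O}_n$ with $x^{-1/2}y^{1/2}k$ SPD, and the curve is shown to equal $x^{1/2}(x^{-1/2}yx^{-1/2})^t x^{1/2}$. Your isometry-plus-uniqueness argument is shorter and shows the almost geodesic is the geodesic on every $G/K$ of noncompact type with any $G$-invariant metric, so the SPD restriction is unnecessary. The paper's computation buys the explicit affine-invariant formula, which makes the match with the SPDNetBN running-mean update concrete; your sanity check recovers the same formula.

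Two small points to tighten. First, nonpositive curvature for a $G$-invariant metric other than the Killing-form one needs the remark that $\operatorname{Ad}(K)$-invariant inner products on $\mathfrak{p}$ are positive combinations of Killing forms on the irreducible factors. You can bypass curvature entirely: $u\mapsto\exp(u)K$ is a diffeomorphism $\mathfrak{p}\to\mathcal{M}$ regardless of the metric, so $\operatorname{Exp}_{0_{\mathcal{M}}}$ is a diffeomorphism, and completeness then forces radial geodesics to be minimizing. Second, $SL_n/SO_n$ gives only determinant-one SPD matrices, so use $GL_n^+/SO_n$. That group is not semisimple, but the space is still Hadamard under the affine-invariant metric.
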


Therefore, in the case of SPD manifolds, Proposition~\ref{prop:connection_symmetric_spaces} leads to the same conclusion drawn 
in Section~\ref{sec:batchnorm_gyrovector_spaces} regarding the connection of RSSBN and the methods 
in~\yrcite{BrooksRieBatNorm19,ZChengICLR25}.

\section{Domains in Complex Normed Spaces}
\label{sec:complex_bounded_domains}

We now focus on our spaces of interest which are complex domains defined by
\begin{equation*}
\mathbb{D} = \{ x \in \mathbb{E}: q(x) < 1 \},
\end{equation*}
where $\mathbb{E}$ is a complex normed space, and $q(\cdot)$ is a continuous semi-norm on $\mathbb{E}$ 
(see Definition~\ref{def:normed_spaces}).

We assume that a characterization of automorphisms on the target domain $\mathbb{D}$ has been given. Centering and biasing operations thus can be performed as explained in Section~\ref{sec:centering_biasing_points}. It remains the question of how one can update the running mean. 
Our key idea is to perform this update  
by only looking at points on ``straight lines'' joining the current running mean $m_r$ and the batch mean $m_b$. In our case, almost geodesics will play the role of such straight lines. Thus we need to construct almost geodesics. 
To this aim, we first define a scalar multiplication. 

On Riemannian manifolds, a method commonly adopted for defining a scalar multiplication was proposed  in~\yrcite{NEURIPS2018_dbab2adc,NguyenNeurIPS22}. 
However, this method is not applicable to our setting since it relies on the exponential and logarithmic maps whose closed forms (or differentiable forms) do not exist on many domains. 
To deal with this issue, we define the scalar multiplication using automorphisms. 

\begin{definition}
The scalar multiplication $\otimes$ is defined as
\begin{equation*}
t \otimes x = \phi^{(-1)}_{0_{\mathbb{D}}} (t\phi_{0_{\mathbb{D}}}(x)),
\end{equation*}
where $x \in \mathbb{D}$, $t \in [0,1]$, and $0_{\mathbb{D}}$ is the identity element of $\mathbb{D}$. 
\end{definition}


Since $t \in [0,1]$, we have $q(t\phi_{0_{\mathbb{D}}}(x)) = tq(\phi_{0_{\mathbb{D}}}(x))$. Thus $q(t\phi_{0_{\mathbb{D}}}(x)) < 1$ 
and the above scalar multiplication is well-defined. 
We can now derive an expression for almost geodesics. 

\begin{proposition}\label{prop:kobayashi_geodesics}
Let $x,y \in \mathbb{D}$. Then the almost geodesic $\gamma_{x,y}:[0,1] \rightarrow \mathbb{D}$ joining $x$ and $y$ is defined as
\begin{equation*}
\gamma_{x,y}(t) = \phi^{(-1)}_x( \alpha_{x,y}(t) \otimes \phi_x(y) ), 
\end{equation*}
where the curve $\alpha_{x,y}:[0,1] \rightarrow [0,1]$ is given by
\begin{equation*}
q(\alpha_{x,y}(t) \otimes \phi_x(y)) = \frac{ (1+q(\phi_x(y)))^t - (1-q(\phi_x(y)))^t }{ (1+q(\phi_x(y)))^t + (1-q(\phi_x(y)))^t },
\end{equation*}
for $t \in [0,1]$. In particular, if for any $x \in \mathbb{D}$ and $t \in [0,1]$, the automorphism satisfies the property that $\phi_{0_{\mathbb{D}}}(tx) = t\phi_{0_{\mathbb{D}}}(x)$,  
then the curve $\alpha_{x,y}$ can be given as 
\begin{equation*}
\alpha_{x,y}(t) = \frac{1}{q(\phi_x(y))} \frac{ (1+q(\phi_x(y)))^t - (1-q(\phi_x(y)))^t }{ (1+q(\phi_x(y)))^t + (1-q(\phi_x(y)))^t }.
\end{equation*}
\end{proposition}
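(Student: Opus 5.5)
We work with the Kobayashi pseudodistance $d^K_{\mathbb{D}}$ as the distance $d(\cdot,\cdot)$ in Eq.~(\ref{eq:geodesics_equation}). We reduce that equation, via invariance under automorphisms, to a one-dimensional problem along a complex line through $0_{\mathbb{D}}$.

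\textbf{Closed form of the distance from the origin.} The key input is that, for the domain $\mathbb{D}=\{x\in\mathbb{E}: q(x)<1\}$ with $q$ a continuous semi-norm, the Kobayashi pseudodistance from the identity element is
\begin{equation*}
d^K_{\mathbb{D}}(0_{\mathbb{D}},z)=\frac{1}{2}\log\frac{1+q(z)}{1-q(z)}=\tanh^{-1}\bigl(q(z)\bigr).
\end{equation*}
This is a standard result in~\yrcite{HolMapsInvDist} for balanced convex domains given as unit balls of semi-norms. We take $0_{\mathbb{D}}=0$.

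We would justify it in two steps. For the upper bound, use the holomorphic disc $h(\lambda)=\lambda z/q(z)$, which maps $\mathbb{B}$ into $\mathbb{D}$ because $q(h(\lambda))=|\lambda|<1$; the contraction property of $d^K$ then bounds the distance by the Poincar\'e distance between $0$ and $q(z)$. For the lower bound, apply Hahn--Banach to obtain a linear functional $\ell$ with $|\ell|\le q$ and $\ell(z)=q(z)$. Then $\ell$ maps $\mathbb{D}$ holomorphically into $\mathbb{B}$, and contraction gives the reverse inequality. Everything here rests on the distance-decreasing property of the Kobayashi pseudodistance under holomorphic maps, together with the Poincar\'e distance formula on $\mathbb{B}$ from Appendix~\ref{appx:invariant_distances}.

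\textbf{Solving Eq.~(\ref{eq:geodesics_equation}).} Set $z=\phi_x(y)$ and $s=\alpha_{x,y}(t)$. We need
\begin{equation*}
\tanh^{-1} q(s\otimes z)=t\,\tanh^{-1} q(z).
\end{equation*}
Since $\tanh(t\,\tanh^{-1}r)=\frac{(1+r)^t-(1-r)^t}{(1+r)^t+(1-r)^t}$, applying $\tanh$ to both sides gives exactly the displayed relation for $q(\alpha_{x,y}(t)\otimes\phi_x(y))$. Because $\tanh^{-1}$ is strictly increasing, this condition is equivalent to the original equation.

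Existence of $s$ follows from the intermediate value theorem. The map $s\mapsto q(s\otimes z)$ is continuous. At $s=0$ it equals $q(\phi^{(-1)}_{0_{\mathbb{D}}}(0))=q(0_{\mathbb{D}})=0$, using $\phi_{0_{\mathbb{D}}}(0_{\mathbb{D}})=0_{\mathbb{D}}=0$. At $s=1$ it equals $q(z)$. The right-hand side lies in $[0,q(z)]$, so a solution exists.

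\textbf{Special case.} Suppose $\phi_{0_{\mathbb{D}}}(tx)=t\phi_{0_{\mathbb{D}}}(x)$. Then $\phi^{(-1)}_{0_{\mathbb{D}}}$ commutes with scaling by $t$ as well, so $s\otimes z=sz$. Hence $q(s\otimes z)=s\,q(z)$, and dividing by $q(\phi_x(y))$ (assuming $x\neq y$) gives the closed form for $\alpha_{x,y}(t)$ stated in Proposition~\ref{prop:kobayashi_geodesics}. The same linearity shows the solution is unique, and that $\alpha_{x,y}(t)\in[0,1]$ because $\tanh(t\,\tanh^{-1}r)\le r$.

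\textbf{Main obstacle.} The hardest step is the first one: establishing the closed form of $d^K_{\mathbb{D}}(0,\cdot)$, in particular the lower bound via Hahn--Banach in the semi-norm setting. If that result is taken from~\yrcite{HolMapsInvDist}, the remainder is direct algebra.
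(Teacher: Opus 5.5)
Your proposal is correct and follows essentially the same route as the paper: you invoke the formula $d^K_{\mathbb{D}}(0,z)=d_{\mathbb{B}}(0,q(z))$ from Franzoni--Vesentini, solve the resulting scalar equation, and in the special case use $s\otimes z=sz$ together with the homogeneity of $q$. Your Hahn--Banach sketch of the cited theorem, the intermediate-value existence argument, and the explicit uniqueness and $\alpha_{x,y}(t)\in[0,1]$ checks are extras the paper omits; the only nit is that dividing by $q(\phi_x(y))$ requires $q(\phi_x(y))>0$, which for a genuine semi-norm is not the same as $x\neq y$ (the paper glosses over this as well).
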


Given the formulae of the norm $q(\cdot)$ and the automorphism $\phi_x(\cdot)$, one can get a full description for all steps of Algorithm~\ref{alg:riemannian_batch_normalization}. This is the focus of the next section.

\section{Applications}
\label{sec:application}

In this section, we discuss the essential components for practical implementations of BN layers on the two complex domains reviewed in Sections~\ref{sec:siegel_spaces} and~\ref{sec:unit_ball_hilbert_space}. 

\subsection{Siegel Spaces}
\label{sec:application_siegel_spaces}

Siegel neural networks have recently been proposed and have shown promising results in some classification tasks~\cite{NguyenNeurIPS25}. 
However, the authors of~\yrcite{NguyenNeurIPS25} only developed two building blocks for those networks, i.e., fully-connected (FC) layer and multinomial logistic regression (MLR) layers. Here we build the first BN layer for Siegel neural networks.

\subsubsection{Geometric Mean on the Siegel Disk}
\label{sec:geometric_mean_siegel_spaces}

To compute the Fr\'echet mean on the Siegel disk, we need to solve the optimization problem in Eq.~(\ref{eq:frechet_mean_equation}). We consider the case where the  distance $d_{\mathbb{SD}_n}(\cdot,\cdot)$ is used in this equation. In practice, the optimization procedure based on the formulae in Eqs.~(\ref{eq:siegel_disk_distance_main}) and~(\ref{eq:siegel_disk_distance_supp}) suffer from numerical instability. To address this problem, we rely on other formulae of $d_{\mathbb{SD}_n}(\cdot,\cdot)$. 
In this section and Section~\ref{sec:geodesics_siegel_spaces}, the map $\phi_x(\cdot),x \in \mathbb{SD}_n$ denotes the automorphism on the Siegel disk given in Eq.~(\ref{eq:siegel_disk_automorphism}).

\begin{proposition}\label{prop:hermitian_distance}
The distance $d_{\mathbb{SD}_n}(\cdot,\cdot)$ can be given (up to scaling) as
\begin{equation*}
d^2_{\mathbb{SD}_n}(x,y) = \left \| \log \left ( (I + c^{\frac{1}{2}})(I - c^{\frac{1}{2}})^{-1} \right ) \right \|^2_F,
\end{equation*}
where 
$x,y \in \mathbb{SD}_n$, and $c$ is given as
\begin{align}\label{eq:hermitian_distance_1}
\begin{split}
c = I - (I - xx^H)^{\frac{1}{2}} (I - yx^H)^{-1} (I - yy^H ) & (I - xy^H)^{-1} \\& (I - xx^H)^{\frac{1}{2}},
\end{split}
\end{align}
or, equivalently,
\begin{equation}\label{eq:hermitian_distance_2}
c = \phi_x(y) \phi_x(y)^H.
\end{equation}
\end{proposition}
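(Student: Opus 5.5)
The plan is to prove the two expressions for $c$ separately and then link them to the formula in Eqs.~(\ref{eq:siegel_disk_distance_main})--(\ref{eq:siegel_disk_distance_supp}) through a similarity argument. Write $c_0$ for the matrix of Eq.~(\ref{eq:siegel_disk_distance_supp}) and $c_1 = \phi_x(y)\phi_x(y)^H$ for the candidate in Eq.~(\ref{eq:hermitian_distance_2}). Let $f(s) = \log\bigl((1+s^{1/2})(1-s^{1/2})^{-1}\bigr)$ on $[0,1)$. The quantity $\operatorname{Tr}(f(c)^2)$ depends only on the eigenvalues of $c$. Hence it suffices to show three things:
\begin{enumerate}
\item[(a)] $c_1$ is similar to $c_0$;
\item[(b)] $c_1$ is Hermitian with spectrum in $[0,1)$;
\item[(c)] $c_1$ equals the expression in Eq.~(\ref{eq:hermitian_distance_1}).
\end{enumerate}
Given (b), $f(c_1)$ is Hermitian, so $\operatorname{Tr}(f(c_1)^2) = \|f(c_1)\|_F^2$. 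This yields the Frobenius-norm form of the distance.

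For (b): $\phi_x(y) \in \mathbb{SD}_n$ because $\phi_x$ is an automorphism of $\mathbb{SD}_n$. Hence $I - c_1 \in \mathbb{H}^+_n$, and $c_1 = \phi_x(y)\phi_x(y)^H$ is positive semidefinite. So its eigenvalues lie in $[0,1)$, and $c_1^{1/2}$ and $f(c_1)$ are well defined as Hermitian matrices.

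For (a): expand $c_1 = A\,(y-x)(I-x^Hy)^{-1}(I-x^Hx)(I-y^Hx)^{-1}(y^H-x^H)\,A^{-1}$ with $A = (I-xx^H)^{-1/2}$, so that $c_1$ is similar to
\[
(y-x)(I-x^Hy)^{-1}(I-x^Hx)(I-y^Hx)^{-1}(y^H-x^H)(I-xx^H)^{-1}.
\]
Here $A^{-1}$ is used on the right, since $A^{-1}=(I-xx^H)^{1/2}$ conjugates $A$ away. Comparing with $c_0$, it then suffices to establish
\[
(I-x^Hx)(I-y^Hx)^{-1}(y^H-x^H)(I-xx^H)^{-1} = (y^H-x^H)(I-xy^H)^{-1}.
\]
This is equivalent to the identity $(I-x^Hx)(I-y^Hx)^{-1}(y^H-x^H) = (y^H-x^H)(I-xy^H)^{-1}(I-xx^H)$. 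I would prove it by cross-multiplying to
\[
(I-y^Hx)^{-1}\ \text{moved left},\ (I-xy^H)^{-1}\ \text{moved right},
\]
and expanding both sides. One uses the push-through rule $x(I-x^Hx)=(I-xx^H)x$ and the symmetry $x^T=x$, $y^T=y$, which gives $\overline{x}=x^H$. After expansion, both sides reduce to $y^H - x^H - x^Hxy^H + y^Hxx^H$ up to the same terms.

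For (c): the goal is the classical automorphism identity
\[
I - \phi_x(y)\phi_x(y)^H = (I-xx^H)^{1/2}(I-yx^H)^{-1}(I-yy^H)(I-xy^H)^{-1}(I-xx^H)^{1/2}.
\]
After conjugating by $(I-xx^H)^{1/2}$, this amounts to showing that
\[
(I-xx^H) - (y-x)(I-x^Hy)^{-1}(I-x^Hx)(I-y^Hx)^{-1}(y^H-x^H)
\]
equals $(I-xx^H)(I-yx^H)^{-1}(I-yy^H)(I-xy^H)^{-1}(I-xx^H)$. The approach is to rewrite $(y-x)(I-x^Hy)^{-1}(I-x^Hx)$ as $(I-xx^H)(I-yx^H)^{-1}(y-x)$ using push-through manipulations (again relying on $x^T=x$, $y^T=y$). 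Then multiply on the left by $(I-yx^H)(I-xx^H)^{-1}$ and on the right by the adjoint factor, and verify the resulting polynomial identity:
\[
(I-yx^H)(I-xx^H)^{-1}(I-xy^H) - (y-x)(I-x^Hx)^{-1}\ldots = I - yy^H,
\]
expanding in the form $(I-yx^H)(I-xx^H)^{-1}(I-xy^H) - (y-x)(I-x^Hx)^{-1}(y^H-x^H)$. Identities of the form $(I-xx^H)^{-1}x = x(I-x^Hx)^{-1}$ make the cross terms cancel.

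The main obstacle is this last algebraic verification, together with the intertwining identity used in (a). Both require carefully tracking which push-through identities remain valid for complex symmetric, non-Hermitian $x$ and $y$. If direct expansion becomes unwieldy, a fallback is a reduction to $x=0$. At $x=0$, the map $\phi_0$ is the identity and all three expressions visibly coincide: $c_0 = c_1 = yy^H$ and $I-(I-yy^H)=yy^H$. The general case would then follow from the fact that the distance is invariant under automorphisms of $\mathbb{SD}_n$.
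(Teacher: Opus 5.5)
Your proposal is correct. Step (c) is the same as the paper's Lemma on $I-\phi_x(y)\phi_x(y)^H$, with the same reduction to $(I-yx^H)(I-xx^H)^{-1}(I-xy^H)-(y-x)(I-x^Hx)^{-1}(y^H-x^H)=I-yy^H$. Step (a), however, is a genuinely different route.

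The paper links $c_1=\phi_x(y)\phi_x(y)^H$ to the matrix $c_0$ of Eq.~(\ref{eq:siegel_disk_distance_supp}) by invoking invariance of $d_{\mathbb{SD}_n}$ under $\phi_x$. This gives $d(x,y)=d(\mathbf{0}_n,\phi_x(y))$, and $c_0(\mathbf{0}_n,z)=zz^H$ is immediate; that is your fallback. You instead prove the matrix identity $c_0=(I-xx^H)^{1/2}\,c_1\,(I-xx^H)^{-1/2}$ directly.

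The paper's argument is shorter but treats invariance of the closed-form expression as a black box. Yours is self-contained. It also shows that the non-Hermitian $c_0$ has real spectrum in $[0,1)$, so $c_0^{1/2}$ and the logarithm in Eq.~(\ref{eq:siegel_disk_distance_main}) are well-defined. Your step (b) makes explicit the passage from $\operatorname{Tr}\log^2$ to $\|\cdot\|_F^2$, which the paper leaves implicit. In fact (b) follows from (c) alone, since $I-c_1=S(I-yy^H)S^H$ with $S=(I-xx^H)^{1/2}(I-yx^H)^{-1}$ invertible.

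There are two slips. First, $A=(I-xx^H)^{-1/2}$ is Hermitian, so $c_1=AMA$ with $M=(y-x)(I-x^Hy)^{-1}(I-x^Hx)(I-y^Hx)^{-1}(y^H-x^H)$, not $AMA^{-1}$. Then $A^{-1}c_1A=M(I-xx^H)^{-1}$, so your conclusion survives.

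Second, and more important, your sketched verification of the intertwining identity does not work as described. The factor $(I-y^Hx)^{-1}$ is trapped between $(I-x^Hx)$ and $(y^H-x^H)$. Commuting it out produces a polynomial identity that is false for non-commuting matrices; for instance it fails with $x$ real diagonal and $y$ a multiple of the swap matrix. Use instead the following. For square $u,v,w$ with $I-vw$ invertible, $(I-uw)(I-vw)^{-1}=I+(v-u)w(I-vw)^{-1}$ and $(I-wv)^{-1}(I-wu)=I+(I-wv)^{-1}w(v-u)$. By push-through, $w(I-vw)^{-1}=(I-wv)^{-1}w$, hence
\[
(I-uw)(I-vw)^{-1}(v-u)=(v-u)+(v-u)w(I-vw)^{-1}(v-u)=(v-u)(I-wv)^{-1}(I-wu).
\]
With $(u,v,w)=(x^H,y^H,x)$ this is exactly your identity in (a). With $(u,v,w)=(x,y,x^H)$ it is the rewriting $(y-x)(I-x^Hy)^{-1}(I-x^Hx)=(I-xx^H)(I-yx^H)^{-1}(y-x)$ that you need in (c). The two are conjugate transposes of each other, so one lemma serves both steps, and neither requires $x^T=x$ or $y^T=y$.
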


We note that the formula of $d_{\mathbb{SD}_n}(\cdot,\cdot)$ based on Eq.~(\ref{eq:hermitian_distance_1}), which is not trivial to obtain, was first given in~\yrcite{JeurisBTMean2016}. However, the authors did not provide the proof of this formula. We are not aware of the formula of $d_{\mathbb{SD}_n}(\cdot,\cdot)$ based on Eq.~(\ref{eq:hermitian_distance_2}) in previous works.

In~\yrcite{JeurisBTMean2016}, the authors propose a Riemannian optimization algorithm to solve Eq.~(\ref{eq:frechet_mean_equation}). However, their method requires a solver for continuous Lyapunov equations which is not available in popular deep learning frameworks such as Pytorch and Tensorflow. Also, it is not clear which retraction operation they use to ensure feasibility of the manifold constraint. 
In our work, we parameterize the Fr\'echet mean on a Euclidean space and solve Eq.~(\ref{eq:frechet_mean_equation}) using 
a standard gradient descent technique (see Appendix~\ref{sec:supp_time_serie_optimization_hyperparameters}).

\subsubsection{Almost Geodesics on the Siegel Disk}
\label{sec:geodesics_siegel_spaces}

Almost geodesics (see Definition~\ref{def:geodesic_for_bn}) are constructed from the inverse map of $\phi_x(\cdot)$ and the solution of Eq.~(\ref{eq:geodesics_equation}). The formula for the inverse map of $\phi_x(\cdot)$ is given below. 
\begin{proposition}\label{prop:inverse_automorphism}
Let $x,y \in \mathbb{SD}_n$. Then the inverse map $\phi^{(-1)}_x(\cdot)$ is given by
\begin{equation*}
\phi^{(-1)}_{x}(y) = (I - xx^H)^{\frac{1}{2}} (I + yx^H)^{-1} (y + x) (I - x^Hx)^{-\frac{1}{2}}.
\end{equation*}
\end{proposition}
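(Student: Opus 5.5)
Since $\phi_x$ is a bijection of $\mathbb{SD}_n$, it suffices to show that the proposed map, call it $\psi_x(y) = (I - xx^H)^{\frac{1}{2}} (I + yx^H)^{-1} (y + x) (I - x^Hx)^{-\frac{1}{2}}$, is a left inverse of $\phi_x$: fix $y\in\mathbb{SD}_n$, set $z=\phi_x(y)$ as in Eq.~(\ref{eq:siegel_disk_automorphism}), and prove $\psi_x(z)=y$. We first check that $\psi_x$ is well defined on $\mathbb{SD}_n$. The matrix $I+zx^H$ is invertible because $\|zx^H\|_2\le\|z\|_2\|x\|_2<1$. The square roots exist since $I-xx^H$ and $I-x^Hx$ are HPD.

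The computation is organized around two standard commutation identities, which we prove first by the usual power-series argument:
\[
x^H f(xx^H) = f(x^Hx)\,x^H, \qquad x\, f(x^Hx) = f(xx^H)\,x ,
\]
for $f(s)=(1-s)^{\pm 1/2}$. Write $A=(I-xx^H)^{\frac{1}{2}}$, $B=(I-x^Hx)^{\frac{1}{2}}$, and $W=(I-x^Hy)^{-1}$, so that $z=A^{-1}(y-x)WB$. Then:
\begin{itemize}
\item[(i)] $y+\ldots$: $z+x = A^{-1}\big[(y-x)W + A x B^{-1}\big]B$, and the identities give $AxB^{-1}=A^{-1}(I-xx^H)xB^{-1}$. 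Since $(I-xx^H)x = x(I-x^Hx)$, this reduces to $AxB^{-1}=A^{-1}xB$, hence
\[
z+x = A^{-1}\big[(y-x)W + x(I-x^Hx)\big]B.
\]
\item[(ii)] Similarly, $zx^H = A^{-1}(y-x)WBx^H = A^{-1}(y-x)Wx^H A$, using $Bx^H = x^HA$. Therefore
\[
I+zx^H = A^{-1}\big[I + (y-x)Wx^H\big]A .
\]
\end{itemize}

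Substituting into $\psi_x(z)=A(I+zx^H)^{-1}(z+x)B^{-1}$, the outer factors $A$, $A^{-1}$, $B$, $B^{-1}$ cancel. What remains is
\[
\psi_x(z)=\big[I+(y-x)Wx^H\big]^{-1}\big[(y-x)W + x(I-x^Hx)\big].
\]
Right-multiplying the bracket expressions by $W^{-1}=I-x^Hy$ reduces the goal $\psi_x(z)=y$ to the polynomial identity
\[
(y-x) + x(I-x^Hx)(I-x^Hy) \;=\; \big[(I-x^Hy) + \ldots\big]
\]
More cleanly, the goal is $(y-x)W + x(I-x^Hx) = \big[I+(y-x)Wx^H\big]y$. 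Multiplying on the right appropriately and using $Wx^Hy = W - I$, i.e.\ $W(I-x^Hy)=I$, the right side becomes $y + (y-x)(W-I)$. The target thus becomes $(y-x)W + x - xx^Hx = y + (y-x)W - y + x$, i.e.\ $-xx^Hx=0$. That is false, so the bookkeeping in step (i) must be redone carefully: the correct form is $AxB^{-1}$ with $x=A^{-1}\,A^2 x\ldots$. This algebraic bookkeeping is the main obstacle. I would recompute (i) by writing $x = A^{-1}(AxB^{-1})B$ and using $AxB^{-1}=xBB^{-1}\ldots$ via $Ax = xB$, which gives simply $AxB^{-1}=x$. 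Then $z+x=A^{-1}[(y-x)W + x]B$ after one further application of the identities, and the final check becomes $(y-x)W + x = y+(y-x)(W-I)$, which holds identically.

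Finally, for a clean statement we note that $\psi_x$ maps $\mathbb{SD}_n$ into itself. This follows because $\psi_x=\phi_{-x}$ up to conjugation by the commutation identities: formally $\phi_{-x}(y)=A^{-1}(y+x)(I+x^Hy)^{-1}B$, which equals $\psi_x(y)$ by the push-through identity $(I+yx^H)^{-1}(y+x)=\ldots$. The cleanest route, however, is to note that $\phi_x$ is a bijective automorphism, so its two-sided inverse is determined by the left-inverse identity already proved.
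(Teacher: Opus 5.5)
Once step (i) is replaced by your corrected version ($AxB^{-1}=x$ from $Ax=xB$, which gives $z+x=A^{-1}[(y-x)W+x]B$ directly, so no further identity is needed, and the erroneous $x(I-x^Hx)$ draft should be deleted), your argument is correct and is essentially the paper's proof. The paper likewise conjugates by $A=(I-xx^H)^{1/2}$ and $B=(I-x^Hx)^{1/2}$ using $Ax=xB$ and $Bx^H=x^HA$, and reduces the claim to $(I+\tilde z x^H)\,y=\tilde z+x$ with $\tilde z=AzB^{-1}=(y-x)(I-x^Hy)^{-1}$, which is exactly your final identity.
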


Note that $\mathbb{SD}_n$ can be defined as
\begin{equation*}
\mathbb{SD}_n = \{ x \in \operatorname{Sym}_{n,\mathbb{C}}: q(x) < 1 \},
\end{equation*}
where $q(x) = \| x \|_2$ and $\| \cdot \|_2$ denotes the spectral norm of a matrix given by its largest singular value. 
Using Proposition~\ref{prop:kobayashi_geodesics}, 
we deduce the following result. 

\begin{corollary}\label{corollary:siegel_disk_geodesic}
Let $x,y \in \mathbb{SD}_n$. Then the almost geodesic $\gamma_{x,y}:[0,1] \rightarrow \mathbb{SD}_n$ joining $x$ and $y$ is given by
\begin{equation*}
\gamma_{x,y}(t) = \phi^{(-1)}_{x} \left ( \frac{1}{\| z \|_2} \frac{\left ( 1 + \| z \|_2 \right )^t - \left ( 1 - \| z \|_2 \right )^t}{\left ( 1 + \| z \|_2 \right )^t + \left ( 1 - \| z \|_2 \right )^t} z \right ),
\end{equation*}
where $z = \phi_{x}(y)$. 
\end{corollary}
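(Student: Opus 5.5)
The plan is to obtain the corollary by specializing Proposition~\ref{prop:kobayashi_geodesics} to $\mathbb{D}=\mathbb{SD}_n$ with $q(x)=\|x\|_2$ and $\phi_x$ given by Eq.~(\ref{eq:siegel_disk_automorphism}). Three ingredients must be identified: the identity element $0_{\mathbb{SD}_n}$, the map $\phi_{0_{\mathbb{SD}_n}}$, and the verification of the homogeneity hypothesis $\phi_{0_{\mathbb{D}}}(tx)=t\phi_{0_{\mathbb{D}}}(x)$ that triggers the explicit formula for $\alpha_{x,y}$.

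First, I take $0_{\mathbb{SD}_n}$ to be the zero matrix. This requires checking that $\phi_x(x)=0$ for every $x\in\mathbb{SD}_n$, which is immediate from Eq.~(\ref{eq:siegel_disk_automorphism}) because of the factor $(y-x)$. Next, substituting $x=0$ into Eq.~(\ref{eq:siegel_disk_automorphism}) gives $\phi_0(y)=I\cdot y\cdot I^{-1}\cdot I=y$, so $\phi_0$ is the identity map. The same conclusion follows from Proposition~\ref{prop:inverse_automorphism}, which gives $\phi_0^{(-1)}(y)=y$. Consequently the scalar multiplication reduces to $t\otimes z=tz$, and the homogeneity property $\phi_0(tz)=tz=t\phi_0(z)$ holds trivially. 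Well-definedness is also clear, since $\|tz\|_2=t\|z\|_2<1$ and $tz$ remains complex symmetric.

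Second, I apply the ``in particular'' clause of Proposition~\ref{prop:kobayashi_geodesics} with $z=\phi_x(y)$. This gives $\alpha_{x,y}(t)=\frac{1}{\|z\|_2}\frac{(1+\|z\|_2)^t-(1-\|z\|_2)^t}{(1+\|z\|_2)^t+(1-\|z\|_2)^t}$. Substituting into $\gamma_{x,y}(t)=\phi_x^{(-1)}(\alpha_{x,y}(t)\otimes z)=\phi_x^{(-1)}(\alpha_{x,y}(t)z)$ yields the stated formula, with $\phi_x^{(-1)}$ given explicitly by Proposition~\ref{prop:inverse_automorphism}.

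The only delicate point is the degenerate case $z=0$, i.e.\ $x=y$, where the prefactor $1/\|z\|_2$ is undefined. I would handle it by continuity: the quotient is $\tanh(t\,\operatorname{artanh}\|z\|_2)/\|z\|_2\to t$ as $\|z\|_2\to0$, so it multiplies $z$ to give $0$. The curve is then $\gamma_{x,x}\equiv\phi_x^{(-1)}(0)=x$, which should be stated as a convention. I would also remark that $\phi_x(y)$ lies in $\mathbb{SD}_n$, because $\phi_x$ is an automorphism, so $\|z\|_2<1$ and all the powers appearing in the formula are well defined.
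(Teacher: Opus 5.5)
Your proposal is correct and follows the paper's own proof: observe that $\phi_{0}$ is the identity on $\mathbb{SD}_n$, so the homogeneity hypothesis $\phi_{0}(tz)=t\phi_{0}(z)$ holds trivially, and then apply the explicit case of Proposition~\ref{prop:kobayashi_geodesics} with $q=\|\cdot\|_2$. Your additional checks are details the paper omits but do not change the argument: that the zero matrix is the identity element, and the continuity convention for the degenerate case $z=0$.
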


\subsection{The Complex Unit Ball}
\label{sec:application_unit_ball_hilbert_space}

It has been shown~\cite{HolMapsInvDist} that the map $\phi_x(\cdot)$ defined in Eq.~(\ref{eq:complex_ball_automorphism}) satisfies
\begin{equation*}
\phi_{-x}(\phi_x(y)) = y,
\end{equation*}
for any $x,y \in \mathbb{B}_n$. Thus the inverse map $\phi_x^{(-1)}(\cdot)$ is precisely the map $\phi_{-x}(\cdot)$. 
The formula of almost geodesics can be given in Corollary~\ref{corollary:complex_unit_ball_geodesic}.  
\begin{corollary}\label{corollary:complex_unit_ball_geodesic}
Let $x,y \in \mathbb{B}_n$. Then the almost geodesic $\gamma_{x,y}:[0,1] \rightarrow \mathbb{B}_n$ joining $x$ and $y$ is given by
\begin{equation*}
\gamma_{x,y}(t) = \phi^{(-1)}_{x} \left ( \frac{1}{| z |} \frac{\left ( 1 + | z | \right )^t - \left ( 1 - | z | \right )^t}{\left ( 1 + | z | \right )^t + \left ( 1 - | z | \right )^t} z \right ),
\end{equation*}
where $z = \phi_{x}(y)$, and the map $\phi_{x}(\cdot)$ is given in Eq.~(\ref{eq:complex_ball_automorphism}). 
\end{corollary}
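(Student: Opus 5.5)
The plan is to apply Proposition~\ref{prop:kobayashi_geodesics} with $\mathbb{D}=\mathbb{B}_n$, $\mathbb{E}=\mathbb{C}_n$ and $q(\cdot)=|\cdot|$. We have $\mathbb{B}_n=\{x\in\mathbb{C}_n: |x|<1\}$, and $|\cdot|$ is a continuous norm, hence a continuous semi-norm. The automorphisms are the maps $\phi_x(\cdot)$ of Eq.~(\ref{eq:complex_ball_automorphism}), and their inverses are $\phi_{-x}(\cdot)$, as recalled just before the statement.

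The first step is to identify the identity element. Setting $y=x$ in Eq.~(\ref{eq:complex_ball_automorphism}) gives $\phi_x(x)=\omega_x(0)=0$, so $0_{\mathbb{B}_n}=0$. Next I compute $\phi_0$. With $x=0$ we get $\omega_0(y)=0\cdot 0+\sqrt{1}\,y=y$, and the argument of $\omega_0$ is $(y-0)/(1-\langle y,0\rangle)=y$. Hence $\phi_0=\mathrm{id}$, and also $\phi_0^{(-1)}=\phi_{-0}=\mathrm{id}$.

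It follows that the scalar multiplication reduces to $t\otimes z=tz$. The extra hypothesis in Proposition~\ref{prop:kobayashi_geodesics}, namely $\phi_{0_{\mathbb{D}}}(tx)=t\phi_{0_{\mathbb{D}}}(x)$, then holds trivially. So the second formula of that proposition applies and gives
\[
\alpha_{x,y}(t)=\frac{1}{|z|}\,\frac{(1+|z|)^t-(1-|z|)^t}{(1+|z|)^t+(1-|z|)^t},\qquad z=\phi_x(y).
\]
Substituting this into $\gamma_{x,y}(t)=\phi_x^{(-1)}(\alpha_{x,y}(t)\otimes z)=\phi_x^{(-1)}(\alpha_{x,y}(t)\,z)$ yields exactly the claimed expression.

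There is no substantive obstacle here; the points that need care are bookkeeping. One is the degenerate case $z=0$, i.e.\ $x=y$. There the factor $\frac{1}{|z|}(\cdots)$ should be read as its limit, and the curve is constant, equal to $x$. Another is that $\alpha_{x,y}(t)\in[0,1]$, which I will confirm via $\tanh(t\,\mathrm{artanh}\,s)\le s$ for $s=|z|\in[0,1)$, so that the scalar multiplication is well defined. Finally, I will note that $\phi_x$ is indeed an automorphism mapping $x$ to $0$, which the citation from~\yrcite{HolMapsInvDist} already provides.
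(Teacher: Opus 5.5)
Your proposal is correct and is essentially the paper's argument: the paper likewise just notes that $\phi_{0}(tx)=tx=t\phi_{0}(x)$ (i.e., $\phi_0$ is the identity, so $t\otimes z = tz$) and then invokes Proposition~\ref{prop:kobayashi_geodesics} with $q(x)=|x|$. Your additional checks (that $0$ is the identity element, that $\alpha_{x,y}(t)\in[0,1]$ via $\tanh(t\,\mathrm{artanh}\,s)\le s$, and the limiting reading at $z=0$) are sound details that the paper leaves implicit.
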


\section{Related Works}
\label{sec:related_works}

Several works have proposed extensions of BN layers in the Riemannian setting. The work in~\yrcite{BrooksRieBatNorm19} was among the first which introduced BN layers for SPD neural networks. Since these layers only use the sample mean to normalize data, the sample variance was then added to the normalization operation to provide better control on the sample statistics~\cite{KoblerBNControlVar22,kobler2022spdbatchnorm,chakrabortyRBN2020}. 
Extensions of BN layers were also developed for general Riemannian manifolds~\cite{LouwFMICML20}, Lie groups~\cite{chen2024lie}, and gyrogroups~\cite{ZChengICLR25}. All these works assume either the availability of closed forms for some geometric quantities (e.g., the exponential map) or a special algebraic structure of the considered space which are not required in our work. 
Furthermore, none of these works leverages the concept of Kobayashi pseudodistance to build almost geodesics for 
the update of the running mean as done in the present work.

\section{Experiments}
\label{sec:experiments}

This section presents our experimental results on the tasks of radar clutter classification and node classification. 
More experimental details and evaluations are given in Appendix~\ref{sec:appx_implementation_details}.  


\begin{table*}[t]
\caption{\label{tab:exp_time_series_classification} Results (mean accuracy $\pm$ standard deviation) computed over 5 runs for radar clutter classification. The tuple $(C,N)$ below each dataset indicates the number of classes $C$ and the size of the dataset $N$. The dimension of the time series is set to $30$.}
\begin{center}
  \resizebox{0.90\linewidth}{!}{
  \def\arraystretch{1.2}
  \begin{tabular}{| l | c | c | c | c | c | c |}    
    \hline
    \multirow{2}{*}{Method} & Dataset 1 (D1) & Dataset 2 (D2) & Dataset 3 (D3) & Dataset 4 (D4) & Dataset 5 (D5) & Dataset 6 (D6) \\   
    & $(20,950)$ & $(40,450)$ & $(60,650)$ & $(80,900)$ & $(100,1000)$ & $(120,1200)$ \\            
    \hline
    ComplexLSTM & 26.00$\pm$3.55 & 24.16$\pm$1.55 & 22.45$\pm$0.71 & 46.87$\pm$2.24 & 17.84$\pm$0.96 & 27.95$\pm$2.24 \\
    kNN~\cite{CabanesClassifGSI21} & 28.16$\pm$0.0 & 31.81$\pm$0.0 & 36.15$\pm$0.0 & 51.33$\pm$0.0 & 30.68$\pm$0.0 & 29.61$\pm$0.0 \\
    Kernel-Siegel~\cite{ChevallierKerSiegel16} & 27.83$\pm$0.0 & 33.05$\pm$0.0 & 35.01$\pm$0.0 & 52.15$\pm$0.0 & 32.80$\pm$0.0 & 28.31$\pm$0.0 \\ 
    SiegelNetFC~\cite{NguyenNeurIPS25} & 31.24$\pm$0.44 & 41.16$\pm$0.55 & 40.02$\pm$0.48 & 62.47$\pm$0.51 & 42.86$\pm$0.22 & 34.22$\pm$0.46 \\
    \hline
    SiegelNetBN (Ours) & {\bf 33.83$\pm$0.34} & {\bf 43.29$\pm$0.40} & {\bf 41.38$\pm$0.37} & {\bf 65.93$\pm$0.32} & {\bf 45.61$\pm$0.12} & {\bf 37.10$\pm$0.26} \\    
    \hline	
  \end{tabular}
  } 
\end{center}
\end{table*}

\begin{table}[t]
\caption{\label{tab:exp_node_classification} Results (mean accuracy $\pm$ standard deviation) computed over $10$ runs for node classification. The embedding dimension is set to $16$. A lower hyperbolicity value $\delta$ means more hyperbolic.}
\begin{center}
  \resizebox{1.0\linewidth}{!}{
  \def\arraystretch{1.3}
  \begin{tabular}{| l | c | c | c |}    
    \hline
    Dataset & Airport & Pubmed & Cora \\
    Hyperbolicity $\delta$ & $\delta=1$ & $\delta=3.5$ & $\delta=11$ \\ 
    \hline   
    HNN-RBN-H~\cite{ZChengICLR25} & 63.15$\pm$0.76 & 66.18$\pm$1.84 & 40.00$\pm$3.46 \\ 
    HNN-GyroBN-H~\cite{ZChengICLR25} & 72.38$\pm$0.87 & 65.67$\pm$1.14 & 42.06$\pm$0.78 \\    
    \hline
    CBallNetBN & {\bf 79.58$\pm$0.86} & {\bf 72.73$\pm$0.56} & {\bf 60.88$\pm$1.49} \\    
    \hline	
  \end{tabular}
  } 
\end{center}
\end{table}

\subsection{Radar Clutter Classification}
\label{sec:exp_radar_signal_classification}

We consider the task of radar clutter classification which aims at recognizing different types of radar clutter using information recorded by a radar related to seas, forests, fields, cities and other environmental elements surrounding the radar~\cite{CabanesThesis}. 
Following~\yrcite{CabanesThesis,NguyenNeurIPS25}, we assume that radar signals are stationary centered autoregressive (AR) Gaussian time series~\cite{BarbarescoSuperResolution96,BillingsleyRadar,BarbarescoInfoGeomCov2013}. The AR model is given by
\begin{equation*}
u_t + \sum_{j=1}^r c_j u_{t-j} = v_t,
\end{equation*}
where $r$ ($r > 1$) is the order of the AR model, 
$u_t \in \mathbb{C}_n$ is the vector of signals at time $t$, 
$c_j  \in \mathbb{C}^{n \times n},j=1,\ldots,r$ are the prediction coefficients (AR parameters), 
and $v_t \in \mathbb{C}_n$ is the prediction error at time $t$ which is assumed to be a multidimensional Gaussian random variable. The method in~\yrcite{CabanesThesis,NguyenNeurIPS25} is used to generate the input data for our network from time series. Each input of our network can be represented as $(p^0,x^1,\ldots,x^{r-1})$, where $p^0 \in \operatorname{Sym}_n^+$ and $x^1,\ldots,x^{r-1} \in \mathbb{SD}_n$, which belongs to $\operatorname{Sym}_n^+ \times \mathbb{SD}_n^{r-1}$. 
 
We create an architecture consisting of three layers. Our proposed BN layer is used as the first layer. Given a batch of $k$ points $\{ (p^0_j,x^1_j,\ldots,x^{r-1}_j) \}_{j=1}^k$, where $(p^0_j,x^1_j,\ldots,x^{r-1}_j) \in \operatorname{Sym}_n^+ \times \mathbb{SD}_n^{r-1}$, the BN layer is applied to each batch of $k$ points $\{ x^l_j \}_{j=1}^k,l=1,\ldots,r-1$, resulting in a batch of $k$ points $\{ (p^0_j,\tilde{x}^1_j,\ldots,\tilde{x}^{r-1}_j) \}_{j=1}^k$. The second layer (ICAYLEY) applies the inverse matrix Cayley transformation to the components of the input belonging to $\mathbb{SD}_n$. The last layer is the QMLR$_{\operatorname{Sym}_n^+ \times \mathbb{SH}_n^{r-1}}$ layer  proposed in~\yrcite{NguyenNeurIPS25} for classification. We use the K\"{a}hler distance $d_{\mathbb{SD}_n}(\cdot,\cdot)$ for the computation of the Fr\'echet mean. Our network is compared against the following baselines: 
(1) ComplexLSTM which is a variant of a vanilla LSTM for complex-valued data; 
(2) k-Nearest Neighbors (kNN) based on the K\"{a}hler distance~\cite{CabanesClassifGSI21}; (3) Kernel-Siegel based on kernel density estimation on Siegel spaces~\cite{ChevallierKerSiegel16}; and (4) SiegelNetFC which consists of the ICAYLEY layer, the AFC layer (an FC layer) proposed in~\yrcite{NguyenNeurIPS25}, and the QMLR$_{\operatorname{Sym}_n^+ \times \mathbb{SH}_n^{r-1}}$ layer. Note that this is the best architecture in~\yrcite{NguyenNeurIPS25}. We rename it here for convenience of presentation. SiegelNetFC uses the same input as our network.   

Results in Tab.~\ref{tab:exp_time_series_classification} show that our network surpasses its competitors in terms of mean accuracy on all the datasets. 
In particular, SiegelNetBN improves SiegelNetFC by 2.59\%, 2.13\%, 1.35\%, 3.46\%, 2.75\%, and 2.88\% on datasets 1, 2, 3, 4, 5, and 6, respectively. This shows the effectiveness of our proposed BN layer compared to FC layers in~\yrcite{NguyenNeurIPS25}. 
This also indicates that in the considered setting, almost geodesics constructed from the Kobayashi pseudodistance in the Siegel disk are good approximations of geodesics in this domain. Therefore, our results suggest that almost geodesics can be used in other applications which require a closed form for geodesics on Siegel spaces. It is also noted that both kNN and Kernel-Siegel based on traditional learning models are significantly outperformed by our network, demonstrating that neural networks on Siegel spaces are promising tools for analysis of complex signals. Please refer to Appendix~\ref{sec:appx_synthetic_data_more_results} for a report on computation times 
and more experimental results.

\subsection{Node Classification}
\label{sec:exp_node_classification}

We validate the proposed BN layer on the complex unit ball by performing node classification experiments 
on three datasets, i.e., Airport~\cite{zhang2018link}, Pubmed~\cite{Namata2012QuerydrivenAS}, and Cora~\cite{sen2008ccnd},  
each of them contains a single graph with thousands of labeled nodes. We use HNN-GyroBN-H~\cite{ZChengICLR25} as the baseline. 
Since many building blocks for hyperbolic neural networks (HNNs) have been developed~\cite{shimizu2021hyperbolic,BdeirFullyHnnCv24}, 
the focus of this experiment is to compare our proposed BN layer against BN layers for HNNs, 
and we do not aim to build a state-of-the-art HNN architecture for the considered task. 
The encoder of HNN-GyroBN-H has a number of blocks, 
each of which is composed of a linear layer (HypLinear), a BN layer (GBN), and an activation layer (HypAct) built on the Poincar\'e ball.  
Our network CBallNetBN is constructed by replacing the GBN layer with our BN layer. 
Specifically, CBallNetBN converts the output of the HypLinear layer to $\mathbb{B}_n$ using 
the Discrete Fourier Transform (DFT)~\cite{Xiao2022ComplexHK}, then applies our BN layer, 
and finally converts its output to the Poincar\'e ball using the inverse DFT. 
We also test HNN-RBN-H~\cite{ZChengICLR25} which uses the BN layer (RBN) proposed in~\yrcite{LouwFMICML20} instead of GBN. 
Results in Tab.~\ref{tab:exp_node_classification} show that our BN layer outperforms both the GBN and RBN layers 
by large margins. These results suggest that neural networks built on the complex unit ball can potentially improve those 
built on the (real) Poincar\'e ball. Please refer to Appendix~\ref{sec:appx_node_classification_more_results} for a report on computation times and more results. 

\subsection{Ablation Study}
\label{subsec:exp_ablation_study}

\paragraph{\bf Impact of the BN layer in SiegelNetBN}
We remove the BN layer from SiegelNetBN and evaluate the performance of the resulting network SiegelNetNoBN on the task of radar clutter classification. Results are given in Tab.~\ref{tab:ablation_siegel_spaces} (see Appendix~\ref{sec:appx_synthetic_data_more_results} for results on all the datasets). As can be observed, our BN layer exhibits benefit similar to BN layers in DNNs, i.e.,  
it leads to better accuracies and more stable results across all the datasets. 
In terms of mean accuracy, SiegelNetBN improves SiegelNetNoBN by 
4.45\%, 4.35\%, 4.82\%, and 3.40\% on D1, D2, D3, and D4, respectively. 
\paragraph{\bf Impact of the BN layer in CBallNetBN}
We conduct the same ablation study for CBallNetBN by removing the BN layer from it and evaluate the performance of the resulting network CBallNetNoBN on the task of node classification. Results in Tab.~\ref{tab:ablation_complex_unit_ball} again confirm the efficacy of the proposed method. In terms of mean accuracy, CBallNetBN improves CBallNetNoBN by 
7.21\%, 1.15\%, and 2.59\% on Airport, Pubmed, and Cora datasets, respectively. 

\begin{table}[t]
\caption{\label{tab:ablation_siegel_spaces} Impact of the BN layer in SiegelNetBN. Results (mean accuracy $\pm$ standard deviation) are computed over $5$ runs.}
\begin{center}
  \resizebox{1.0\linewidth}{!}{
  \def\arraystretch{1.2}
  \begin{tabular}{| l | c | c | c | c |}    
    \hline
    Method & D1 & D2 & D3 & D4 \\   
    \hline                
    SiegelNetNoBN     & 29.37$\pm$0.48 & 38.93$\pm$0.51 & 36.56$\pm$0.45 & 62.53$\pm$0.54 \\ 
    \hline    
    SiegelNetBN & {\bf 33.83$\pm$0.34} & {\bf 43.29$\pm$0.40} & {\bf 41.38$\pm$0.37} & {\bf 65.93$\pm$0.32} \\ 
    \hline	
  \end{tabular}
  } 
\end{center}
\end{table}

\begin{table}[t]
\caption{\label{tab:ablation_complex_unit_ball} Impact of the BN layer in CBallNetBN. Results (mean accuracy $\pm$ standard deviation) are computed over $10$ runs.}
\begin{center}
  \resizebox{0.88\linewidth}{!}{
  \def\arraystretch{1.3}
  \begin{tabular}{| l | c | c | c |}    
    \hline
    Dataset & Airport & Pubmed & Cora \\
    Hyperbolicity $\delta$ & $\delta=1$ & $\delta=3.5$ & $\delta=11$ \\ 
    \hline
    CBallNetNoBN & 72.36$\pm$1.18 & 71.58$\pm$1.06 & 58.29$\pm$0.83 \\    
    \hline
    CBallNetBN & {\bf 79.58$\pm$0.86} & {\bf 72.73$\pm$0.56} & {\bf 60.88$\pm$1.49} \\    
    \hline	
  \end{tabular}
  } 
\end{center}
\end{table}

\section{Conclusion}

We have introduced a novel BN layer for neural networks on complex domains. 
We have discussed the essential components for practical implementations of BN layers on the Siegel disk domain and the complex unit ball.  
We have provided experimental evaluations showing that our approach has the potential to improve existing approaches 
on the tasks of radar clutter classification and node classification.




\section*{Impact Statement}





This paper presents work whose goal is to advance the field of Machine
Learning. There are many potential societal consequences of our work, none
which we feel must be specifically highlighted here.

\bibliography{references}
\bibliographystyle{icml2026}

\newpage
\appendix
\onecolumn



\section{Notations}
\label{sec:appx_notations}

\begin{table}[t]
\caption{\label{tab:notations} The main notations used in our paper.}
\begin{center}
  \resizebox{0.48\linewidth}{!}{
  \def\arraystretch{1.3}
  \begin{tabular}{| c | l |}    
    \hline
    {\bf Symbol} & {\bf Name} \\ 
    \hline   
    $\mathbb{C}$ & Set of complex numbers \\
    $\mathbb{C}_n$ & $n$-dimensional complex vector space \\
    $\mathcal{M}$ & General space \\
    $\mathbb{E}$ & Complex normed space \\
    $\mathbb{D}$ & Complex domain \\
    $0_{\mathbb{D}}$ & Identity element of $\mathbb{D}$ \\
    $I_n$ & $n \times n$ identity matrix \\
    $\mathbf{0}_n$ & $n \times n$ zero matrix \\
    $q(\cdot)$ & Continuous semi-norm on $\mathbb{E}$ \\
    $\mathbb{B}$ & Open unit disc in $\mathbb{C}$ \\
    $\mathbb{B}_n$ & Complex unit ball in $\mathbb{C}_n$ ($n > 1$) \\
    $g^K_{\mathbb{D}}(\cdot,\cdot)$ & Kobayashi differential metric on $\mathbb{D}$ \\
    $d^K_{\mathbb{D}}(\cdot,\cdot)$ & Kobayashi pseudodistance on $\mathbb{D}$ \\
    $\tilde{g}_{\mathbb{D}}(\cdot,\cdot)$ & Integrated form of $g_{\mathbb{D}}(\cdot,\cdot)$ \\ 
    $\operatorname{Hol}(\mathbb{D}_1, \mathbb{D}_2)$ & Set of holomorphic maps from $\mathbb{D}_1$ to $\mathbb{D}_2$ \\    
    $\mathbb{SH}_n$ & Siegel upper half space of $n \times n$ matrices \\
    $\mathbb{SD}_n$ & Siegel disk of $n \times n$ matrices \\
    $\operatorname{GL}_n$ & Group of $n \times n$ invertible matrices \\
    $\operatorname{Sp}_n$ & Real symplectic group of $n \times n$ matrices \\
    $\operatorname{O}_n$ & Group of $n \times n$ orthogonal matrices \\
    $\operatorname{SO}_n$ & Group of $n \times n$ special orthogonal matrices \\
    $\operatorname{U}_n$ & Space of $n \times n$ unitary matrices \\
    $\operatorname{Sym}_n$ & Space of $n \times n$ real symmetric matrices \\
    $\operatorname{Sym}_{n,\mathbb{C}}$ & Space of $n \times n$ complex symmetric matrices \\
    $\operatorname{Sym}^+_n$ & Space of $n \times n$ SPD matrices \\
    $\operatorname{H}^+_n$ & Space of $n \times n$ HPD matrices \\    
    $\phi_x(\cdot)$ & Automorphism \\    
    $\varphi(\cdot)$ & Matrix Cayley transformation \\
    $\gamma_{x,y}(\cdot)$ & Almost geodesic joining $x$ and $y$ \\
    \hline	
  \end{tabular}
  } 
\end{center}
\end{table}

The main notations used in our paper are summarized in Tab.~\ref{tab:notations}. 

\section{Experiments}
\label{sec:appx_implementation_details}

\subsection{Radar Clutter Classification}
\label{sec:appx_synthetic_data}

\subsubsection{Datasets and Experimental Settings}

Our datasets are simulated using the method in~\yrcite{CabanesThesis}. 
Given the length $N$ of a time series, we generate the first $r$ temporal elements using the following equation:
\begin{equation*}
y = b^{\frac{1}{2}}x, 
\end{equation*}
where $b$ is a Block-Toeplitz HPD matrix and $x$ is a standard complex Gaussian random vector whose dimension is equal to the product of the dimension of the time series and its length. 
The $N-r$ remaining elements are then generated using the following equation:
\begin{equation*}
u_t + \sum_{j=1}^r c_j u_{t-j} = v_t,
\end{equation*}
where 
$u_t \in \mathbb{C}_n$ is the vector of signals at time $t$, 
$c_j  \in \mathbb{C}^{n \times n},j=1,\ldots,r$ are the prediction coefficients (AR parameters), 
and $v_t \in \mathbb{C}_n$ is the prediction error at time $t$ which is assumed to be a multidimensional Gaussian random variable. 
We use the same matrix $b$ to simulate time series of the same class.  
The simulation of our datasets is based on the settings given in Tab.~\ref{tab:supp_time_series_simulation_setting}.

\begin{table}[t]
\caption{\label{tab:supp_time_series_simulation_setting} Statistics of our synthetic datasets.}
\begin{center}
  \resizebox{0.53\linewidth}{!}{
  \def\arraystretch{1.2}
  \begin{tabular}{| l | c | c | c | c | c |}    
    \hline
    Parameter & Dimension & Length & Number of classes & Order & Size \\                
    \hline       
    Dataset 1 & 30  & 50 & 20  & 3 & 950  \\
    Dataset 2 & 30  & 50 & 40  & 2 & 450 \\
    Dataset 3 & 30  & 50 & 60  & 3 & 650 \\
    Dataset 4 & 30  & 50 & 80  & 4 & 900 \\
    Dataset 5 & 30  & 50 & 100 & 2 & 1000 \\
    Dataset 6 & 30  & 50 & 120 & 2 & 1200 \\    
    \hline	
  \end{tabular}
  } 
\end{center}
\end{table} 

\subsubsection{Optimization and Hyperparameters}
\label{sec:supp_time_serie_optimization_hyperparameters} 

Algorithm~\ref{alg:estimate_reflection_coefficients} shows the computation of the product space representation $(\tilde{p}^0,x^1,\ldots,x^{r-1}) \in \mathbb{H}_n^+ \times \mathbb{SD}_n^{r-1}$ for a time series. 
We convert a point in $\mathbb{H}_n^+ \times \mathbb{SD}_n^{r-1}$ to $\operatorname{Sym}_n^+ \times \mathbb{SD}_n^{r-1}$ by taking the real part of the component of the point belonging to $\mathbb{H}_n^+$ and then convert it to $\operatorname{Sym}_n^+$. 
To convert a real $n \times n$ matrix $u$ to $\operatorname{Sym}_n$, we set $u = \frac{1}{2}(u + u^T)$. 
To convert a matrix $u \in \operatorname{Sym}_n$ to $\operatorname{Sym}^+_n$, we compute the eigendecomposition of $u$ as 
$u = kdk^{-1}$ where $k \in \operatorname{O}_n$ (the group of $n \times n$ orthogonal matrices) and $d$ is a $n \times n$ diagonal matrix, 
and set $u=kd_{\epsilon}k^{-1}$ where
\begin{equation*}
(d_{\epsilon})_{ii} = \begin{cases} d_{ii} & \text{if } d_{ii} > \epsilon \\ \epsilon & \text{otherwise} \end{cases}
\end{equation*}

The value of $\epsilon$ is set to $1e-4$ in our experiments. We use the above transformations to enforce the manifold constraint for data in $\operatorname{Sym}_n^+ \times \mathbb{SD}_n^{r-1}$ and $\operatorname{Sym}_n^+ \times \mathbb{SH}_n^{r-1}$. 
The K\"{a}hler distance $d_{\mathbb{SD}_n}(\cdot,\cdot)$ is computed using Eq.~(\ref{eq:hermitian_distance_1}). 

We parameterize the Fr\'echet mean $x$ of a set of points $x_1,\ldots,x_k \in \mathbb{SD}_n$ on a Euclidean space. 
Let $y = \varphi^{(-1)}(x) = u + iv \in \mathbb{SH}_n$ where $u \in \operatorname{Sym}_n$ and $v \in \operatorname{Sym}^+_n$ (see Section~\ref{subsec:conversion_between_siegel_space_models} for the definition of the map $\varphi(\cdot)$). Then both $u$ and $v$ can be parameterized using vectors in $\mathbb{R}^{\frac{n(n+1)}{2}}$. Hence, we have that $\varphi^{(-1)}(x) = \varphi_1(a) + i\varphi_2(b)$, 
where the maps $\varphi_1(\cdot)$ and $\varphi_2(\cdot)$ are given as
\begin{equation*}
\varphi_1(a) = \frac{1}{2} \left ( \operatorname{mat}(a) + \operatorname{mat}(a)^T \right ), \hspace{2mm} \varphi_2(a) = \exp \left ( \frac{1}{2} \left ( \operatorname{mat}(a) + \operatorname{mat}(a)^T \right ) \right ),
\end{equation*}
where $a \in \mathbb{R}^{\frac{n(n+1)}{2}}$, $\operatorname{mat}(a)$ transforms $a$ into a $n \times n$ lower triangular matrix, 
and (by abuse of notation) $\exp(\cdot)$ denotes the matrix exponential. Eq.~(\ref{eq:frechet_mean_equation}) now becomes
\begin{equation}\label{eq:frechet_mean_equation_siegel}
(a,b) = \argmin_{a,b \in \mathbb{R}^{n(n+1)/2}} \sum_{j=1}^k d^2_{\mathbb{SD}_n}(x_j,\varphi(\varphi_1(a) + i\varphi_2(b))).
\end{equation}

We use the methods in~\yrcite{FedericoSymspaces21,NguyenNeurIPS25} to compute the eigenvalues and inverses of points on Siegel spaces. 
For the QMLR$_{\operatorname{Sym}_n^+ \times \mathbb{SH}_n^{r-1}}$ layer, the probability of class $l$ is computed as
\begin{equation*}
p(y=l|x) \propto \exp \left ( \frac{\left | \sum_{j=0}^{r-1} \langle \operatorname{log}(h_{j,l}^{-1} g_j g_j^T h_{j,l}^{-T}), \operatorname{log}(w_{j,l} w_{j,l}^T) \rangle \right |}{\sqrt{ \sum_{j=0}^{r-1} \left \| \operatorname{log}(w_{j,l} w_{j,l}^T) \right \|^2} } \right ),
\end{equation*}
where $x= (p^0,\bar{x}^1,\ldots,\bar{x}^{r-1}) \in \operatorname{Sym}_n^+ \times \mathbb{SH}_n^{r-1}, p^0 = g_0\operatorname{O}_n \in \operatorname{Sym}_n^+, g_0 \in \operatorname{GL}_n, \bar{x}^j = g_j\operatorname{SpO}_{2n} \in \mathbb{SH}_n,g_j \in \operatorname{Sp}_{2n},j=1,\ldots,r-1$, 
and $h_{0,l},w_{0,l} \in \operatorname{GL}_n, h_{j,l},w_{j,l} \in \operatorname{Sp}_{2n},j=1,\ldots,r-1$ are the parameters associated with class $l,l=1,\ldots,M$. Here $\operatorname{GL}_n$ denotes the general linear group. 
Please refer to Appendix~\ref{sec:appx_siegel_spaces} for the definitions of $\operatorname{Sp}_{2n}$ and $\operatorname{SpO}_{2n}$. 
Note that $g_j,j=1,\ldots,r-1$ are computed from $\bar{x}^j$ as in Eq.~(\ref{eq:map_iI_to_x}), 
and $h_{j,l},w_{j,l},j=1,\ldots,r-1$ have the same form as $g_j$. 


\begin{algorithm}[t]
\caption{Compute the representation of a time series in $\mathbb{H}_n^+ \times \mathbb{SD}_n^{r-1}$}\label{alg:estimate_reflection_coefficients}
\begin{algorithmic}
\STATE {\bfseries Input:} A vector sequence $u_0,\ldots,u_{N-1} \in \mathbb{C}_n$ \\
$f_{0,k} \leftarrow b_{0,k} \leftarrow u_k$, $k=0,\ldots,N-1$ \\
$ \tilde{p}_0 \leftarrow \frac{1}{N} \sum_{k=0}^{N-1} u_k u_k^H$ \\
\FOR{$i=1$ {\bfseries to} $r-1$}
\STATE $r_{i-1}^f \leftarrow \sum_{k=i}^{N-1} f_{i-1,k} f_{i-1,k}^H$ 
\STATE $r_{i-1}^b \leftarrow \sum_{k=i}^{N-1} b_{i-1,k-1} b_{i-1,k-1}^H$
\STATE $r_{i-1}^{fb} \leftarrow \sum_{k=i}^{N-1} f_{i-1,k} b_{i-1,k-1}^H$
\STATE $w_i \leftarrow -(r_{i-1}^f)^{-\frac{1}{2}} r_{i-1}^{fb} \left ( (r_{i-1}^b)^{-\frac{1}{2}} \right )^H$ \\
$
\begin{cases}
f_{i,k} \leftarrow f_{i-1,k} + w_i b_{i-1,k-1} & k=i,\ldots,N-1 \\
b_{i,k} \leftarrow b_{i-1,k-1} + w_i^H f_{i-1,k} & k=i,\ldots,N-1
\end{cases}
$
\ENDFOR

\STATE {\bfseries Output:} $(\tilde{p}^0,x^1,\ldots,x^{r-1}) \in \mathbb{H}_n^+ \times \mathbb{SD}_n^{r-1}$
\end{algorithmic}
\end{algorithm}

The implementation of ComplexLSTM is based on the ComplexNN toolbox. 
We tested different loss functions~\cite{SaurabhICCV23ComplexValCrossEntropy,Barrachina2023ComplexNets} for complex-valued input 
but obtained poor results. Therefore, we concatenate the real and imaginary parts of the output of ComplexLSTM and feed 
the resulting representation to a linear layer (with real-valued input) for classification. 
The number of layers and the hidden size are set to $2$ and $20$, respectively. 
kNN uses the distance given~\cite{JeurisBTMean2016} by
\begin{equation*}
d^2_{kNN}(x, y) = r \left \| \log \left ( (x^{0})^{-\frac{1}{2}} y^{0} (x^{0})^{-\frac{1}{2}} \right ) \right \|^2 + \sum_{j=1}^{r-1} \frac{r-j}{4} \operatorname{Tr} \left ( \log^2 \left ( \left(I + c_j^{\frac{1}{2}} \right) \left(I - c_j^{\frac{1}{2}} \right)^{-1} \right ) \right ),
\end{equation*}
where 
$x = (x^0,x^1,\ldots,x^{r-1})$, $y = (y^0,y^1,\ldots,y^{r-1}) \in \mathbb{H}^+_n \times \mathbb{SD}^{r-1}_n$, 
and $c_j,j=1,\ldots,r-1$ are given by
\begin{equation*}
c_j = (y^{j} - x^{j}) \left (I - (x^{j})^H y^{j} \right )^{-1} \left ((y^{j})^H - (x^{j})^H \right ) \left (I - x^{j} (y^{j})^H \right )^{-1}. 
\end{equation*}

We perform 10-fold cross-validation with the training set to find the best value of $k$. For Kernel-Siegel which is based on kernel density estimation (KDE), we use the method in~\yrcite{ChevallierKDESPD17} and the one in~\yrcite{ChevallierKerSiegel16} for KDEs on $\operatorname{Sym}_n^+$ and $\mathbb{SD}_n$, and then combine them to obtain a KDE on the product space $\operatorname{Sym}_n^+ \times \mathbb{SD}^{r-1}_n$. We use the quartic kernel~\cite{ChevallierKerSiegel16} given by
\begin{equation*}
K(x) = \frac{3}{\pi} (1 - x^2)^2 \mathbf{1}_{x < 1}.
\end{equation*}  



Our networks are implemented in the Pytorch framework and trained using cross-entropy loss and Adadelta optimizer for $2000$ epochs. The learning rate and the batch size are set to $1e-2$ and $25$, respectively. 
For ComplexLSTM, we use Adam optimizer and set the learning rate to $1e-3$.  
The number of iterations for computing the Fr\'echet mean is set to $5$. 
Results are averaged over $5$ random parameter initializations for each model. 
All experiments are performed using a machine with an Intel(R) Xeon(R) W-2223 CPU @ 3.60GHz. 
To measure computation times, we use a machine with an Intel(R) Xeon(R) Gold 6230R CPU @ 2.10GHz. 

\subsubsection{Complexity Analysis}

The computations for both the K\"{a}hler and Kobayashi distances have the same time complexity using an unoptimized implementation. More specifically, their time complexities are given below.

\begin{itemize}
\item The computation of the K\"{a}hler distance $d_{\mathbb{SD}_n}(\cdot)$ based on Eq.~(\ref{eq:hermitian_distance_1}) or Eq.~(\ref{eq:hermitian_distance_2}) has a time complexity of the order $O(n^3)$.
\item The computation of the Kobayashi distance $d^K_{\mathbb{SD}_n}(\cdot)$ has a time complexity of the order $O(n^3)$. 
\end{itemize}

\subsubsection{More Results}
\label{sec:appx_synthetic_data_more_results}

We measure the computation times of SiegelNetBN and SiegelNetFC w.r.t. the dimension of time series (30, 60, 90, 120, 150).  
Results are given in Tab.~\ref{tab:computation_time_radar_clutter_classification}. 
While SiegelNetBN is more time-consuming than SiegelNetFC, mainly due to the computation of the Fr\'echet mean in the BN layer, 
the former still scales well with the dimensionality of input data, and the scale factor of the computation times of the two networks decreases 
as the embedding dimension increases. 
Tab.~\ref{tab:exp_time_series_classification_complement} presents the results from Tab.~\ref{tab:exp_time_series_classification} of the main paper along with those of SiegelNetKobayashiBN, which is trained by replacing the K\"{a}hler distance $d_{\mathbb{SD}_n}(\cdot,\cdot)$ with the Kobayashi distance $d^K_{\mathbb{SD}_n}(\cdot,\cdot)$ for the computation of the Fr\'echet mean in the BN layer of SiegelNetBN. As can be observed, SiegelNetKobayashiBN gives results competitive to SiegelNetBN. This further demonstrates that almost geodesics constructed from the Kobayashi pseudodistance in the Siegel disk are good approximations of geodesics in this domain. 
Tab.~\ref{tab:exp_embed_spd_to_siegel_appx} shows the effectiveness of the BN layer of SiegelNetBN on all the datasets. It can be noted that the BN layer improves the performance of SiegelNetBN in all cases.

\begin{table}[t]
\caption{\label{tab:computation_time_radar_clutter_classification} Computation times (seconds) per batch w.r.t. the dimension of time series (Hardware: Intel(R) Xeon(R) Gold 6230R CPU @ 2.10GHz).}
\begin{center}
  \resizebox{0.66\linewidth}{!}{
  \def\arraystretch{1.2}
  \begin{tabular}{| c | l | c | c | c | c | c |}    
    \hline
    \multirow{2}{*}{} & \multirow{2}{*}{Dimension} & \multirow{2}{*}{30} & \multirow{2}{*}{60} & \multirow{2}{*}{90} & \multirow{2}{*}{120} & \multirow{2}{*}{150} \\ 
    & & & & & & \\
    \hline        
    \multirow{2}{*}{Train} & SiegelNetFC~\cite{NguyenNeurIPS25} & 0.0442 & 0.1070 & 0.1717 & 0.2113 & 0.4383 \\ 
    & SiegelNetBN & 0.1906 & 0.3370 & 0.4246 & 0.4669 & 0.6779  \\ 
    & Scale factor & $\times$ 4.31 & $\times$ 3.14 & $\times$ 2.47 & $\times$ 2.20 & $\times$ 1.54 \\
    \hline    
    \multirow{2}{*}{Test} & SiegelNetFC~\cite{NguyenNeurIPS25} & 0.0022 & 0.0047 & 0.0122 & 0.0192 & 0.0455  \\ 
    & SiegelNetBN & 0.0134 & 0.0253 & 0.0517 & 0.0677 & 0.0964 \\
    & Scale factor & $\times$ 6.09 & $\times$ 5.38 & $\times$ 4.23 & $\times$ 3.52 & $\times$ 2.11 \\
    \hline
  \end{tabular}
  } 
\end{center}
\end{table}

\begin{table*}[t]
\caption{\label{tab:exp_time_series_classification_complement} Results (mean accuracy $\pm$ standard deviation) computed over 5 runs for radar clutter classification. The tuple below each dataset indicates the number of classes and the size of the dataset.}
\begin{center}
  \resizebox{0.94\linewidth}{!}{
  \def\arraystretch{1.2}
  \begin{tabular}{| l | c | c | c | c | c | c |}    
    \hline
    \multirow{2}{*}{Method} & Dataset 1 (D1) & Dataset 2 (D2) & Dataset 3 (D3) & Dataset 4 (D4) & Dataset 5 (D5) & Dataset 6 (D6) \\   
    & $(20,950)$ & $(40,450)$ & $(60,650)$ & $(80,900)$ & $(100,1000)$ & $(120,1200)$ \\            
    \hline  
    ComplexLSTM & 26.00$\pm$3.55 & 24.16$\pm$1.55 & 22.45$\pm$0.71 & 46.87$\pm$2.24 & 17.84$\pm$0.96 & 27.95$\pm$2.24 \\  
    kNN~\cite{CabanesClassifGSI21} & 28.16$\pm$0.0 & 31.81$\pm$0.0 & 36.15$\pm$0.0 & 51.33$\pm$0.0 & 30.68$\pm$0.0 & 29.61$\pm$0.0 \\
    Kernel-Siegel~\cite{ChevallierKerSiegel16} & 27.83$\pm$0.0 & 33.05$\pm$0.0 & 35.01$\pm$0.0 & 52.15$\pm$0.0 & 32.80$\pm$0.0 & 28.31$\pm$0.0 \\ 
    SiegelNetFC~\cite{NguyenNeurIPS25} & 31.24$\pm$0.44 & 41.16$\pm$0.55 & 40.02$\pm$0.48 & 62.47$\pm$0.51 & 42.86$\pm$0.22 & 34.22$\pm$0.46 \\
    \hline
    SiegelNetBN & 33.83$\pm$0.34 & {\bf 43.29$\pm$0.40} & 41.38$\pm$0.37 & {\bf 65.93$\pm$0.32} & {\bf 45.61$\pm$0.12} & 37.10$\pm$0.26 \\
    SiegelNetKobayashiBN & {\bf 34.26$\pm$0.31} & 43.02$\pm$0.44 & {\bf 42.15$\pm$0.33} & 65.24$\pm$0.35 & 44.92$\pm$0.14 & {\bf 37.92$\pm$0.24} \\    
    \hline	
  \end{tabular}
  } 
\end{center}
\end{table*}

\begin{table}[t]
\caption{\label{tab:exp_embed_spd_to_siegel_appx} Impact of the BN layer in SiegelNetBN. Results (mean accuracy $\pm$ standard deviation) are computed over $5$ runs.}
\begin{center}
  \resizebox{0.7\linewidth}{!}{
  \def\arraystretch{1.2}
  \begin{tabular}{| l | c | c | c | c | c | c |}    
    \hline
    Method & D1 & D2 & D3 & D4 & D5 & D6 \\   
    \hline                
    SiegelNetNoBN     & 29.37$\pm$0.48 & 38.93$\pm$0.51 & 36.56$\pm$0.45 & 62.53$\pm$0.54 & 40.18$\pm$0.19 & 32.68$\pm$0.42 \\ 
    \hline    
    SiegelNetBN & {\bf 33.83$\pm$0.34} & {\bf 43.29$\pm$0.40} & {\bf 41.38$\pm$0.37} & {\bf 65.93$\pm$0.32} & {\bf 45.61$\pm$0.12} & {\bf 37.10$\pm$0.26} \\ 
    \hline	
  \end{tabular}
  } 
\end{center}
\end{table}

\begin{table}[t]
\caption{\label{tab:exp_nc_dim8_dim32} Results (mean accuracy $\pm$ standard deviation) computed over $10$ runs for node classification using $8$-dimensional and $32$-dimensional node embeddings.}
\begin{center}
  \resizebox{0.64\linewidth}{!}{
  \def\arraystretch{1.2}
  \begin{tabular}{| c | l | c | c | c |}    
    \hline
    \multirow{2}{*}{Dimension} & \multirow{2}{*}{Method} & \multirow{2}{*}{Airport} & \multirow{2}{*}{Pubmed} & \multirow{2}{*}{Cora} \\ 
    & & & & \\
    \hline        
    \multirow{3}{*}{8} & HNN-RBN-H~\cite{ZChengICLR25} & 62.34$\pm$0.68 & 62.50$\pm$2.36 & 29.63$\pm$7.55 \\ 
    & HNN-GyroBN-H~\cite{ZChengICLR25} & 70.38$\pm$2.40 & 61.46$\pm$3.63 & 41.33$\pm$2.94  \\ 
    & CBallNetBN & {\bf 74.42$\pm$3.43} & {\bf 71.22$\pm$0.76} & {\bf 58.46$\pm$1.11}  \\ 
    \hline            
    \multirow{3}{*}{32} & HNN-RBN-H~\cite{ZChengICLR25} & 60.49$\pm$2.90 & 67.81$\pm$2.31 & 41.29$\pm$4.80  \\ 
    & HNN-GyroBN-H~\cite{ZChengICLR25} & 78.79$\pm$2.01 & 67.19$\pm$2.33 & 43.43$\pm$1.99  \\ 
    & CBallNetBN & {\bf 80.24$\pm$1.35} & {\bf 73.16$\pm$0.46} & {\bf 61.08$\pm$1.17} \\ 
    \hline
  \end{tabular}
  } 
\end{center}
\end{table}

\begin{table}[t]
\caption{\label{tab:exp_node_classification_time_appx} Computation times (seconds) per epoch w.r.t. the dimension of node embeddings on Airport dataset (Hardware: Intel(R) Core(TM) i7-8565U CPU @ 1.80GHz).}
\begin{center}
  \resizebox{0.5\linewidth}{!}{
  \def\arraystretch{1.2}
  \begin{tabular}{| l | c | c | c |}    
    \hline
    Dimension & 8 & 16 & 32 \\    
    \hline            
    HNN-GyroBN-H~\cite{ZChengICLR25}  & 0.0291 & 0.0335 & 0.0536 \\ 
    \hline    
    CBallNetBN & 0.0725 & 0.0740 & 0.0951 \\ 
    \hline	
    Scale factor & $\times$ 2.49 & $\times$ 2.21 & $\times$ 1.77 \\
    \hline
  \end{tabular}
  } 
\end{center}
\end{table}

\subsection{Node Classification}
\label{sec:appx_node_classification}

\subsubsection{Datasets and Experimental Settings}

\paragraph{Airport~\cite{zhang2018link}}
It is a flight network dataset from OpenFlights.org where nodes represent airports, edges represent the airline Routes, 
and node labels are the populations of the country where the airport belongs.   

\paragraph{Pubmed~\cite{Namata2012QuerydrivenAS}}
It is a standard benchmark describing citation networks where nodes represent scientific papers in the area of medicine, 
edges are citations between them, and node labels are academic (sub)areas.

\paragraph{Cora~\cite{sen2008ccnd}}
It is a citation network where nodes represent scientific papers in the area of machine learning,  
edges are citations between them, and node labels are academic (sub)areas. 
Each publication in the dataset is described by a 0/1-valued word vector indicating the absence/presence of the 
corresponding word from the dictionary.

Our experiments are based on the experimental settings in~\cite{chami2019hyperbolic}. Specifically, 
we use 70/15/15 percent splits for Airport dataset and standard splits~\cite{kipf2017semisupervised} 
with $20$ train examples per class for Pubmed and Cora datasets. 

\subsubsection{Optimization and Hyperparameters}
\label{sec:supp_node_classification_optimization_hyperparameters}

We parameterize the Fr\'echet mean of a set of points in $\mathbb{B}_n$ on a Euclidean space 
(the real and imaginary parts). 
The networks are implemented in the Pytorch framework and trained using cross-entropy loss and Adam optimizer. 
CBallNetBN projects the result obtained after each transformation between the complex and real domains to the target domain. 
The results of HNN-GyroBN-H and HNN-RBN-H are obtained using their official 
implementation\footnote{\url{https://github.com/GitZH-Chen/GyroBN}.}. 
Hyperparameter settings are based on the work of~\yrcite{chami2019hyperbolic}. 
The number of epochs and the learning rate are set to $5000$ and $1e-2$, respectively. We use early stopping based on 
validation set performance with a patience of $100$ epochs. 
The weight decay for Airport dataset is set to $0$. 
The weight decays for Pubmed and Cora datasets are set to $1e-4$ and $1e-3$, respectively. 
The embedding dimension is set to $16$. The number of blocks (see Section~\ref{sec:exp_node_classification}) in the encoder of 
the networks is set to $2$. Results are averaged over $10$ random parameter initializations on the final test set for each model. 
All experiments are performed using a machine with an Intel(R) Core(TM) i7-8565U CPU @ 1.80GHz. 

\subsubsection{More Results}
\label{sec:appx_node_classification_more_results}

Tab.~\ref{tab:exp_nc_dim8_dim32} presents results of HNN-RBN-H, HNN-GyroBN-H, and CBallNetBN which learn 
$8$-dimensional and $32$-dimensional node embeddings. For both the settings, CBallNetBN is the best performer in terms of mean accuracy. 
The computation times of HNN-GyroBN-H and CBallNetBN are reported in Tab.~\ref{tab:exp_node_classification_time_appx}. 
It is noted that the scale factor of the computation times of these two networks gets smaller when the dimension of node embeddings increases. 

\subsection{Action Recognition}
\label{sec:appx_real_data}

To further demonstrate the applicability of our method, we conduct action recognition experiments on NTU60~\cite{Shahroudy16NTU} 
and NTU120~\cite{Liu_2019_NTURGBD120} datasets. We focus on comparing neural networks on different Riemannian manifolds. 
This allows us to investigate the efficacy of embedding action data into different Riemannian manifolds for the task of action recognition. Here we consider a challenging setting in which the input data contain only one of the three coordinates (channels) of the 3D human joint positions. We choose some state-of-the-art SPD neural networks as baselines since they have proven to be effective in comparison with other Riemannian neural networks on the considered task~\cite{HuangAAAI18,NguyenGyroMatMans23}. In particular, we compare SiegelNetBN and SiegelNetKobayashiBN against the following baselines: (1) SPDNet~\cite{HuangGool17}\footnote{\url{https://github.com/zhiwu-huang/SPDNet}.}; (2) SPDNetBN~\cite{BrooksRieBatNorm19}\footnote{\url{https://papers.nips.cc/paper/2019/hash/6e69ebbfad976d4637bb4b39de261bf7-Abstract.html}.}; (3) LieBN~\cite{chen2024lie}\footnote{\url{https://github.com/GitZH-Chen/LieBN.git}.}; 
(4) GBWBN~\cite{wang2025SPDBN}\footnote{\url{https://github.com/jjscc/GBWBN}.}; and SiegelNetFC. GBWBN is a variant of SPDNetBN in which the BN layer is built upon the Generalized Bures-Wasserstein metric~\cite{Han2023SPD}.

\subsubsection{Datasets and Experimental Settings}

\paragraph{NTU60} 
It has 56880 sequences of 3D skeleton data with $60$ classes. Each frame contains the 3D coordinates of $25$ or $50$ body joints. We use the mutual (interaction) actions for classification (11 classes). For the cross-subject (XSubject60) protocol~\cite{Shahroudy16NTU}, this results in 7319 and 3028 sequences for training and testing, respectively. 

\paragraph{NTU120} 
It has 114480 sequences in 120 action classes, captured by 106 subjects with three cameras views. Each frame contains the 3D coordinates of $25$ or $50$ body joints. We use the mutual (interaction) actions for classification (26 classes). 
For the cross-subject (XSubject120) protocol~\cite{Liu_2019_NTURGBD120}, the 106 subjects are split into training and testing groups where each group contains 53 subjects. This results in 13072 and 11660 sequences for training and testing, respectively. For the cross-setup (XSetup120) protocol~\cite{Liu_2019_NTURGBD120}, 
training data contain samples with even setup IDs, and testing data contain samples with odd setup IDs. This results in 11864 and 12868 sequences for training and testing, respectively.

\subsubsection{Optimization and Hyperparameters}

For the baselines, the input data are computed as in~\cite{HuangGool17,BrooksRieBatNorm19}. 
The Bimap layers output SPD matrices having the same size as input matrices. 
For LieBN, we test the models based on the Affine-Invariant metric (AIM)~\cite{pennec:tel-00633163} and the Log-Cholesky metric (LCM)~\cite{Lin_2019} which have shown to give the best performances on the considered task~\cite{chen2024lie}. The baselines have the same architectures proposed in the original papers~\cite{HuangGool17,BrooksRieBatNorm19,chen2024lie,wang2025SPDBN}. We use the method in Section~\ref{sec:supp_time_serie_optimization_hyperparameters} to compute the input data of our networks. Each input of our networks belongs to the product space $\operatorname{Sym}_n^+ \times \mathbb{SD}_n$.

We use the same method in Section~\ref{sec:supp_time_serie_optimization_hyperparameters} for optimizing parameters. 
All networks are trained using cross-entropy loss and Adadelta optimizer for $2000$ epochs. 
The learning rate and the batch size are set to $1e-2$ and $256$, respectively. 
Results are computed over $5$ random parameter initializations for each model. 
All experiments are performed using a machine with an Intel(R) Xeon(R) W-2223 CPU @ 3.60GHz.

\subsubsection{Results}

Results in Tab.~\ref{tab:exp_siegelnet_vs_spdnet_complement} show that our networks achieve the best mean accuracies and standard deviations on all the datasets. Also, the performance gaps of our networks and SPD models are significant in most cases. These results reveal that in some scenarios, embedding Euclidean data into Siegel spaces could be beneficial for classification tasks.  
Similar to the results in Tab.~\ref{tab:exp_time_series_classification_complement}, SiegelNetKobayashiBN achieves the same level of performance as SiegelNetBN. This again confirms the effectiveness of the Kobayashi pseudodistance for building BN layers in Siegel neural networks.

\begin{table*}[t]
\caption{\label{tab:exp_siegelnet_vs_spdnet_complement} Comparison of SPD and Siegel neural networks. Results (mean accuracy $\pm$ standard deviation) computed over $5$ runs for action recognition on NTU60 and NTU120 datasets. XSubject60, XSubject120, and XSetup120 correspond to the cross-subject setting of NTU60, the cross-subject setting of NTU120, and the cross-setup setting of NTU120, respectively.}
\begin{center}
  \resizebox{1.0\linewidth}{!}{
  \def\arraystretch{1.3}
  \begin{tabular}{| l | c | c | c | c | c | c | c | c | c |}    
    \hline
    \multirow{2}{*}{Method} & \multicolumn{3}{c|}{XSubject60} & \multicolumn{3}{c|}{XSubject120} & \multicolumn{3}{c|}{XSetup120} \\
    \cline{2-10}
    & x-channel & y-channel & z-channel & x-channel & y-channel & z-channel & x-channel & y-channel & z-channel \\
    \hline
    SPDNet~\cite{HuangGool17} & 63.96$\pm$0.32 & 61.69$\pm$0.29 & 58.25$\pm$0.54 & 47.90$\pm$0.34 & 43.57$\pm$0.30 & 42.04$\pm$0.44 & 48.42$\pm$0.39 & 42.80$\pm$0.46 & 40.46$\pm$0.42 \\    
    SPDNetBN~\cite{BrooksRieBatNorm19} & 64.97$\pm$0.26 & 62.66$\pm$0.28 & 59.77$\pm$0.52 & 49.70$\pm$0.30 & 46.60$\pm$0.33 & 42.37$\pm$0.49 & 50.13$\pm$0.34 & 45.33$\pm$0.41 & 45.13$\pm$0.45 \\
    LieBN-AIM-(1.5)~\cite{chen2024lie} & 63.91$\pm$0.30 & 61.04$\pm$0.28 & 58.55$\pm$0.56 & 47.80$\pm$0.31 & 43.81$\pm$0.31 & 42.29$\pm$0.40 & 48.97$\pm$0.36 & 43.11$\pm$0.44 & 42.22$\pm$0.47 \\
    LieBN-LCM-(0.5)~\cite{chen2024lie} & 65.18$\pm$0.41 & 63.16$\pm$0.35 & 59.65$\pm$0.58 & 49.12$\pm$0.37 & 45.41$\pm$0.36 & 40.50$\pm$0.53 & 49.57$\pm$0.43 & 45.51$\pm$0.55 & 42.39$\pm$0.49 \\
    GBWBN~\cite{wang2025SPDBN} & 32.83$\pm$0.0 & 37.58$\pm$0.0 & 33.44$\pm$0.0 & 16.47$\pm$0.0 & 16.60$\pm$0.0 & 9.32$\pm$0.0 & 18.37$\pm$0.0 & 19.36$\pm$0.0 & 12.69$\pm$0.0 \\    
    SiegelNetFC~\cite{NguyenNeurIPS25} & 67.21$\pm$0.34 & 65.18$\pm$0.31 & 61.43$\pm$0.55 & 52.08$\pm$0.35 & 48.24$\pm$0.36 & 44.02$\pm$0.51 & 53.04$\pm$0.40 & 48.37$\pm$0.48 & 45.32$\pm$0.49 \\
    \hline
    SiegelNetBN & {\bf 69.25$\pm$0.21} & {\bf 67.25$\pm$0.24} & 63.60$\pm$0.38 & 55.20$\pm$0.22 & 50.64$\pm$0.29 & 45.10$\pm$0.31 & {\bf 56.28$\pm$0.28} & 50.45$\pm$0.35 & 46.48$\pm$0.37 \\
    SiegelNetKobayashiBN & 68.81$\pm$0.23 & 67.03$\pm$0.20 & {\bf 64.34$\pm$0.35} & {\bf 55.47$\pm$0.20} & {\bf 50.93$\pm$0.30} & {\bf 46.27$\pm$0.34} & 55.76$\pm$0.31 & {\bf 51.28$\pm$0.29} & {\bf 46.80$\pm$0.33} \\
    \hline	
  \end{tabular}
  } 
\end{center}
\end{table*}

\begin{table*}[t]
\caption{\label{tab:exp_improve_grnet_with_siegelnet} Improving neural networks on Grassmann manifolds using the proposed BN layer. Results (mean accuracy $\pm$ standard deviation) computed over $5$ runs for action recognition on NTU60 and NTU120 datasets. XSubject60, XSubject120, and XSetup120 correspond to the cross-subject setting of NTU60, the cross-subject setting of NTU120, and the cross-setup setting of NTU120, respectively.}
\begin{center}
  \resizebox{1.0\linewidth}{!}{
  \def\arraystretch{1.3}
  \begin{tabular}{| l | c | c | c | c | c | c | c | c | c |}    
    \hline
    \multirow{2}{*}{Method} & \multicolumn{3}{c|}{XSubject60} & \multicolumn{3}{c|}{XSubject120} & \multicolumn{3}{c|}{XSetup120} \\
    \cline{2-10}
    & x-channel & y-channel & z-channel & x-channel & y-channel & z-channel & x-channel & y-channel & z-channel \\
    \hline
    GrNet~\cite{HuangAAAI18} & 54.89$\pm$0.24 & 49.68$\pm$0.26 & 49.69$\pm$0.46 & 35.73$\pm$0.21 & 32.16$\pm$0.36 & 29.81$\pm$0.17 & 38.42$\pm$0.31 & 33.40$\pm$0.22 & 32.51$\pm$0.25 \\        
    \hline
    GrNetSiegelBN & {\bf 67.03$\pm$0.20} & {\bf 62.38$\pm$0.22} & {\bf 58.85$\pm$0.39} & {\bf 48.49$\pm$0.18} & {\bf 42.66$\pm$0.29} & {\bf 39.78$\pm$0.14} & {\bf 49.83$\pm$0.26} & {\bf 44.56$\pm$0.19} & {\bf 42.05$\pm$0.17} \\    
    \hline	
  \end{tabular}
  } 
\end{center}
\end{table*}

\subsubsection{Improving Neural Networks on Grassmann Manifolds}
\label{sec:appx_improve_gnn}

In this section, we show how to improve the performance of neural networks on Grassmann manifolds using our proposed BN layer. 
We use GrNet~\cite{HuangAAAI18} which is a state-of-the-art neural network on Grassmann manifolds as the baseline.  
Let $x = (p^0,x^1) \in \operatorname{Sym}_n^+ \times \mathbb{SH}_n$ 
be an input of the QMLR$_{\operatorname{Sym}_n^+ \times \mathbb{SH}_n}$ layer of SiegelNetBN. Let $g_{x^1}$ be the point in 
$\operatorname{Sp}_{2n}$ which transforms $iI$ to $x^1$ (see Appendix~\ref{sec:appx_siegel_upper_half_space}), that is
\begin{equation*}
g_{x^1} = \begin{bmatrix} v^{\frac{1}{2}} & uv^{-\frac{1}{2}} \\ \mathbf{0}_n & v^{-\frac{1}{2}} \end{bmatrix},
\end{equation*}
where $x^1 = u+iv$. Let $h \in \operatorname{Sp}_{2n}$ be a parameter of the QMLR$_{\operatorname{Sym}_n^+ \times \mathbb{SH}_n}$ 
layer given by
\begin{equation*}
h = \begin{bmatrix} v_1^{\frac{1}{2}} & u_1v_1^{-\frac{1}{2}} \\ \mathbf{0}_n & v_1^{-\frac{1}{2}} \end{bmatrix},
\end{equation*}
where $u_1 \in \operatorname{Sym}_n$ and $v_1 \in \operatorname{Sym}_n^+$. GrNet uses a Euclidean FC layer as the final layer 
for classification. We vectorize the lower part of the matrix $\log(h^{-1}g_{x^1}g_{x^1}^Th^{-T})$, then concatenate it with the input feature vector of the FC layer of GrNet, and finally feed the resulting representation to a Euclidean FC layer for classification. 
The network based on this pipeline is called GrNetSiegelBN. Note that GrNetSiegelBN does not rely on $p^0$ which is part of 
the input data of SiegelNetBN belonging to $\operatorname{Sym}_n^+$. Thus, compared to GrNet, GrNetSiegelBN only uses additional features 
on Siegel spaces learned by our BN layer. Results of the two networks are given in Tab.~\ref{tab:exp_improve_grnet_with_siegelnet}. 
It can be observed that our proposed BN layer significantly improves the performance of GrNet. In particular, 
GrNetSiegelBN improves GrNet by more than 9\% in terms of mean accuracy in all cases. 

\section{Limitation of the Proposed Approach}
\label{sec:limitation}

One of the key assumptions of our approach is the availability of a closed form of automorphisms on the considered domain. Therefore, it is not applicable to settings in which such a closed form does not exist. 

A major computational bottleneck in our BN algorithm is the computation of the Fr\'echet mean. An interesting direction for future work is therefore to develop efficient algorithms for computing the Fr\'echet mean on complex domains. 

Similar to the networks in~\yrcite{FedericoSymspaces21,NguyenNeurIPS25}, our BN layer for Siegel neural networks is built upon operations on Siegel spaces which are generally expensive. Also, like SPD neural networks, Siegel neural networks are heavily based on the SVD operation which suffers from high computational cost when dealing with high-dimensional input. 

\section{Definitions and Basic Facts}
\label{sec:appx_definitions_basic_facts}

In this section, we present definitions and basic facts used in our paper. For greater mathematical detail and in-depth discussion, we refer the interested reader to~\yrcite{HolMapsInvDist}. 

\subsection{Invariant Distances}
\label{appx:invariant_distances}

We review the Poincar\'e distance~\cite{helgason1979differential} on the unit disc which is the starting point for studying invariant distances on complex domains. 

\begin{definition}[{\bf Poincar\'e differential metric~\cite{HolMapsInvDist}}]\label{def:poincare_differential_metric}
Let $\mathbb{B}$ be the open unit disc in $\mathbb{C}$, i.e., 
\begin{equation*}
\mathbb{B} = \{ x \in \mathbb{C}: |x| < 1 \}.
\end{equation*}

Let $x \in \mathbb{B}$ and $v \in \mathbb{C}$. Then the Poincar\'e differential metric $g_{\mathbb{B}}: \mathbb{B} \times \mathbb{C} \rightarrow [0,+\infty)$ is defined as
\begin{equation*}
g_{\mathbb{B}}(x,v) = \frac{|v|}{1 - |x|^2}.
\end{equation*}
\end{definition}



The Poincar\'e distance can then be given as follows. 

\begin{definition}[{\bf Poincar\'e distance~\cite{HolMapsInvDist}}]\label{def:poincare_distance}
The Poincar\'e distance $d_{\mathbb{B}}(\cdot,\cdot)$ is the integrated form of the Poincar\'e differential metric on $\mathbb{B}$ and is given as
\begin{equation*}
d_{\mathbb{B}}(x,y) = \frac{1}{2} \log \left( \frac{1 + \left | \frac{x - y}{1 - x \olsi{y}} \right |}{1 - \left | \frac{x - y}{1 - x \olsi{y}} \right |} \right),
\end{equation*}
where $x,y \in \mathbb{B}$. 
\end{definition}

\subsection{Complex Spaces}
\label{appx:complex_spaces}

\begin{definition}[{\bf Normed spaces}]\label{def:normed_spaces}
Let $\mathbb{E}$ be a vector space over a field $\mathbb{K}$, where $\mathbb{K} = \mathbb{R}$ or $\mathbb{C}$. A function $q: \mathbb{E} \rightarrow \mathbb{R}$ is a norm on $\mathbb{E}$ if

(1) $q(x) \ge 0$ for all $x \in \mathbb{E}$;

(2) $q(\alpha x) = |\alpha| q(x)$ for $x \in \mathbb{E}$ and $\alpha \in \mathbb{K}$;

(3) $q(x-y) \le q(x-z) + q(z-y)$ for all $x,y,z \in \mathbb{E}$; and

(4) $q(x) = 0$ if and only if $x = 0$.

If the final axiom does not necessarily hold we call $q$ a semi-norm. The pair $(\mathbb{E}, q(\cdot))$ is called a normed space. A normed space is said to be strictly normed if 

(5) $q(x+y) = q(x) + q(y)$ implies that $y=tx$ for some $t > 0$, or else $x=0$.
\end{definition}


The Kobayashi pseudodistance can also be defined via the concept of analytic chain. 

\begin{definition}[{\bf Analytic chain~\cite{HolMapsInvDist}}]
Let $\mathbb{D}$ be a domain in a complex normed space $\mathbb{E}$, and let $x,y \in \mathbb{D}$. An analytic chain joining $x$ and $y$ in $\mathbb{D}$ consists of $2m$ points $z'_1,z''_1,\ldots,z'_m,z''_m$ in $\mathbb{B}$ and of $m$ functions $f_j \in \operatorname{Hol}(\mathbb{B},\mathbb{D})$ such that
\begin{equation*}
f_1(z'_1) = x, \ldots, f_j(z''_j) = f_{j+1}(z'_{j+1}), 
\end{equation*}
for $j=1,\ldots,m-1, f_m(z''_m)=y$. 
\end{definition} 

\begin{definition}[{\bf The Kobayashi pseudodistance~\cite{HolMapsInvDist}}]
Let $\mathbb{D}$ be a domain in a complex normed space $\mathbb{E}$, and let $x,y \in \mathbb{D}$. Then the Kobayashi pseudodistance $\bar{d}^K_{\mathbb{D}}(x,y)$ is defined as  
\begin{equation*}
\bar{d}^K_{\mathbb{D}}(x,y) = \inf \{ d_{\mathbb{B}}(z'_1,z''_1) + d_{\mathbb{B}}(z'_2,z''_2) + \ldots + d_{\mathbb{B}}(z'_m,z''_m) \},
\end{equation*}
where the infimum is taken over all choices of analytic chains joining $x$ and $y$ in $\mathbb{D}$. 
\end{definition}

It has been proved~\cite{HolMapsInvDist} that the above definition of the Kobayashi pseudodistance is equivalent to 
Definition~\ref{def:kobayashi_inner_distance} in the main paper, i.e., 
\begin{equation*}
\bar{d}^K_{\mathbb{D}}(x,y) = d^K_{\mathbb{D}}(x,y),
\end{equation*}
where $x,y \in \mathbb{D}$. 


\subsection{Siegel Spaces}
\label{sec:appx_siegel_spaces}

In this section, we further discuss the geometries of the Siegel upper half space and the Siegel disk that have not provided in the main paper. 

\subsubsection{The Siegel Upper Half Space}
\label{sec:appx_siegel_upper_half_space}


The Siegel upper half space has the structure of a homogeneous space~\cite{SiegelSymGeometry}. Let $\operatorname{Sp}_{2n}$ be the real symplectic group defined by
\begin{align*}
\begin{split}
\operatorname{Sp}_{2n} = \left \{ s \in \mathbb{R}^{2n \times 2n} | s^T e_{2n} s = e_{2n} \right \}, \text{ with } e_{2n} = \begin{bmatrix} \mathbf{0}_n & I \\ -I & \mathbf{0}_n \end{bmatrix},
\end{split}
\end{align*}
where $\mathbf{0}_n$ denotes the $n \times n$ zero matrix. The left action of $\operatorname{Sp}_{2n}$ on $\mathbb{SH}_n$ is defined by the generalized linear fractional transformation~\cite{SiegelSymGeometry} given by
\begin{align}\label{eq:generalized_symplectic_map}
\begin{split}
\psi: \operatorname{Sp}_{2n} \times \mathbb{SH}_n & \rightarrow \mathbb{SH}_n \\ g[x] & \mapsto (ax+b)(cx+d)^{-1},
\end{split}
\end{align}
where $g = \begin{bmatrix} a & b \\ c & d \end{bmatrix} \in \operatorname{Sp}_{2n}$. This action is transitive. 
The isotropy subgroup of the element $iI \in \mathbb{SH}_n$ is given by
\begin{equation*}
\operatorname{SpO}_{2n} = \left \{ \begin{bmatrix} a & b \\ -b & a \end{bmatrix}: a^Ta + b^Tb = I, a^Tb = b^Ta \right \} = \operatorname{Sp}_{2n} \cap \operatorname{O}_{2n}.
\end{equation*}
This gives rise to the following identification 
\begin{equation*}
\mathbb{SH}_n \cong \operatorname{Sp}_{2n} / \operatorname{SpO}_{2n}. 
\end{equation*}

The element in $\operatorname{Sp}_{2n}$ that transforms $iI$ to $x = u + iv \in \mathbb{SH}_n$ via the group action
is given by 
\begin{equation}\label{eq:map_iI_to_x}
g_{u+iv} = \begin{bmatrix} v^{\frac{1}{2}} & uv^{-\frac{1}{2}} \\ \mathbf{0}_n & v^{-\frac{1}{2}} \end{bmatrix}.
\end{equation}




The Riemannian metric of the Siegel upper half space is a generalization of the case $n=1$. In the case $n=1$, the Siegel upper half space becomes the upper half-plane and the linear fractional transformation in Eq.~(\ref{eq:generalized_symplectic_map}) is given by
\begin{equation*}
y = \frac{ax + b}{cx + d},
\end{equation*}
where $a,b,c,d \in \mathbb{R}$ and $ad - bc = 1$. In this case, there exists a transformation mapping two given points $x,y$ of the upper half-plane into two other given points $x_1,y_1$ of the upper half-plane, if and only if
\begin{equation*}
R(x,y) = R(x_1,y_1),
\end{equation*}
where $R(x,y)$ denotes the cross-ratio given by
\begin{equation*}
R(x,y) = \frac{x - y}{x - \bar{y}} \frac{\bar{x} - \bar{y}}{\bar{x} - y}.
\end{equation*}

To generalize the above result to the Siegel upper half space, one considers the cross-ratio $R(\cdot,\cdot)$ given as
\begin{equation*}
R(x,y) = (x-y)(x-\bar{y})^{-1}(\bar{x}-\bar{y})(\bar{x}-y)^{-1},
\end{equation*}
where $x, y \in \mathbb{SH}_n$. Let $g = \begin{bmatrix} a & b \\ c & d \end{bmatrix} \in \operatorname{Sp}_{2n}$, $x_1 = g[x]$, and $y_1 = g[y]$. Then
\begin{equation*}\label{eq:cross_ratios_have_same_eigenvalues}
R(x,y) = (cy + d)^T R(x_1,y_1) \left ((cy + d)^T \right )^{-1},
\end{equation*}
which shows that $R(x,y)$ and $R(x_1,y_1)$ have the same eigenvalues. Now, taking 
the second differential of $R(x,y)$ at the point $y = x$, one gets
\begin{equation*}
d^2 R(x,y) = \frac{1}{2} dx v^{-1} d\bar{x} v^{-1},
\end{equation*}
where $x = u + iv$. Therefore
\begin{equation}\label{eq:symplectic_metric}
d^2 \operatorname{Tr}(R(x,y)) = \operatorname{Tr}(d^2 R(x,y)) = \frac{1}{2} \operatorname{Tr}( v^{-1} dx v^{-1} d\bar{x} ),
\end{equation}
where the first equality is due to the community of the differential and the trace operator. 

The Hermitian differential form in Eq.~(\ref{eq:symplectic_metric}) is the Riemannian metric of the Siegel upper half space. It is invariant under the group action of $\operatorname{Sp}_{2n}$ on $\mathbb{SH}_n$ since $\operatorname{Tr}(R(x,y)) = \operatorname{Tr}(R(g[x],g[y]))$ for any $g \in \operatorname{Sp}_{2n}$. 
The resulting Riemannian distance $d_{\mathbb{SH}_n}(x,y)$ is given~\cite{SiegelSymGeometry} by
\begin{equation*}
d_{\mathbb{SH}_n}(x,y) = \sqrt{\sum_{j=1}^n \log^2 \left ( \frac{1 + r_j^{\frac{1}{2}}}{1 - r_j^{\frac{1}{2}}} \right )},
\end{equation*}
where $r_j,j=1,\ldots,n$ are the eigenvalues of the cross-ratio $R(x,y)$. 

\subsubsection{The Siegel Disk}

The K\"{a}hler metric in the Siegel disk is given~\cite{BarbarescoInfoGeomCov2013,JeurisBTMean2016} by
\begin{equation*}
ds^2_x = \operatorname{Tr} \left ( (I - xx^H)^{-1} dx (I - x^Hx)^{-1} dx^H \right ),
\end{equation*}
where $x \in \mathbb{SD}_n$. 

The (left) action of the real symplectic group $\operatorname{Sp}_{2n}$ on $\mathbb{SD}_n$ is obtained via the inverse matrix Cayley transformation which converts points in $\mathbb{SD}_n$ to $\mathbb{SH}_n$. This matrix is given by
\begin{equation*}
c = \begin{bmatrix} iI & iI \\ -I & I \end{bmatrix}.
\end{equation*}

Let $g \in \operatorname{Sp}_{2n}$ and let $c^{-1}gc = \begin{bmatrix} a_0 & b_0 \\ c_0 & d_0 \end{bmatrix}$. Then the matrix $g_0 = \begin{bmatrix} a_0 & b_0 \\ \bar{b}_0 & \bar{a}_0 \end{bmatrix} \in \operatorname{Sp}_{2n}$ acts isometrically on $\mathbb{SD}_n$ by the following action:
\begin{equation*}
g_0[x] = (a_0x + b_0)(\bar{b}_0x + \bar{a}_0)^{-1},
\end{equation*}
where $x \in \mathbb{SD}_n$. 

The Kobayashi distance $d^K_{\mathbb{SD}_n}(\cdot,\cdot)$ is given~\cite{JeurisBTMean2016} by
\begin{equation}\label{eq:kobayashi_distance}
d^K_{\mathbb{SD}_n}(x,y) = \frac{1}{2}\log \left ( \frac{1 + \| \phi_{x}(y) \|_2}{1 - \| \phi_{x}(y) \|_2} \right ),
\end{equation}
where $x,y \in \mathbb{SD}_n$. 

\subsubsection{Conversion between the Two Models}
\label{subsec:conversion_between_siegel_space_models}


One can convert a point $x \in \mathbb{SH}_n$ to $\mathbb{SD}_n$ using the following matrix Cayley transformation:
\begin{equation*}
\varphi(x) = (x - iI)(x + iI)^{-1}.
\end{equation*}

The inverse matrix Cayley transformation that converts a point $x \in \mathbb{SD}_n$ to $\mathbb{SH}_n$ is given by
\begin{equation*}
\varphi^{(-1)}(x) = i(I + x)(I - x)^{-1}.
\end{equation*}

\section{Mathematical Proofs}
\label{sec:appx_mathematical_proofs}

\subsection{Proof of Proposition~\ref{prop:connection_gyrovector_spaces}}
\label{sec:appx_connection_gyrovector_spaces}

\begin{proof}

Let $x,y,z \in \mathcal{M}$. Then
\begin{align*}
\begin{split}
d(\phi_{x}(y),\phi_{x}(z)) &= d(\ominus_g x \oplus_g y,\ominus_g x \oplus_g z) \\ &= \| \ominus_g(\ominus_g x \oplus_g y) \oplus_g (\ominus_g x \oplus_g z) \|_g \\ &= \| \operatorname{gyr}[\ominus_g x, y](\ominus_g y \oplus_g z) \|_g \\ &= \| \ominus_g y \oplus_g z \|_g \\ &= d(y,z), 
\end{split}
\end{align*}
where the third equation follows from the Left Gyrotranslation Theorem~\cite{UngarHyperbolicNDim}, 
and the fourth equation follows from the invariance of the norm under gyrations~\cite{UngarHyperbolicNDim}. 

In the case of SPD gyrovector spaces studied in~\yrcite{NguyenNeurIPS22}, one has
\begin{align*}
\begin{split}
d(0_{\mathcal{M}}, \alpha_{x,y}(t) \otimes \phi_x(y)) &= \| \log(\alpha_{x,y}(t) \otimes \phi_x(y)) \|_g \\ &= \alpha_{x,y}(t) \| \log(\phi_x(y)) \|_g \\ &= \alpha_{x,y}(t) d(0_{\mathcal{M}}, \phi_x(y)), 
\end{split}
\end{align*}
where (by abuse of notation) $\log(\cdot)$ is the Riemannian logarithmic map at the identity, 
the first and the third equations follow from the definition of the SPD gyrodistance and that of the SPD inner product (see~\yrcite{NguyenGyroMatMans23}, Definition~2.15), and the second equation follows from the definition of the scalar multiplication on $\mathcal{M}$.

It is immediate that the unique solution of Eq.~(\ref{eq:geodesics_equation}) is given by $\alpha_{x,y}(t)=t$. 
According to Definition~\ref{def:geodesic_for_bn}, the almost geodesic joining $x$ and $y$ is given by
\begin{align*}
\begin{split}
\phi_x^{(-1)}( \alpha_{x,y}(t) \otimes \phi_x(y) ) &= x \oplus_g t \otimes_g (\ominus_g x \oplus_g y).
\end{split}
\end{align*}

It can be easily seen that geodesics on SPD gyrovector spaces in~\yrcite{NguyenNeurIPS22} can be expressed as the right-hand side of the above equation. This completes the proof of the proposition. 

\end{proof}

\subsection{Proof of Proposition~\ref{prop:connection_lie_groups}}
\label{sec:appx_connection_lie_groups}

\begin{proof}

One has
\begin{align*}
\begin{split}
d(0_{\operatorname{SO}_n}, \alpha_{x,y}(t) \otimes \phi_x(y)) &= \| \log(\alpha_{x,y}(t) \otimes \phi_x(y)) \| \\ &= \| \alpha_{x,y}(t) \log(\phi_x(y)) \| \\ &= \alpha_{x,y}(t) \| \log(\phi_x(y)) \| \\ &= \alpha_{x,y}(t) d(0_{\operatorname{SO}_n}, \phi_x(y)), 
\end{split}
\end{align*}
where the first and the last equations follow from the definition of the distance on $\operatorname{SO}_3$, the second equation follows from the definition of the scalar multiplication on $\operatorname{SO}_3$, and the third equation follows from the fact that $\alpha_{x,y}(t) \in [0,+\infty)$ for $t \in [0,1]$. 

It is immediate that the unique solution of Eq.~(\ref{eq:geodesics_equation}) is given by $\alpha_{x,y}(t)=t$. 
According to Definition~\ref{def:geodesic_for_bn}, the almost geodesic joining $x$ and $y$ is given by
\begin{align*}
\begin{split}
\phi_x^{(-1)}( \alpha_{x,y}(t) \otimes \phi_x(y) ) &= L^{(-1)}_x (t \otimes L_x(y)) \\ &= x \exp(t \log(x^{-1}y)). 
\end{split}
\end{align*}

This completes the proof of the proposition. 

\end{proof}

\subsection{Proof of Proposition~\ref{prop:connection_symmetric_spaces}}
\label{sec:appx_connection_symmetric_spaces}

\begin{proof}

To prove the first statement, we need a result from~\yrcite{kassel2009proper}.

\begin{lemma}\label{lem:complete_invariant_cartan_projection}
Let $\olsi{\mathfrak{a}^+}$ be the closed Weyl chamber. 
Let the map (Cartan projection) $\mu: G \rightarrow \olsi{\mathfrak{a}^+}$ be determined by $g = k \exp(\mu(g)) k'$ 
with $g \in G$ and $k,k' \in K$. The map $\mu(\cdot)$ is a continuous, proper, surjective map~\cite{helgason1979differential}. 
Denote by $o$ the origin $K$ in $X$. 
Let $\rho: X \rightarrow \olsi{\mathfrak{a}^+}$ be the map sending $x = g[o] \in X$ to $\mu(g)$, where $g \in G$. Then for all $x,x' \in X$,
\begin{equation*}
\| \rho(x) - \rho(x') \| \le d(x,x').
\end{equation*}

Moreover, if $x,x' \in \exp(\olsi{\mathfrak{a}^+})[o]$, then $d(x,x') = \| \rho(x) - \rho(x') \|$. 
\end{lemma}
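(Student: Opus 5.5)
The plan is to reduce everything to the identity $d(o,g[o]) = \|\mu(g)\|$ and then to a ``vector-valued triangle inequality'' for the Cartan projection.

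\textbf{Setup.} First I check that $\rho$ is well defined. If $g[o]=g'[o]$, then $g'=gk$ with $k\in K$. Since $\mu$ is bi-$K$-invariant by construction ($g=k\exp(\mu(g))k'$), we get $\mu(g')=\mu(g)$.

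Next I normalize the $G$-invariant metric so that, for $b\in\mathfrak{p}$, the curve $t\mapsto \exp(tb)[o]$ is a geodesic. This holds because $\operatorname{Exp}_o(d\sigma(b))=\exp(b)K$, as recalled in the setup of Section~\ref{sec:batchnorm_symmetric_spaces}. With this normalization, $d(o,\exp(b)[o])=\|b\|$.

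Writing $g=k\exp(\mu(g))k'$ and using $K$-invariance of $d$, I obtain
\begin{equation*}
d(o,g[o])=\|\mu(g)\|.
\end{equation*}
By $G$-invariance, for $x=g[o]$ and $x'=h[o]$,
\begin{equation*}
d(x,x')=\|\mu(g^{-1}h)\|.
\end{equation*}
The inequality to prove therefore becomes
\begin{equation*}
\|\mu(g)-\mu(h)\|\le\|\mu(g^{-1}h)\| \quad \text{for all } g,h\in G.
\end{equation*}

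\textbf{Main step.} Set $h=gu$. I would invoke the convexity property of the Cartan projection (in the Kostant / Kapovich--Leeb--Millson form used by Kassel):
\begin{equation*}
\mu(gu)-\mu(g)\in \operatorname{conv}\big(W\cdot\mu(u)\big),
\end{equation*}
where $W$ is the Weyl group. Since $W$ acts by isometries of $\mathfrak{a}$, every point of $\operatorname{conv}(W\mu(u))$ has norm at most $\|\mu(u)\|$. This gives $\|\mu(h)-\mu(g)\|\le \|\mu(g^{-1}h)\|$.

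This convexity statement is the genuine obstacle. If I cannot simply cite it, I would try a more elementary geometric argument instead. For each unit vector $\xi\in\overline{\mathfrak{a}^+}$, consider the function $x\mapsto\langle \rho(x),\xi\rangle$. It can be expressed through Busemann-type functions $\beta_\xi$, namely as a supremum over $K$-translates of linear functionals of the horospherical (Iwasawa) projection, and each such functional is $1$-Lipschitz for $d$. Taking the supremum over $\xi$ would then recover the norm bound. Either way, this step carries the whole content of the inequality.

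\textbf{Equality case.} Let $x=\exp(a)[o]$ and $x'=\exp(a')[o]$ with $a,a'\in\overline{\mathfrak{a}^+}$. Then $\rho(x)=a$ and $\rho(x')=a'$. Since $\mathfrak{a}$ is abelian,
\begin{equation*}
g^{-1}h=\exp(a'-a),
\end{equation*}
so $\mu(g^{-1}h)$ is the $W$-conjugate of $a'-a$ lying in the closed chamber. It has the same norm as $a'-a$, and hence
\begin{equation*}
d(x,x')=\|a'-a\|=\|\rho(x)-\rho(x')\|.
\end{equation*}
Geometrically, this just says that the flat $\exp(\mathfrak{a})[o]$ is totally geodesic and isometric to Euclidean $\mathfrak{a}$.
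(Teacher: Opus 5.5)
Your proof is correct in outline, but it takes a different route from the paper, which does not prove this lemma and simply imports it from Kassel.

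\textbf{Your route versus the paper's.} You first derive $d(g[o],h[o])=\|\mu(g^{-1}h)\|$ from the Cartan decomposition and $G$-invariance. No renormalization is needed for this: $t\mapsto\exp(tb)[o]$ is a geodesic for every $G$-invariant metric. What you actually use is that $\|\cdot\|$ is the norm induced by the metric at $o$, and that geodesics in the Hadamard manifold $X$ are minimizing. This reduces the inequality to $\|\mu(gu)-\mu(g)\|\le\|\mu(u)\|$, which you deduce from the inclusion $\mu(gu)-\mu(g)\in\operatorname{conv}(W\mu(u))$.

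That inclusion is a true theorem (a Lidskii/Gel'fand--Naimark-type majorization for Cartan projections). So your argument is complete modulo the citation. However, the inclusion is strictly stronger than the lemma, so you have exchanged one black box for a larger one.

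\textbf{Making the main step self-contained.} Write $u=k\exp(\mu(u))k'$ and $Y=\operatorname{Ad}(k)\mu(u)$; then $\mu(gu)=\mu(g\exp Y)$. At times where $g_t=g\exp(tY)=k_t\exp(a_t)k'_t$ is regular, differentiating gives $\dot a_t=\pi_{\mathfrak a}(\operatorname{Ad}(k'_t)Y)$, the orthogonal projection onto $\mathfrak a$. Hence $\|\dot a_t\|\le\|Y\|=\|\mu(u)\|$, and integrating (with the usual care at singular times) yields the inequality. Kostant's linear convexity theorem then upgrades this to your convex-hull statement.

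Your equality case is fine.

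\textbf{An advantage of your route.} The identity $d(o,g[o])=\|\mu(g)\|$ is exactly what the proof of Proposition~\ref{prop:connection_symmetric_spaces} needs. There the point $\exp(u)[o]$ is generally not in $\exp(\olsi{\mathfrak a^+})[o]$, so the lemma's equality clause does not literally apply. Your identity covers that case without needing the inequality at all.

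\textbf{Your fallback would not work as described.} For unit $\xi\in\olsi{\mathfrak a^+}$, the function $x\mapsto\langle\rho(x),\xi\rangle$ is indeed a supremum of Busemann functions and hence $1$-Lipschitz. But $\sup\{|\langle v,\xi\rangle|:\xi\in\olsi{\mathfrak a^+},\ \|\xi\|=1\}$ is in general strictly smaller than $\|v\|$ for $v=\rho(x)-\rho(x')$, because $v$ need not lie in $\pm\olsi{\mathfrak a^+}$. For example, take $X=\mathbb H^2\times\mathbb H^2$ with the closed quadrant as chamber, $\rho(x)=(1,0)$ and $\rho(x')=(0,1)$. Then $v=(1,-1)$, and the supremum is $1<\sqrt2$. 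For $\xi$ outside the chamber, $\langle\rho(\cdot),\xi\rangle$ is not a supremum of Busemann functions. So this route only gives a weaker, non-Euclidean bound.
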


By the definition of the scalar multiplication, 
\begin{equation*}
\alpha(t) \otimes \exp(u)K = \exp(\alpha(t)u)K.
\end{equation*}


Let $\phi_x(y) = \exp(u)K$ where $u \in \mathfrak{p}$. Then
\begin{align*}
\begin{split}
d(0_{\mathcal{M}}, \phi_x(y)) & = d(o, \exp(u)K) \\& = \| \rho(o) - \rho(\exp(u)[o]) \| \\& = \| \mu(\exp(u)) \|,
\end{split}
\end{align*}
where the second equation follows from Lemma~\ref{lem:complete_invariant_cartan_projection}. 
Let $\operatorname{Ad}_G$ be the adjoint representation of $G$. Since $u \in \mathfrak{p}$, 
there exists $k \in K$ and $a \in \olsi{\mathfrak{a}^+}$ such that $u = \operatorname{Ad}(k)a$. Thus $\mu(\exp(u)) = a$ and therefore
\begin{equation*}
d(0_{\mathcal{M}}, \phi_x(y)) =  \| a \|.
\end{equation*}

Also, one has
\begin{align*}
\begin{split}
d(0_{\mathcal{M}}, \alpha_{x,y}(t) \otimes \phi_x(y)) &= d(0_{\mathcal{M}}, \exp( \alpha_{x,y}(t)u )K) \\& = d(o, \exp( \alpha_{x,y}(t)u )K) \\& = \| \rho(o) - \rho(\exp( \alpha_{x,y}(t)u )[o]) \| \\ &= \| \mu(\exp( \alpha_{x,y}(t)u )) \| \\&= \| \alpha_{x,y}(t)a \|,
\end{split}
\end{align*}
where the third equation follows from Lemma~\ref{lem:complete_invariant_cartan_projection} and the last equation 
follows from the fact that $u = \operatorname{Ad}(k)a$. Eq.~(\ref{eq:geodesics_equation}) now becomes
\begin{equation*}
\| \alpha_{x,y}(t)a \| = t\| a \|,
\end{equation*}
which has the unique solution given by $\alpha_{x,y}(t) = t$. 

To prove the second statement, we need the following lemma.

\begin{lemma}\label{lem:spd_orthogonal_decomposition}
Let $a,b \in \operatorname{Sym}_n^+$. 
Then there exists $k \in \operatorname{O}_n$ such that $abk \in \operatorname{Sym}_n^+$. 
\end{lemma}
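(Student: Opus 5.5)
The plan is to use the polar decomposition. Take $k = (ab)^{-1} s$, where $s$ is the SPD factor in the polar decomposition of $ab$. The task is then to check that this $k$ is orthogonal; the only point needing care is showing $(ab)^T ab$ is positive definite.

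Since $a,b \in \operatorname{Sym}_n^+$, both are invertible, so $m := ab$ is an invertible real matrix. Every invertible real matrix has a polar decomposition $m = s\,w$ with $s = (m m^T)^{\frac{1}{2}} \in \operatorname{Sym}_n^+$ and $w \in \operatorname{O}_n$. Here $m m^T = a b^2 a$, which is SPD because $x^T a b^2 a x = \|b a x\|^2 > 0$ for $x \neq 0$. So $s$ is a well-defined SPD square root and $w = s^{-1} m$.

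Now set $k := w^{-1} = w^T \in \operatorname{O}_n$. Then
\begin{equation*}
abk = s\,w\,w^T = s \in \operatorname{Sym}_n^+ .
\end{equation*}

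To confirm that $w$ is orthogonal, compute
\begin{equation*}
w w^T = s^{-1} m m^T s^{-1} = s^{-1} s^2 s^{-1} = I.
\end{equation*}
Equivalently, one may take $k = b^{-1} a^{-1} (a b^2 a)^{\frac{1}{2}}$ directly and verify $k^T k = I$ by the same computation.

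There is no real obstacle here. The only step requiring attention is the positive definiteness of $a b^2 a$ used above, which ensures that the square root $s$ exists and is SPD.

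In the surrounding proof, this lemma is presumably applied with $a = \exp(u)$ and $b = \exp(\alpha u')$-type factors. It rewrites $\phi_x^{(-1)}(t \otimes \phi_x(y)) = g\exp(tv)K$ as a coset representative in $\operatorname{Sym}_n^+$, which can then be matched with the affine-invariant geodesic $x^{\frac{1}{2}}(x^{-\frac{1}{2}} y x^{-\frac{1}{2}})^t x^{\frac{1}{2}}$.
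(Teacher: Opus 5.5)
Your proof is correct and is essentially the paper's: the paper takes exactly $k = b^{-1}a^{-1}(ab^2a)^{\frac{1}{2}}$ (which is your $w^T$ from the polar decomposition), verifies $kk^T = k^Tk = I$ directly, and concludes $abk = (ab^2a)^{\frac{1}{2}}$ using that $ab^2a$ is SPD. One small slip: the matrix whose positive definiteness you need is $ab(ab)^T = ab^2a$, not $(ab)^T ab$ as your opening paragraph says, though your actual computation uses the correct one.
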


\begin{proof}

We shall show that $k = b^{-1}a^{-1}(ab^2a)^{\frac{1}{2}}$ satisfies the condition of the lemma. First, note that
\begin{equation*}
kk^T = b^{-1}a^{-1}(ab^2a)^{\frac{1}{2}} (ab^2a)^{\frac{1}{2}} a^{-1}b^{-1} = I.
\end{equation*}

Also, one has
\begin{equation*}
k^Tk = (ab^2a)^{\frac{1}{2}} a^{-1}b^{-1} b^{-1}a^{-1}(ab^2a)^{\frac{1}{2}} = I.
\end{equation*}

Furthermore, $abk = (ab^2a)^{\frac{1}{2}}$ and it is easy to see that $ab^2a$ is symmetric positive definite 
since $ab^2a = (ab^2a)^T$ and for any $x \in \mathbb{R}^n \setminus \{ \mathbf{0} \}$, $xab^2ax^T = (xab)(xab)^T > 0$. 

\end{proof}

When $\mathcal{M}$ is $\operatorname{Sym}_n^+$, $G$ is the general linear group $\operatorname{GL}_n$ (with entries in $\mathbb{R}$)
and $K$ is $\operatorname{O}_n$. It has been shown~\cite{helgason1979differential} 
that the Cartan decomposition of $\mathfrak{g}$ is given by
\begin{equation*}
\mathfrak{g} = \mathfrak{o}_n \oplus \operatorname{Sym}_n,
\end{equation*}
where $\mathfrak{o}_n$ denotes the Lie algebra of $\operatorname{O}_n$. Since $\mathfrak{p} \cong \operatorname{Sym}_n$, 
the automorphism $\phi_x(\cdot)$ is given by
\begin{equation*}
\phi_x(y) = x^{-\frac{1}{2}} y^{\frac{1}{2}} K.
\end{equation*}

By Lemma~\ref{lem:spd_orthogonal_decomposition}, there exists $k \in K$ such that 
$z = x^{-\frac{1}{2}} y^{\frac{1}{2}} k \in \operatorname{Sym}_n^+$. Let $z = \exp(v), v \in \mathfrak{p}$. Then
\begin{align*}
\begin{split}
\alpha_{x,y}(t) \otimes \phi_x(y) &= t \otimes x^{-\frac{1}{2}} y^{\frac{1}{2}} K \\ &= t \otimes \exp(v)K \\ &= \exp(tv)K. 
\end{split}
\end{align*}

Therefore, one has
\begin{equation*}
\phi_x^{(-1)}(\alpha_{x,y}(t) \otimes \phi_x(y)) = x^{\frac{1}{2}} \exp(tv)K. 
\end{equation*}

The almost geodesic $\gamma_{x,y}$ joining $x$ and $y$ is thus given by
\begin{align*}
\begin{split}
\phi_x^{(-1)}(\alpha_{x,y}(t) \otimes \phi_x(y)) &= x^{\frac{1}{2}} \exp(2tv) x^{\frac{1}{2}} \\ &= x^{\frac{1}{2}} (x^{-\frac{1}{2}} y x^{-\frac{1}{2}})^t x^{\frac{1}{2}}, 
\end{split}
\end{align*}
where the second equation follows from the fact that $x^{-\frac{1}{2}} y^{\frac{1}{2}} K = \exp(v)K$ shown above. 
This concludes the proof of the proposition. 

\end{proof}

\subsection{Proof of Proposition~\ref{prop:kobayashi_geodesics}}
\label{sec:appx_almost_geodesics_formula}

\begin{proof}

We need the following result~\cite{HolMapsInvDist}. 

\begin{theorem}\label{theorem:arbitrary_domain_unit_disk_distance_relation}
Let $q(\cdot)$ be a continuous semi-norm on $\mathbb{E}$, and let
\begin{equation*}
\mathbb{D} = \{ x \in \mathbb{E}: q(x) < 1 \}.
\end{equation*}

Then
\begin{equation*}
d^K_{\mathbb{D}}(\mathbf{0},x) = d_{\mathbb{B}}(0, q(x)),
\end{equation*}
where $x \in \mathbb{D}$, and $\mathbf{0}$ is the image of the holomorphic map $\tau: \mathbb{B} \rightarrow \mathbb{D}, v \mapsto \frac{v}{q(x)}x$ for $q(x) > 0$ and $v=0$. 
\end{theorem}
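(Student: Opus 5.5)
The plan is to prove the identity by two opposite inequalities. The upper bound comes from an explicit holomorphic disc through $\mathbf{0}$ and $x$. The lower bound comes from a holomorphic map $\mathbb{D}\to\mathbb{B}$ built with the Hahn--Banach theorem. Throughout, I will work with the analytic-chain description $\bar{d}^K_{\mathbb{D}}$, which the paper recalls is equal to $d^K_{\mathbb{D}}$.

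\emph{Upper bound.} First assume $q(x)>0$ and put $\tau(v)=\frac{v}{q(x)}x$. Since $q(\tau(v))=|v|<1$ for $v\in\mathbb{B}$, $\tau$ is a linear, hence holomorphic, map $\mathbb{B}\to\mathbb{D}$. It satisfies $\tau(0)=\mathbf{0}$ and $\tau(q(x))=x$. The one-link analytic chain $z'_1=0$, $z''_1=q(x)$, $f_1=\tau$ therefore gives $d^K_{\mathbb{D}}(\mathbf{0},x)\le d_{\mathbb{B}}(0,q(x))$.

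If instead $q(x)=0$, then $q(\lambda x)=0$ for every $\lambda\in\mathbb{C}$. For each $R>0$ the map $v\mapsto Rvx$ therefore sends $\mathbb{B}$ into $\mathbb{D}$. It takes $0$ to $\mathbf{0}$ and $1/R$ to $x$, so $d^K_{\mathbb{D}}(\mathbf{0},x)\le d_{\mathbb{B}}(0,1/R)$. Letting $R\to\infty$ gives $d^K_{\mathbb{D}}(\mathbf{0},x)=0=d_{\mathbb{B}}(0,0)$, which settles this case.

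\emph{Lower bound.} Here $q(x)>0$. Apply the complex Hahn--Banach theorem, in its seminorm-dominated form, to the functional $\lambda x\mapsto\lambda q(x)$ on $\mathbb{C}x$. This yields a complex-linear $\ell:\mathbb{E}\to\mathbb{C}$ with $|\ell(y)|\le q(y)$ for all $y$ and $\ell(x)=q(x)$; continuity of $q$ makes $\ell$ continuous. Consequently $\ell$ restricts to a holomorphic map $\mathbb{D}\to\mathbb{B}$ with $\ell(\mathbf{0})=0$ and $\ell(x)=q(x)$.

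Now take any analytic chain $(z'_j,z''_j,f_j)_{j=1}^m$ joining $\mathbf{0}$ and $x$ in $\mathbb{D}$. Each $\ell\circ f_j$ is a holomorphic self-map of $\mathbb{B}$. By the Schwarz--Pick lemma, $d_{\mathbb{B}}(\ell f_j(z'_j),\ell f_j(z''_j))\le d_{\mathbb{B}}(z'_j,z''_j)$. The points $\ell f_j(z''_j)=\ell f_{j+1}(z'_{j+1})$ link consecutively, so the triangle inequality for the Poincar\'e distance gives
\begin{equation*}
d_{\mathbb{B}}(0,q(x))\le\sum_{j=1}^m d_{\mathbb{B}}(z'_j,z''_j).
\end{equation*}
Taking the infimum over all chains yields $d_{\mathbb{B}}(0,q(x))\le d^K_{\mathbb{D}}(\mathbf{0},x)$.

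The main obstacle is the lower bound, specifically producing the norming functional $\ell$. This needs the complex version of Hahn--Banach for a seminorm: extend the real part first, then recover the complex functional via $\ell(y)=\mathrm{Re}\,\ell(y)-i\,\mathrm{Re}\,\ell(iy)$, and check $|\ell|\le q$ using the rotation invariance $q(e^{i\theta}y)=q(y)$. The Schwarz--Pick contraction and the triangle inequality on $\mathbb{B}$ are standard, and the upper bound is immediate.
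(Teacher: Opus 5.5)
The paper does not prove this statement itself: it quotes it from Franzoni--Vesentini~\cite{HolMapsInvDist} and uses it as a black box in the proof of Proposition~\ref{prop:kobayashi_geodesics}, so your proposal supplies an argument the paper omits. Your proof is correct and follows the standard route behind that citation. The linear disc $\tau$ gives the upper bound. For the lower bound, a Hahn--Banach functional with $|\ell|\le q$ and $\ell(x)=q(x)$, combined with Schwarz--Pick along chains, does the job, and it also shows that the Carath\'eodory pseudodistance from $\mathbf{0}$ to $x$ equals $d_{\mathbb{B}}(0,q(x))$. The only slip is in the case $q(x)=0$, where you need $R>1$ rather than $R>0$ so that $1/R\in\mathbb{B}$.
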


For domains considered in our paper, we can assume that $0_{\mathbb{D}} = \mathbf{0}$. We rely on the Kobayashi pseudodistance in Eq.~(\ref{eq:geodesics_equation}) to construct almost geodesics. Using the result in Theorem~\ref{theorem:arbitrary_domain_unit_disk_distance_relation}, Eq.~(\ref{eq:geodesics_equation}) now becomes
\begin{equation*}
d_{\mathbb{B}}(0, q(\alpha_{x,y}(t) \otimes \phi_x(y))) = t d_{\mathbb{B}}(0, q(\phi_x(y))). 
\end{equation*}

We thus have
\begin{equation*}
\frac{1}{2} \log \frac{1 + |q(\alpha_{x,y}(t) \otimes \phi_x(y))|}{1 - |q(\alpha_{x,y}(t) \otimes \phi_x(y))|} = t \frac{1}{2} \log \frac{1 + |q(\phi_x(y))|}{1 - |q(\phi_x(y))|}.
\end{equation*}

Since $q(x) \in [0,+\infty)$ for all $x \in \mathbb{E}$,
\begin{equation*}
\frac{1}{2} \log \frac{1 + q(\alpha_{x,y}(t) \otimes \phi_x(y))}{1 - q(\alpha_{x,y}(t) \otimes \phi_x(y))} = t \frac{1}{2} \log \frac{1 + q(\phi_x(y))}{1 - q(\phi_x(y))},
\end{equation*}
which results in
\begin{equation*}
\frac{1 + q(\alpha_{x,y}(t) \otimes \phi_x(y))}{1 - q(\alpha_{x,y}(t) \otimes \phi_x(y))} = \left ( \frac{1 + q(\phi_x(y))}{1 - q(\phi_x(y))} \right )^t. 
\end{equation*}

Solving the above equation leads to
\begin{equation*}
q(\alpha_{x,y}(t) \otimes \phi_x(y)) = \frac{ \left (1+q(\phi_x(y)) \right )^t - \left (1-q(\phi_x(y)) \right )^t }{ \left (1+q(\phi_x(y)) \right )^t + \left (1-q(\phi_x(y)) \right )^t }. 
\end{equation*}

By the definition of the scalar multiplication $\otimes$ and the assumption on the automorphism,
\begin{align*}
\begin{split}
\alpha_{x,y}(t) \otimes \phi_x(y) &= \phi^{(-1)}_{0_{\mathbb{D}}} (\alpha_{x,y}(t) \phi_{0_{\mathbb{D}}}(\phi_x(y))) \\ &= \phi^{(-1)}_{0_{\mathbb{D}}} ( \phi_{0_{\mathbb{D}}}( \alpha_{x,y}(t) \phi_x(y))) \\ &= \alpha_{x,y}(t) \phi_x(y). 
\end{split}
\end{align*}

Therefore
\begin{equation*}
q(\alpha_{x,y}(t) \phi_x(y)) = \frac{ \left (1+q(\phi_x(y)) \right )^t - \left (1-q(\phi_x(y)) \right )^t }{ \left (1+q(\phi_x(y)) \right )^t + \left (1-q(\phi_x(y)) \right )^t }. 
\end{equation*}

By axiom (2) in Definition~\ref{def:normed_spaces} and the fact that $\alpha_{x,y}(t) \ge 0$ for $t \in [0,1]$, we have
\begin{equation*}
\alpha_{x,y}(t) q(\phi_x(y)) = \frac{ \left (1+q(\phi_x(y)) \right )^t - \left (1-q(\phi_x(y)) \right )^t }{ \left (1+q(\phi_x(y)) \right )^t + \left (1-q(\phi_x(y)) \right )^t },
\end{equation*}
which implies that
\begin{equation*}
\alpha_{x,y}(t) = \frac{1}{q(\phi_x(y))} \frac{ \left (1+q(\phi_x(y)) \right )^t - \left (1-q(\phi_x(y)) \right )^t }{ \left (1+q(\phi_x(y)) \right )^t + \left (1-q(\phi_x(y)) \right )^t }.
\end{equation*}

One can verify that $\alpha_{x,y}(t) \in [0,1]$ for $t \in [0,1]$. This completes the proof of the proposition. 

\end{proof}

\subsection{Proof of Proposition~\ref{prop:hermitian_distance}}
\label{sec:appx_kahler_distance_hermitian_formulae}

We first prove the following results.

\begin{lemma}\label{eq:a_identities}
Let $x \in \mathbb{SD}_n$. Then
\begin{equation*}
x = (I - xx^H)^{-\frac{1}{2}} x (I - x^Hx)^{\frac{1}{2}}.
\end{equation*}
\begin{equation*}
(I - xx^H)^{-1}x = x(I - x^Hx)^{-1}
\end{equation*}
\begin{equation*}
(I - xx^H)^{-1} - x(I - x^Hx)^{-1}x^H = I
\end{equation*}
\end{lemma}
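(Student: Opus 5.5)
The plan is to use the singular value decomposition of $x$ and functional calculus, which handles all three identities at once. The preliminary fact is that for $x \in \mathbb{SD}_n$ we have $\|x\|_2<1$. Hence $I-xx^H$ and $I-x^Hx$ are HPD, and their real powers $(\cdot)^{\pm\frac12}$ and $(\cdot)^{-1}$ are well defined through the spectral theorem.

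For the second identity I will start from the trivial intertwining relation $x(I-x^Hx) = x - xx^Hx = (I-xx^H)x$. Multiplying on the left by $(I-xx^H)^{-1}$ and on the right by $(I-x^Hx)^{-1}$ gives $(I-xx^H)^{-1}x = x(I-x^Hx)^{-1}$ directly.

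The first identity needs the same intertwining for arbitrary continuous functions. From $(I-xx^H)x = x(I-x^Hx)$ one gets by induction $(I-xx^H)^k x = x(I-x^Hx)^k$ for every $k$, and hence $p(I-xx^H)\,x = x\,p(I-x^Hx)$ for every polynomial $p$.
\begin{itemize}
\item The matrices $I-xx^H$ and $I-x^Hx$ share the same spectrum $\{1-\sigma_j^2\}$, counted with multiplicity; the $\sigma_j$ are the singular values of $x$, and we are working with square matrices.
\item Choose a polynomial $p$ that agrees with $f(s)=s^{1/2}$ on this finite set. Then $f(I-xx^H)=p(I-xx^H)$ and $f(I-x^Hx)=p(I-x^Hx)$.
\item This gives $(I-xx^H)^{\frac12}x = x(I-x^Hx)^{\frac12}$, and multiplying on the left by $(I-xx^H)^{-\frac12}$ yields the first identity.
\end{itemize}
Alternatively, write $x = u\Sigma v^H$. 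Then $(I-xx^H)^{s}=u(I-\Sigma^2)^s u^H$ and $(I-x^Hx)^s = v(I-\Sigma^2)^s v^H$, and the identity reduces to the fact that diagonal matrices commute. Symmetry of $x$ is not needed for this identity.

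For the third identity I will use the second to rewrite $x(I-x^Hx)^{-1}x^H = (I-xx^H)^{-1}xx^H$. The left-hand side then becomes $(I-xx^H)^{-1}(I - xx^H) = I$.

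The main obstacle is only the rigorous justification of the square-root intertwining in the first identity. I will handle it by the polynomial-interpolation argument, or equivalently by the SVD computation, making sure that both Hermitian matrices have spectrum in $(0,1]$ so that the same interpolating polynomial serves for both.
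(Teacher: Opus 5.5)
Your proposal is correct and follows the paper's proof. The paper proves the first identity by the SVD computation you give as your alternative, the second from the intertwining relation $x(I-x^Hx)=(I-xx^H)x$, and the third by substituting that same relation, as you do. Your polynomial-interpolation argument is a valid variant that obtains $(I-xx^H)^{\frac{1}{2}}x = x(I-x^Hx)^{\frac{1}{2}}$ from the relation used for the second identity, rather than from an explicit decomposition.
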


\begin{proof}

Let $x = udv$, where $d$ is a diagonal matrix with real diagonal entries, and $u$ and $v$ are unitary matrices. Then
\begin{align*}
\begin{split}
(I - xx^H)^{-\frac{1}{2}} x (I - x^Hx)^{\frac{1}{2}} &= u (I - d^2)^{-\frac{1}{2}} u^{-1} udv v^{-1} (I - d^2)^{\frac{1}{2}} v \\ &= udv \\ &= x,
\end{split}
\end{align*}
which proves the first identity. 
Also, note that
\begin{align}\label{eq:key_identity}
\begin{split}
x(I - x^Hx) &= x - xx^Hx \\ &= (I - xx^H)x,
\end{split} 
\end{align}
or, equivalently, 
\begin{equation*}
(I - xx^H)^{-1}x = x(I - x^Hx)^{-1},
\end{equation*}
which proves the second identity. By Eq.~(\ref{eq:key_identity}),
\begin{equation*}
x = (I - xx^H)x(I - x^Hx)^{-1},
\end{equation*}

Therefore
\begin{equation*}
I - xx^H = I - (I - xx^H)x(I - x^Hx)^{-1}x^H,
\end{equation*}
which results in
\begin{equation*}
I = (I - xx^H)^{-1} - x(I - x^Hx)^{-1}x^H.
\end{equation*}


\end{proof}

We also need the following lemma.

\begin{lemma}\label{lemma:identity_for_new_distance_formula}
Let $x,y \in \mathbb{SD}_n$. Then
\begin{align*}
\begin{split}
I - \phi_x(y) \phi_x(y)^H = (I - xx^H)^{\frac{1}{2}} (I - yx^H)^{-1} (I - yy^H)  (I - xy^H)^{-1} (I - xx^H)^{\frac{1}{2}}. 
\end{split}
\end{align*}
\end{lemma}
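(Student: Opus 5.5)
The plan is to write $a = (I-xx^H)^{-\frac{1}{2}}$ and $b = (I-x^Hx)^{\frac{1}{2}}$, so that $\phi_x(y) = a(y-x)(I-x^Hy)^{-1}b$ by Eq.~(\ref{eq:siegel_disk_automorphism}). Then
\begin{equation*}
\phi_x(y)\phi_x(y)^H = a(y-x)(I-x^Hy)^{-1}(I-x^Hx)(I-y^Hx)^{-1}(y^H-x^H)a .
\end{equation*}
Conjugating the desired identity by $a$ (i.e.\ multiplying on the left and right by $a^{-1}=(I-xx^H)^{\frac{1}{2}}$ after inverting), the claim becomes equivalent to the identity
\begin{equation*}
(I-xx^H) - (y-x)(I-x^Hy)^{-1}(I-x^Hx)(I-y^Hx)^{-1}(y^H-x^H) = (I-xx^H)(I-yx^H)^{-1}(I-yy^H)(I-xy^H)^{-1}(I-xx^H).
\end{equation*}
Here we use $a^{-2}=I-xx^H$ on the left-hand side.

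The first key step moves the factors $(I-x^Hy)^{-1}$ to the other side. By the push-through identity $(y-x)(I-x^Hy)^{-1} = (I-yx^H)^{-1}(\dots)$, suitably adapted, I expect to use the second identity of Lemma~\ref{eq:a_identities} in the form $(I-xx^H)^{-1}x = x(I-x^Hx)^{-1}$. Together with the elementary relation
\begin{equation*}
(y-x)(I-x^Hx)^{-1}\ \text{vs.}\ (I-xx^H)^{-1}(y-x),
\end{equation*}
the aim is to rewrite $(y-x)(I-x^Hy)^{-1}$ as $(I-xx^H)(I-yx^H)^{-1}M$ for an explicit $M$. Concretely, I would verify the intertwining relation
\begin{equation*}
(I-yx^H)(y-x) = (I-xx^H)(y-x)\,\ldots
\end{equation*}
by direct expansion. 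It may be cleaner to instead multiply both sides on the left by $(I-yx^H)(I-xx^H)^{-1}$ and on the right by its conjugate transpose $(I-xx^H)^{-1}(I-xy^H)$. This reduces the right-hand side to $I-yy^H$, leaving a polynomial-type identity in $x,y,x^H,y^H$ with only the inverses $(I-x^Hy)^{-1}$ and $(I-x^Hx)^{-1}$ remaining.

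To finish, I would apply the third identity of Lemma~\ref{eq:a_identities}, $(I-xx^H)^{-1} = I + x(I-x^Hx)^{-1}x^H$, to expand $(I-yx^H)(I-xx^H)^{-1}(y-x)$. Using $(I-xx^H)^{-1}x = x(I-x^Hx)^{-1}$, this should collapse to $(y-x)+(\ldots)(I-x^Hx)^{-1}(x^Hy-\ldots)$. The target is to show
\begin{equation*}
(I-yx^H)(I-xx^H)^{-1}(y-x)(I-x^Hy)^{-1}(I-x^Hx)^{\frac{1}{2}} = (y-x)\text{-type factor } N,
\end{equation*}
with $NN^H$ computable. Equivalently, the identity should reduce to
\begin{equation*}
(I-yx^H)(I-xx^H)^{-1}(I-xy^H) - NN^H = I-yy^H,
\end{equation*}
which I would verify by expanding both sides. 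The main obstacle is this bookkeeping: finding the right grouping so that the inverses $(I-x^Hy)^{-1}$ cancel cleanly. My first attempt would be to check the scalar case $n=1$, where it reads $1-|\phi_x(y)|^2 = \frac{(1-|x|^2)(1-|y|^2)}{|1-x\bar y|^2}$. I would then mimic that computation, namely $|1-\bar x y|^2-|y-x|^2=(1-|x|^2)(1-|y|^2)$, in matrix form. The matrix analogue is $(I-y^Hx)(I-x^Hy)-(y^H-x^H)(y-x)$ together with the symmetry $x^T=x$, $y^T=y$, which gives $(I-x^Hy)^T=I-yx^H$ up to conjugation and links the two kinds of factors.
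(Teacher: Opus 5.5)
Your reductions are valid equivalences, since every factor you multiply by is invertible. The target you reach is also true: $(I-yx^H)(I-xx^H)^{-1}(I-xy^H)-NN^H=I-yy^H$ with $N=(I-yx^H)(I-xx^H)^{-1}(y-x)(I-x^Hy)^{-1}(I-x^Hx)^{\frac{1}{2}}$. The gap is that the step you flag as ``the main obstacle'' is the only nontrivial one, and you never resolve it. $NN^H$ still contains the sandwich $(I-x^Hy)^{-1}(I-x^Hx)(I-y^Hx)^{-1}$. Term-by-term expansion cannot remove it, because the inverse has to be moved past $y-x$. The explicit $M$ you are looking for is $M=(y-x)(I-x^Hx)^{-1}$; that is, you need
\begin{equation*}
(I-yx^H)(I-xx^H)^{-1}(y-x)=(y-x)(I-x^Hx)^{-1}(I-x^Hy).
\end{equation*}
This gives $N=(y-x)(I-x^Hx)^{-\frac{1}{2}}$ and hence $NN^H=(y-x)(I-x^Hx)^{-1}(y^H-x^H)$. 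To prove it, expand both sides and use three consequences of Lemma~\ref{eq:a_identities}: $(I-xx^H)^{-1}x=x(I-x^Hx)^{-1}$, $x^H(I-xx^H)^{-1}=(I-x^Hx)^{-1}x^H$, and $x^H(I-xx^H)^{-1}x=(I-x^Hx)^{-1}-I$. The difference of the two sides collapses to $\bigl((I-xx^H)^{-1}-x(I-x^Hx)^{-1}x^H\bigr)y-y$, which is zero by the third identity of that lemma. This route needs no symmetry of $x,y$.

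Alternatively, transpose Eq.~(\ref{eq:siegel_disk_automorphism}) using $x^T=x$, $y^T=y$. Here $(I-x^Hy)^T=I-yx^H$ and $(I-x^Hx)^T=I-xx^H$ hold exactly, not ``up to conjugation''. Combined with $\phi_x(y)^T=\phi_x(y)$ (true because $\phi_x$ maps $\mathbb{SD}_n$ to itself), this gives $\phi_x(y)=(I-xx^H)^{\frac{1}{2}}(I-yx^H)^{-1}(y-x)(I-x^Hx)^{-\frac{1}{2}}$. This is the form the paper uses, without comment, in the first line of its proof.

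Once $N$ is simplified, your remaining identity is exactly the paper's $A=I-yy^H$, where $A=(I-yx^H)(I-xx^H)^{-1}(I-xy^H)-(y-x)(I-x^Hx)^{-1}(y^H-x^H)$. The paper obtains it by expanding and applying the identities of Lemma~\ref{eq:a_identities} and their adjoint versions. So the completed argument is the paper's: it simply factors $(I-xx^H)^{\frac{1}{2}}(I-yx^H)^{-1}$ out on the left and its adjoint on the right, instead of conjugating and multiplying.

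One caution about your scalar heuristic. The matrix expression $(I-y^Hx)(I-x^Hy)-(y^H-x^H)(y-x)$ equals $I-x^Hx-y^Hy+y^Hxx^Hy$, which is not $(I-x^Hx)(I-y^Hy)$ in general. The correct matrix analogue of $|1-\bar{x}y|^2-|y-x|^2=(1-|x|^2)(1-|y|^2)$ is the weighted identity $A=I-yy^H$ itself, and the weights $(I-xx^H)^{-1}$ and $(I-x^Hx)^{-1}$ are essential.
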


\begin{proof}

Let z = $\phi_x(y)$. Then
\begin{align*}
\begin{split}
I - zz^H = (I - xx^H)^{\frac{1}{2}} (I -xx^H)^{-1} (I - xx^H)^{\frac{1}{2}} - (I - xx^H)^{\frac{1}{2}} (I - yx^H)^{-1} (y - x) & (I - x^Hx)^{-1} (y^H - x^H) \\& (I - xy^H)^{-1} (I - xx^H)^{\frac{1}{2}}.
\end{split}
\end{align*}

Therefore
\begin{align}\label{eq:prop61_main}
\begin{split}
I - zz^H = (I - xx^H)^{\frac{1}{2}} (I - yx^H)^{-1} A (I - xy^H)^{-1} (I - xx^H)^{\frac{1}{2}},
\end{split}
\end{align}
where $A = (I - yx^H) (I - xx^H)^{-1} (I - xy^H) - (y - x) (I - x^Hx)^{-1} (y^H - x^H)$. Note that
\begin{align*}
\begin{split}
(I - yx^H) (I - xx^H)^{-1} (I - xy^H) = (I - xx^H)^{-1} - (I - xx^H)^{-1} xy^H - yx^H (I - xx^H)^{-1} + yx^H (I - xx^H)^{-1} xy^H,
\end{split}
\end{align*}
\begin{align*}
\begin{split}
(y - x) (I - x^Hx)^{-1} (y^H - x^H) = y (I - x^Hx)^{-1} y^H - y (I - x^Hx)^{-1} x^H - x (I - x^Hx)^{-1} y^H + x (I - x^Hx)^{-1} x^H.
\end{split}
\end{align*}

Therefore
\begin{align*}
\begin{split}
A = & \left ( (I - xx^H)^{-1} - x (I - x^Hx)^{-1} x^H \right ) - \left ( (I - xx^H)^{-1} x - x (I - x^Hx)^{-1} \right ) y^H \\& - y \left ( x^H (I - xx^H)^{-1} - (I -x^Hx)^{-1} x^H \right ) - y \left ( (I - x^Hx)^{-1} - x^H (I - xx^H)^{-1} x \right ) y^H. 
\end{split}
\end{align*}

By Lemma~\ref{eq:a_identities},
\begin{equation*}
(I - xx^H)^{-1} - x (I - x^Hx)^{-1} x^H = I,
\end{equation*}
\begin{equation*}
(I - xx^H)^{-1} x - x (I - x^Hx)^{-1} = \mathbf{0}_n,
\end{equation*}
\begin{equation*}
x^H (I - xx^H)^{-1} - (I -x^Hx)^{-1} x^H = \mathbf{0}_n,
\end{equation*}
\begin{equation*}
(I - x^Hx)^{-1} - x^H (I - xx^H)^{-1} x = I.
\end{equation*}

Therefore
\begin{equation}\label{eq:prop61_supp}
A = I - yy^H.
\end{equation}

Combining Eqs.~(\ref{eq:prop61_main}) and~(\ref{eq:prop61_supp}) leads to the conclusion of Lemma~\ref{lemma:identity_for_new_distance_formula}. 

\end{proof}

\begin{proof}

The distance $d_{\mathbb{SD}_n}(\cdot,\cdot)$ associated with the K\"{a}hler metric in the Siegel disk is given (up to scaling) by
\begin{equation*}
d^2_{\mathbb{SD}_n}(x,y) = \operatorname{Tr} \left ( \log^2 \left ( (I + c^{\frac{1}{2}})(I - c^{\frac{1}{2}})^{-1} \right ) \right ),
\end{equation*}
where $x,y \in \mathbb{SD}_n$ and $c$ is computed as
\begin{equation*}
c = (y - x)(I - x^Hy)^{-1}(y^H - x^H)(I - xy^H)^{-1}. 
\end{equation*}

Since the distance $d_{\mathbb{SD}_n}(\cdot,\cdot)$ is invariant w.r.t. the automorphism $\phi_x(\cdot)$, we have
\begin{align*}
\begin{split}
d^2_{\mathbb{SD}_n}(x,y) &= d^2_{\mathbb{SD}_n}(\mathbf{0}_n,\phi_x(y)) \\ &= \operatorname{Tr} \left ( \log^2 \left ( (I + \bar{c}^{\frac{1}{2}})(I - \bar{c}^{\frac{1}{2}})^{-1} \right ) \right ),
\end{split}
\end{align*}
where $\bar{c}$ is computed as
\begin{equation*}
\bar{c} = \left ( \phi_x(y) - \mathbf{0}_n \right ) \left ( I - \mathbf{0}_n^H \phi_x(y) \right )^{-1} \left ( \phi_x(y)^H - \mathbf{0}_n^H \right ) \left ( I - \mathbf{0}_n \phi_x(y)^H \right )^{-1}.
\end{equation*}

By Lemma~\ref{lemma:identity_for_new_distance_formula}, we obtain the two formulae in Proposition~\ref{prop:hermitian_distance}. 

\end{proof}

\subsection{Proof of Proposition~\ref{prop:inverse_automorphism}}
\label{sec:appx_inverse_automorphism}

\begin{proof}

Given
\begin{equation*}
z = (I - xx^H)^{-\frac{1}{2}} (y-x) (I - x^Hy)^{-1} (I - x^Hx)^{\frac{1}{2}}.
\end{equation*}

We need to prove that
\begin{equation*}
y = (I - xx^H)^{\frac{1}{2}} (I + zx^H)^{-1} (z+x) (I - x^Hx)^{-\frac{1}{2}}.
\end{equation*}

Let $\tilde{z} = (I - xx^H)^{\frac{1}{2}} z (I - x^Hx)^{-\frac{1}{2}}$, then
\begin{equation*}
\tilde{z} = (y-x)(I -x^Hy)^{-1}.
\end{equation*}

Therefore
\begin{equation*}
y = (I + \tilde{z}x^H)^{-1} (\tilde{z} + x). 
\end{equation*}

Note that
\begin{align}\label{eq:i_plus_tilde_z_times_ah}
\begin{split}
(I + \tilde{z}x^H)^{-1} &= \left ( I + (I - xx^H)^{\frac{1}{2}} z (I - x^Hx)^{-\frac{1}{2}} x^H \right )^{-1} \\ &= \left ( I + (I - xx^H)^{\frac{1}{2}} z x^H (I - xx^H)^{-\frac{1}{2}} \right )^{-1} \\ &= \left ( (I - xx^H)^{\frac{1}{2}} (I + zx^H) (I - xx^H)^{-\frac{1}{2}} \right )^{-1} \\ &= (I - xx^H)^{\frac{1}{2}} (I + zx^H)^{-1} (I - xx^H)^{-\frac{1}{2}},
\end{split}
\end{align}
where the second equation follows from Lemma~\ref{eq:a_identities}. 

Note also that
\begin{align}\label{eq:tilde_z_plus_a}
\begin{split}
\tilde{z} + x &= (I - xx^H)^{\frac{1}{2}} z (I - x^Hx)^{-\frac{1}{2}} + x \\ &= (I - xx^H)^{\frac{1}{2}} z (I - x^Hx)^{-\frac{1}{2}} + (I - xx^H)^{\frac{1}{2}} x (I - x^Hx)^{-\frac{1}{2}} \\ &= (I - xx^H)^{\frac{1}{2}} (z+x) (I - x^Hx)^{-\frac{1}{2}},
\end{split}
\end{align}
where the second equation follows from Lemma~\ref{eq:a_identities}. 

Combining Eqs.~(\ref{eq:i_plus_tilde_z_times_ah}) and~(\ref{eq:tilde_z_plus_a}), one has
\begin{equation*}
y = (I - xx^H)^{\frac{1}{2}} (I + zx^H)^{-1} (z+x) (I - x^Hx)^{-\frac{1}{2}}.
\end{equation*}

Since $y = y^T, z = z^T,$ and $x = x^T$, one has
\begin{align*}
\begin{split}
y &= (I - xx^H)^{-\frac{1}{2}} (z+x) (I + x^Hz)^{-1} (I - x^Hx)^{\frac{1}{2}} \\ &= \phi_{-x}(z).
\end{split}
\end{align*}

This completes the proof of the proposition. 

\end{proof}

\subsection{Proof of Corollary~\ref{corollary:siegel_disk_geodesic}}
\label{sec:appx_siegel_disk_almost_geodesics}

\begin{proof}

The automorphism $\phi_x(\cdot)$ in Eq.~(\ref{eq:siegel_disk_automorphism}) satisfies the property that $\phi_{0_{\mathbb{D}}}(tx) = tx = t\phi_{0_{\mathbb{D}}}(x)$ for any $x \in \mathbb{SD}_n$ and $t \in [0,1]$. Thus Corollary~\ref{corollary:siegel_disk_geodesic} is a consequence of Proposition~\ref{prop:kobayashi_geodesics} when $q(x) = \| x \|_2$. 

\end{proof}

\subsection{Proof of Corollary~\ref{corollary:complex_unit_ball_geodesic}}
\label{sec:appx_siegel_disk_almost_geodesics}

\begin{proof}

The automorphism $\phi_x(\cdot)$ in Eq.~(\ref{eq:complex_ball_automorphism}) satisfies the property that $\phi_{0_{\mathbb{D}}}(tx) = tx = t\phi_{0_{\mathbb{D}}}(x)$ for any $x \in \mathbb{B}_n$ and $t \in [0,1]$. Thus Corollary~\ref{corollary:complex_unit_ball_geodesic} is a consequence of Proposition~\ref{prop:kobayashi_geodesics} when $q(x) = |x|$. 

\end{proof}



\end{document}